\documentclass[10pt]{article} 
\usepackage[preprint]{tmlr}

\usepackage[
    hypertexnames=false,
    hidelinks
    ]{hyperref}
\usepackage{url}

\usepackage{amsmath}
\usepackage{amssymb}
\usepackage{mathtools}
\mathtoolsset{showonlyrefs,showmanualtags} 
\usepackage{amsthm}
\usepackage{bbm}
\usepackage{tikz}
\usetikzlibrary{arrows.meta, positioning, calc}
\usepackage{tikz-cd}

\usepackage{thmtools}
\usepackage{thm-restate}

\usepackage{zref-clever} 

\declaretheorem[name=Theorem, numberwithin=section]{theorem}
\declaretheorem[name=Proposition, sibling=theorem]{proposition}
\declaretheorem[name=Lemma, sibling=theorem]{lemma}
\declaretheorem[name=Corollary, sibling=theorem]{corollary}
\declaretheorem[name=Definition, sibling=theorem, style=plain]{definition}
\declaretheorem[name=Assumption, sibling=theorem, style=plain]{assumption}
\declaretheorem[name=Remark, sibling=theorem, style=remark]{remark}

\usepackage[textsize=tiny]{todonotes}
\usepackage{enumitem}

\newcommand{\E}{\mathbb{E}}
\newcommand{\V}{\mathbb{V}}
\newcommand{\R}{\mathbb{R}}

\newcommand{\bN}{\mathbb{N}}

\newcommand{\bP}{\mathbb{P}}

\newcommand{\cA}{\mathcal{A}}
\newcommand{\cB}{\mathcal{B}}

\newcommand{\cD}{\mathcal{D}}

\newcommand{\cF}{\mathcal{F}}
\newcommand{\cG}{\mathcal{G}}

\newcommand{\cN}{\mathcal{N}}
\newcommand{\cO}{\mathcal{O}}
\newcommand{\cP}{\mathcal{P}}

\newcommand{\cR}{\mathcal{R}}
\newcommand{\cS}{\mathcal{S}}
\newcommand{\cT}{\mathcal{T}}
\newcommand{\cU}{\mathcal{U}}

\newcommand{\cX}{\mathcal{X}}
\newcommand{\cY}{\mathcal{Y}}
\newcommand{\cZ}{\mathcal{Z}}

\newcommand{\rbar}{\bar r}

\newcommand{\thetahat}{\hat{\theta}}

\newcommand{\thetaE}{\theta^{\mathsf{E}}}
\newcommand{\SigmaE}{\Sigma^{\mathsf{E}}}
\newcommand{\thetastar}{\theta_{\star}}
\newcommand{\Hstar}{\mathsf{H}_{\star}}
\newcommand{\dstar}{d_{\star}}
\newcommand{\lambdastar}{\lambda_{\star}}
\newcommand{\phihat}{\hat{\phi}}
\newcommand{\rtest}{r^{\mathsf{test}}}

\DeclareMathOperator*{\argmin}{arg\,min}
\DeclareMathOperator*{\esssup}{ess\,sup}

\newcommand{\ind}{\mathbbm{1}}

\DeclareMathOperator{\tr}{tr}

\DeclareMathOperator{\ima}{im}
\newcommand{\eqdef}{=\vcentcolon}
\newcommand{\defeq}{\vcentcolon=}

\newcommand{\diff}{\mathop{}\!\mathrm{d}}

\usepackage{accents}

\newcommand{\DKL}{D_{\mathsf{KL}}}

\newcommand{\DHel}{D_{\mathsf{H}}}

\newcommand{\Nlog}{N_{\mathsf{log}}}

\DeclarePairedDelimiter{\bc}{\{}{\}}      
\DeclarePairedDelimiter{\br}{(}{)}         
\DeclarePairedDelimiter{\bs}{[}{]}        
\DeclarePairedDelimiter{\abs}{\lvert}{\rvert} 
\DeclarePairedDelimiter{\norm}{\lVert}{\rVert}
\DeclarePairedDelimiter{\ip}{\langle}{\rangle}

\newcommand{\lambdamin}{\lambda_{\mathsf{min}}}

\newcommand{\cSA}{\cS\times\cA}

\newcommand{\muhat}{\hat{\mu}}
\newcommand{\piE}{\pi^{\mathsf{E}}}

\newcommand{\pihat}{\hat\pi}
\newcommand{\rhat}{\hat r}

\newcommand{\pistar}{\pi^{\star}}
\newcommand{\pstar}{p^{\star}}
\newcommand{\ptraj}{p_{\mathsf{traj}}}
\newcommand{\lambdatraj}{\lambda_{\mathsf{traj}}}
\newcommand{\Jstar}{J^{\star}}
\newcommand{\Vstar}{V^{\star}}
\newcommand{\Qstar}{Q^{\star}}

\newcommand{\LIRL}{L^{\mathsf{MM}}}

\newcommand{\LMLE}{L^{\mathsf{MLE}}}

\newcommand{\LhatIRL}{\hat{L}^{\mathsf{MM}}}

\newcommand{\LhatMLE}{\hat{L}^{\mathsf{MLE}}}

\newcommand{\Lhat}{\hat{L}}

\newcommand{\Ehat}{\hat{\E}}

\newcommand{\gap}{\operatorname{Gap}}

\newcommand{\DE}{\cD^{\mathsf{E}}}

\newcommand{\proj}{\operatorname{Proj}}

\newcommand{\varepsilonopt}{\varepsilon_{\mathsf{opt}}}

\newcommand{\ie}{\textnormal{i.e.}}
\newcommand{\eg}{\textnormal{e.g.}}
\newcommand{\cf}{\textnormal{cf.}}

\title{Fast Rates for Inverse Reinforcement Learning}

\author{\name Andreas Schlaginhaufen \email andreas.schlaginhaufen@epfl.ch \\
      \addr \addr Sycamore, EPFL
      \AND
      \name Maryam Kamgarpour \email maryam.kamgarpour@epfl.ch \\
      \addr Sycamore, EPFL}

\def\month{MM}  
\def\year{YYYY} 
\def\openreview{\url{https://openreview.net/forum?id=XXXX}} 

\begin{document}

\maketitle
\lhead{}

\begin{abstract}
We establish novel structural and statistical results for
entropy-regularized min-max inverse reinforcement learning
(Min-Max-IRL) in finite-horizon MDPs with
Borel state and action spaces. We show that
maximum likelihood estimation (MLE) and Min-Max-IRL are equivalent at
the population level, and at the empirical level under deterministic
dynamics. For linear reward classes, we leverage pseudo-self-concordance
of the Min-Max-IRL loss to prove that both the excess trajectory-level KL
divergence and the squared parameter error in the Hessian norm decay
at the fast rate $\cO(n^{-1})$, where $n$ is the number of expert
trajectories. A local minimax lower bound matches the parameter-error
rate up to logarithmic factors in the well-specified deterministic
setting. Our guarantees apply under misspecification and require no
uniform state-coverage assumption. We further extend reward-identifiability
results to general Borel spaces and compare our results with MLE-based guarantees.
\end{abstract}

\addtocontents{toc}{\protect\setcounter{tocdepth}{-1}}

\section{Introduction}

In many sequential decision-making problems, including robotic manipulation or language model alignment, the reward
function is unknown, difficult to specify, or too sparse to be
practical, and it is easier to demonstrate desired behavior. This motivates the imitation learning setting, where the goal is to infer a policy from a dataset of expert demonstrations. The most direct approach to imitation learning is behavioral cloning~(BC), which fits a policy
to the data by supervised learning without requiring access to
the environment. Inverse reinforcement learning~(IRL) instead recovers a reward that induces an optimal policy that imitates the expert's behavior within the MDP. By
parameterizing rewards rather than policies, IRL can encode
structural priors, such as linearity in
features, sparsity, or stability
\citep{kalman1964linear}.

As expert demonstrations are costly, a central question in imitation
learning is how many expert trajectories, $n$, are required to imitate the
expert well in a given metric. This is a
classical statistical estimation problem. For BC, the statistical
picture is fairly complete:
\citet{foster2024behavior} show that
maximum likelihood estimation (MLE) ensures that the trajectory-level squared
Hellinger distance between the expert and the imitation policy decays as
$\cO(n^{-1})$ for well-specified and sufficiently learnable policy classes, and
that this Hellinger guarantee in turn yields tight bounds on the performance gap under an unknown test reward.

Since IRL has access to the MDP while BC does not, one might expect stronger statistical guarantees. However, existing IRL results give slow
$\cO(n^{-1/2})$ rates with quadratic horizon dependence for the excess risk \citep{syed2007game, schlaginhaufen2023identifiability, zeng2023demonstrations}, require a uniform state-coverage assumption for reward parameter recovery \citep{schlaginhaufen2024towards}, and are largely confined to finite state-action
spaces. Moreover, entropy regularization, central to modern IRL algorithms
\citep{ziebart2010modeling, ho2016generative, garg2021iq} and to reward
identifiability \citep{cao2021identifiability}, is poorly understood from a statistical perspective.

In this paper, we close this gap by establishing improved statistical results for entropy-regularized IRL in general Borel state and action spaces. We establish a connection
between MLE and entropy-regularized min-max IRL, and show that for linear reward classes the latter achieves fast $\cO(n^{-1})$ rates for the excess
trajectory-level KL risk and parameter estimation. Our upper bounds apply under misspecification, require no uniform state-coverage assumption, and can yield horizon-independent rates in favorable settings. A matching local minimax lower bound shows that these rates cannot be further improved in the well-specified deterministic setting. Furthermore, we also extend existing results on reward identifiability from the tabular setting to general state and action spaces, and derive likelihood-ratio-based guarantees for MLE IRL and compare them with our min-max IRL bounds.

\paragraph{Contributions}
Our main contributions can be summarized as follows:
\begin{enumerate}[leftmargin=*]
    \item 
    In Section~\ref{sec:equivalences}, we show that MLE and
    entropy-regularized min-max IRL are equivalent at the population level
    (Theorem~\ref{thm:equivalences}). At the empirical level, the
    equivalence holds under deterministic dynamics, whereas under stochastic
    dynamics an additional correction term arises from the randomness of the
    state transitions. We also show that min-max IRL is a
    convex optimization problem, while MLE IRL is nonconvex in general. In
    Appendix~\ref{app:sec:identifiability_and_potential_shaping}, we further
    extend reward-identifiability results to Borel state and action spaces.

    \item 
    In Section~\ref{sec:statistical_guarantees}, we derive improved
    statistical guarantees for min-max IRL with linear reward classes in
    Borel state and action spaces. In particular, we establish
    pseudo-self-concordance \citep{ostrovskii2021finite} of the entropy-regularized min-max IRL
    loss to prove that both the excess trajectory-level KL risk and the
    squared parameter error in the Hessian norm decay as $\cO(n^{-1})$ in the
    number of expert demonstrations $n$ (Theorem~\ref{thm:irl_fast_rate}).
    Our results apply under misspecification, and we make no uniform state-coverage
    assumption.

    \item
    In Section~\ref{sec:lower_bound}, we prove a local minimax lower bound of
    order $\Omega(n^{-1})$ for the squared parameter error in the Hessian norm
    (Theorem~\ref{thm:irl_lower_bound}), matching the upper bound up to
    logarithmic factors in the well-specified deterministic setting.
    Furthermore, Section~\ref{sec:imitation_implications} compares our result
    with guarantees for MLE and discusses implications for the performance gap.
\end{enumerate}

\section{Problem Statement}\label{sec:problem_statement}

\subsection{Notation}
For a measurable space $\cX$, we use $\Delta_{\cX}$ to denote the set of probability measures on $\cX$. For $P\in\Delta_{\cX}$ and measurable $f:\cX\to\R$ and $g:\cX\to\R^d$, we write $\ip*{f,P}\defeq\int f \diff P=\E_{X\sim P}\bs{f(X)}$, whenever the integral is well defined, and $\operatorname{Cov}_{X\sim P}\bs{g(X)}\defeq\E_{X\sim P}\bs{\br{g(X)-\mu}\br{g(X)-\mu}^{\top}}$ with $\mu = \E_{X\sim P}\bs{g(X)}$, whenever $g$ is square-integrable. For two measures $P, Q $, we write $P\ll Q$ if $P$ is absolutely continuous with respect to $Q$, and $\diff P/\diff Q$ for the density of $P$ with respect to $Q$. For $x,y\in\R^d$, $\ip*{x,y}\defeq x^\top y$ denotes the standard inner product and $\norm*{x}\defeq\sqrt{\ip*{x,x}}$ the Euclidean norm. For symmetric matrices $A,B\in\R^{d\times d}$, we write
$A\succeq B$ and $A\succ B$ for the Loewner order, and
$I_d$ for the identity matrix in $\R^{d\times d}$.
For two nonnegative sequences $a_n,b_n$, we write $a_n\lesssim b_n$
if $a_n\leq C b_n$ for some constant $C>0$, and
$a_n\asymp b_n$ if $a_n\lesssim b_n$ and $b_n\lesssim a_n$.

\subsection{MDP Setup}
We consider a finite-horizon MDP defined by a tuple
$\br{T, \cS, \cA, (\bP_t)_{t=0}^{T-1}, r}$, where $T\in\bN$ is the horizon,
$\cS$ and $\cA$ are the (potentially infinite) state and action spaces,
$\bP_0 \in \Delta_{\cS}$ is the initial state distribution,
$\bP_t:\cSA\to\Delta_{\cS}$ is the transition law at time $t=1,\hdots,T-1$, and
$r = (r_t)_{t=1}^T$ is a sequence of reward functions $r_t:\cSA\to\R$.
A Markov policy is a sequence $\pi=(\pi_t)_{t=1}^T$ of stochastic kernels
$\pi_t:\cS\to\Delta_{\cA}$, and it induces a trajectory law $\bP^{\pi}$ via
$s_1\sim\bP_0$, $a_t\sim\pi_t(\;\cdot \mid s_t)$, and
$s_{t+1}\sim\bP_t(\;\cdot \mid s_t,a_t)$. We write $\E^{\pi}$ for the expectation under $\bP^{\pi}$, and for measurable $f:\cS\to\R$, we write
$\br{\bP_t f}(s,a)\defeq\E_{s'\sim\bP_t(\;\cdot \mid s,a)}\bs{f(s')}$ for the
expectation over the next state. The time-$t$ occupancy measure
$\mu_t^{\pi}\in\Delta_{\cSA}$ is the marginal law of $(s_t,a_t)$ under
$\bP^{\pi}$. We denote
$\ip{r,\mu^{\pi}}\defeq\sum_{t=1}^T\ip{r_t,\mu^{\pi}_t} = \sum_{t=1}^T \E^{\pi}\bs{r_t(s_t, a_t)}$. Throughout, $\cS$ and $\cA$ are Borel
subsets of complete separable metric spaces, equipped with their Borel
$\sigma$-fields, and all rewards, policies, and transition kernels are
measurable, and rewards are in addition bounded. For ease of exposition, we treat these regularity conditions as standing assumptions and defer measure-theoretic details to Appendix~\ref{app:sec:preliminaries}.

Given a finite reference measure\footnote{For example, the counting measure for a finite action space or the Lebesgue measure for a compact one.} $\lambda$ on $\cA$ and $\beta \geq 0$, we consider the entropy-regularized objective
\begin{equation}\label{eq:regularized_return}
    J^{\beta}(r,\pi) \defeq \ip*{r,\mu^{\pi}} + \beta\,H(\pi),
    \qquad
    H(\pi) \defeq \sum_{t=1}^T \E^{\pi}\bs{H(\pi_t(\;\cdot \mid s_t))},
\end{equation}
where $H:\Delta_{\cA}\to [-\infty, \infty]$ denotes the entropy defined by $H(P) = - \int \log \br{\diff P/ \diff \lambda} \diff P$ if $P \ll \lambda $ and $H(P) = - \infty$ otherwise. We write $V_{t, r}^{\pi, \beta}(s)=\sum_{k=t}^T \E^{\pi}\bs{r_k(s_k, a_k) + \beta H(\pi_k(\;\cdot \mid s_k)) \mid s_t =s}$ and $Q_{t, r}^{\pi, \beta}(s, a) = r_t(s,a) + \br{\bP_t V_{t+1, r}^{\pi, \beta} }(s,a)$ for the associated value and Q-functions, and $J^{\star, \beta}(r), V^{\star, \beta}_r$, $Q^{\star, \beta}_r$ for the corresponding optimal values. For $\beta>0$, there is a unique
Bellman-optimal policy $\pi^{\star,\beta}_r$ (Appendix~\ref{app:sec:optimality}),
which we call \emph{soft-optimal}. It admits densities $\br{p^{\star,\beta}_{t,r}}_{t=1}^T$ with respect to $\lambda$ given by the explicit
Gibbs form,
\begin{equation}\label{eq:soft_opt_density}
    \pi^{\star,\beta}_{t,r}(\diff a \mid s) = p^{\star,\beta}_{t,r}(a \mid s)\,\lambda(\diff a),
    \qquad
    p^{\star,\beta}_{t,r}(a \mid s) = \exp \br*{\beta^{-1}\br*{Q^{\star,\beta}_{t,r}(s,a) - V^{\star,\beta}_{t,r}(s)}}.
\end{equation}
Beyond uniqueness of the optimal policy, $\beta > 0$ also ensures that the values $J^{\star,\beta}(r), V^{\star,\beta}_r, Q^{\star,\beta}_r$ are differentiable in $r$
(Appendix~\ref{app:sec:der_optimal_value}). For readability, we henceforth fix
$\beta>0$ and drop the superscript $\beta$, reserving the superscript $0$ for
unregularized quantities, that is, we write for example $V^{\pi,0}_{t,r}$ or $Q_{t,r}^{\pi,0}$.

\subsection{Imitation Learning}
Given a dataset of expert demonstrations
\begin{equation}
    \DE = \bc{\tau^i}_{i=1}^n, \quad \tau^i = \br*{s_1^i,a_1^i, \hdots, s_T^i, a_T^i} \sim \bP^{\piE} \text{ i.i.d.},
\end{equation}
generated by an unknown expert policy $\piE$, the goal of imitation learning is to recover a policy that \textit{imitates} the expert. We consider two approaches. Behavioral cloning
(BC) fits a sequence of policy densities $p=(p_t)_{t=1}^T$ from a class $\cP$
directly to the data, where each $p\in\cP$ induces a policy
via $\pi_{t, p}(\diff a \mid s)=p_t(a \mid s)\,\lambda(\diff a)$. Inverse reinforcement
learning (IRL) instead recovers a reward $r$ from a class $\cR$ such that the
soft-optimal policy $\pistar_r$ imitates the expert. We write $\pihat$ for the policy
recovered from an estimate $\hat p$ or $\rhat$, and call the problem
\emph{well-specified} if the expert trajectory law $\bP^{\piE}$ is realizable, \ie, there is
$p\in\cP$ with $\bP^{\piE}=\bP^{\pi_p}$, or $r\in\cR$ with $\bP^{\piE}=\bP^{\pistar_r}$, and
\emph{misspecified} otherwise.

The quality of imitation can be measured in different ways. We next introduce the two imitation metrics that we will analyze in this paper.

\paragraph{Trajectory-Level Divergences}
A natural measure of imitation is a divergence between the trajectory distributions $\bP^{\piE}$ and $\bP^{\pihat}$ induced by the expert and recovered policy.  We consider the Kullback--Leibler (KL) divergence and the squared Hellinger distance,
\begin{equation}\label{eq:traj_divergences}
    \DKL(\bP^{\piE}, \bP^{\pihat}), \quad \text{and} \quad \DHel^2(\bP^{\piE},\bP^{\pihat}),
\end{equation}
defined as $\DKL(P, Q) \defeq \int \log\br*{p/q} \,\diff P$ and $\DHel^2(P,Q) \defeq \int \br*{\sqrt{p}-\sqrt{q}}^2 \,\diff m$, where $p$ and $q$ are densities of $P$ and $Q$ with respect to a common dominating measure $m$ (\eg, $m = \tfrac{1}{2} \br{P + Q}$).\footnote{Note that $\DHel^2(P,Q)\in[0,2], \DKL(P, Q)\in[0,\infty]$, and $\DHel^2(P,Q) \leq \DKL(P, Q)$. }

\paragraph{Reward Estimation Error}
In IRL, one may additionally ask whether the underlying reward can be identified. Given a reward parametrization $r_\theta$ with $\theta \in\Theta\subseteq \R^d$, we measure reward recovery through the weighted parameter error
\begin{equation}\label{eq:reward_estimation_error}
    \norm*{\thetastar-\thetahat}_A^2,
    \qquad
    \norm*{x}_A^2 \defeq x^\top A x,
\end{equation}
where $\thetastar$ is a reference parameter, usually the population risk minimizer introduced shortly, and $A\in\R^{d\times d}$ is a positive semidefinite weight matrix. As we will see later, a natural choice for $A$ is the Hessian of the IRL loss at $\thetastar$, as it captures the local curvature of the loss and therefore also the identifiability of parameter directions (see Corollary~\ref{cor:fisher_kernel_shaping}).

\subsection{Optimization Objectives}\label{sec:objectives}
Since we observe only the demonstrations $\DE$ and not the expert policy $\piE$ or an underlying reward parameter itself\footnote{In general, we do not
assume that there exists $\thetaE$ such that $\piE$ is optimal for
$r_{\thetaE}$.}, we cannot minimize the imitation metrics above
directly and turn to empirical risk minimization. Given a hypothesis space $\cZ$ and a loss function $\ell: \cZ \times \br*{\cSA}^T \to \R$, the corresponding
\emph{empirical} and \emph{population} risks are
\begin{equation}
    \hat{L}(z) \defeq \Ehat^{\piE}\bs*{\ell(z;\tau)},
    \qquad
    L(z) \defeq \E^{\piE}\bs*{\ell(z;\tau)}.
\end{equation}
Here, $\Ehat^{\piE}$ denotes the empirical expectation associated with $\DE$, defined as $\Ehat^{\piE}\bs*{f(\tau)} \defeq \tfrac{1}{n}\sum_{i=1}^n f(\tau^i)$ for $f:\br*{\cSA}^T \to \R$. Empirical risk minimization recovers a minimizer $\hat{z}$ of $\hat{L}$, while statistical guarantees typically bound the excess population risk,
\begin{equation}
    L(\hat{z}) - \inf_{z\in\cZ} L(z).
\end{equation}
In the following, we focus on two specific empirical risk minimization problems.

\paragraph{Min-Max-IRL}
In this setting, the hypothesis space $\cZ$ is a set of rewards $\cR$ and each reward induces a unique soft-optimal policy $\pistar_r$. Min-Max-IRL seeks a reward in $\cR$ under which the empirical
expert occupancy measure $\muhat^{\piE}_{t}(B) \defeq \Ehat^{\piE}\bs*{\ind\!\br*{(s_t,a_t)\in B}}$ is least suboptimal with respect to the regularized objective~\eqref{eq:regularized_return}:
\begin{equation}
    \min_{r\in\cR}\,\max_{\pi}\;
    \ip{r,\,\mu^{\pi}-\muhat^{\piE}} + \beta H(\pi).
\end{equation}
With the trajectory loss $\ell^{\mathsf{MM}}(r;\tau) \defeq
\Jstar(r) - \sum_{t=1}^T r_t(s_t,a_t)$, this can be rewritten equivalently as empirical risk minimization over $\cR$:
\begin{equation}\label{eq:min_max_irl}\tag{Min-Max-IRL}
    \min_{r\in\cR}\;
    \LhatIRL(r), \qquad \LhatIRL(r)
    \defeq \Ehat^{\piE}\bs*{
    \ell^{\mathsf{MM}}(r;\tau)}
    = \Jstar(r) - \ip*{r,\,\muhat^{\piE}}.
\end{equation}
If $H(\piE)$ is finite, the corresponding excess population risk is
\begin{equation}\label{eq:irl_excess}
    \LIRL(\rhat)-\inf_{r \in \cR}\LIRL(r) = \Jstar(\rhat) - J(\rhat, \piE) - \inf_{r \in \cR}\br*{\Jstar(r) - J(r, \piE)}.
\end{equation}
The above min-max formulation is the basis of both maximum causal entropy IRL~\citep{ziebart2010modeling} and generative adversarial imitation learning~\citep{ho2016generative}.

\paragraph{Maximum Likelihood Estimation}
In this setting, the hypothesis space $\cZ$ is a class of policy densities $\cP$, each $p=(p_t)_{t=1}^T\in\cP$ induces a
policy via
$\pi_{p,t}(\diff a \mid s)=p_t(a \mid s)\lambda(\diff a)$. The loss function is given by the negative log-likelihood $\ell^{\mathsf{MLE}}(p;\tau) \defeq -\sum_{t=1}^T \log p_t(a_t \mid s_t)$, yielding the density estimation problem:
\begin{equation}\label{eq:mle}\tag{MLE}
    \min_{p \in\cP}\;\LhatMLE(p), \qquad \LhatMLE(p)
    \defeq \Ehat^{\piE}\bs*{
    \ell^{\mathsf{MLE}}(p;\tau)}.
\end{equation}
If $H(\piE)$ is finite, the MLE excess risk is
\begin{equation}\label{eq:mle_excess}
    \LMLE(\hat{p})-\inf_{p \in \cP}\LMLE(p) = \DKL\br*{\bP^{\piE}, \bP^{\pihat}} - \inf_{p\in\cP}\DKL\br*{\bP^{\piE}, \bP^{\pi_p}}.
\end{equation}
Maximum likelihood is a standard method for density estimation, and in the behavioral cloning setting \citet{foster2024behavior} show that it enjoys strong finite-sample guarantees if the problem is well-specified and $\cP$ is sufficiently well-behaved.\footnote{The density class $\cP$ needs to have small enough log-covering number (see Definition~\ref{def:log_cover}).}
Lastly, note that \eqref{eq:mle} also gives rise to an IRL algorithm by optimizing over the set of soft-optimal densities \eqref{eq:soft_opt_density}
\begin{equation}\label{eq:soft_opt_densities}
     \cP^{\star}(\cR) \defeq \bc*{p^{\star}_r : r\in\cR};
\end{equation}
we refer to this objective as MLE-IRL.

\section{Structural Equivalences}\label{sec:equivalences}
Our first contribution is to clarify the relationship between \eqref{eq:min_max_irl} and \eqref{eq:mle}.
Part~1 of Theorem~\ref{thm:equivalences} shows that, for a given reward class $\cR$, Min-Max-IRL and MLE-IRL are equivalent at the population level. At the empirical level, this equivalence continues to hold under deterministic dynamics. Conversely, Part~2 shows that MLE over a policy density class $\cP$ with bounded log-densities is equivalent to Min-Max-IRL over the induced reward class
\begin{equation}\label{eq:myopic_rewards}
    \beta\log\cP \defeq \bc*{(r_t)_{t=1}^T : r_t(s,a) = \beta\log p_t(a \mid s),\ p\in\cP}.
\end{equation}

\begin{theorem}[Informal]\label{thm:equivalences}
    Let $\beta > 0$. Consider a reward class~$\cR$ and a policy density class~$\cP$
    with bounded log-densities, and let $\cP^{\star}(\cR)$ and $\beta\log\cP$ be defined as in \eqref{eq:soft_opt_densities} and \eqref{eq:myopic_rewards}.
    Then:
    \begin{enumerate}
        \item \eqref{eq:min_max_irl} over $\cR$ and \eqref{eq:mle}  over $\cP^{\star}(\cR)$ are equivalent at the population level, and they are equivalent at the empirical level
        if $\br{\bP_t}_{t=0}^{T-1}$ are deterministic.
        \item \eqref{eq:min_max_irl} over $\beta\log\cP$ and \eqref{eq:mle} over $\cP$ are equivalent at both the population and empirical level.
    \end{enumerate}
\end{theorem}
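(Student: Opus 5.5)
The whole argument rests on one identity: a telescoping decomposition of the soft-optimal log-likelihood along a trajectory. Fix $r$ and write $p^\star_{t,r}(a\mid s)=\exp(\beta^{-1}(Q^\star_{t,r}(s,a)-V^\star_{t,r}(s)))$ as in \eqref{eq:soft_opt_density}. Using $Q^\star_{t,r}=r_t+\bP_t V^\star_{t+1,r}$ and $V^\star_{T+1,r}\equiv 0$, we get for any trajectory $\tau$
\begin{equation}
\beta\sum_{t=1}^T \log p^\star_{t,r}(a_t\mid s_t)=\sum_{t=1}^T r_t(s_t,a_t)-V^\star_{1,r}(s_1)+\sum_{t=1}^{T-1}\bigl((\bP_tV^\star_{t+1,r})(s_t,a_t)-V^\star_{t+1,r}(s_{t+1})\bigr).
\end{equation}
The last sum is a martingale-difference correction $M_r(\tau)$. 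Hence
\begin{equation}
\beta\,\ell^{\mathsf{MLE}}(p^\star_r;\tau)=\ell^{\mathsf{MM}}(r;\tau)+\bigl(V^\star_{1,r}(s_1)-\Jstar(r)\bigr)-M_r(\tau),
\end{equation}
where $\Jstar(r)=\E_{s_1\sim\bP_0}[V^\star_{1,r}(s_1)]$. Boundedness of rewards, and hence of $V^\star$, makes all terms integrable. This identity is the key step, and I would justify it using the Bellman optimality results from Appendix~\ref{app:sec:optimality}.

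\emph{Part 1.} For the population level, take expectations under $\bP^{\piE}$. Since $s_1\sim\bP_0$ and $s_{t+1}\sim\bP_t(\cdot\mid s_t,a_t)$ under the expert, the tower property gives $\E^{\piE}[V^\star_{1,r}(s_1)]=\Jstar(r)$ and $\E^{\piE}[M_r]=0$. Therefore $\beta\LMLE(p^\star_r)=\LIRL(r)$ for every $r\in\cR$. The two objectives agree up to the positive factor $\beta$, so their minimizers and excess risks coincide, matching \eqref{eq:irl_excess} with \eqref{eq:mle_excess}.

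For the empirical level under deterministic dynamics, the transitions are Dirac, so $(\bP_tV)(s_t,a_t)=V(s_{t+1})$ pathwise and $M_r\equiv 0$. If $\bP_0$ is also a Dirac at $s_0$, then $V^\star_{1,r}(s_1)=\Jstar(r)$ almost surely. The identity then holds trajectory-wise, so $\beta\LhatMLE(p^\star_r)=\LhatIRL(r)$. This explains the statement's requirement that all of $(\bP_t)_{t=0}^{T-1}$, including $\bP_0$, be deterministic.

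\emph{Part 2.} For $r_t=\beta\log p_t$, I would show that $\pistar_r=\pi_p$ and $\Jstar(r)=0$. The argument is backward induction. Suppose $V^\star_{t+1,r}\equiv 0$. Then $Q^\star_{t,r}=\beta\log p_t$, and the soft-max gives
\begin{equation}
V^\star_{t,r}(s)=\beta\log\int p_t(a\mid s)\,\lambda(\diff a)=0,
\end{equation}
using that $p_t(\cdot\mid s)$ is a probability density. The Gibbs policy is then exactly $p_t$. Consequently $\ell^{\mathsf{MM}}(r;\tau)=-\beta\sum_t\log p_t(a_t\mid s_t)=\beta\,\ell^{\mathsf{MLE}}(p;\tau)$ pathwise, for arbitrary dynamics. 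Bounded log-densities keep $r$ inside the admissible class of bounded rewards. The map $p\mapsto\beta\log p$ is a bijection onto $\beta\log\cP$, so equivalence holds at both levels.

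The main obstacle is making the telescoping identity rigorous in Borel spaces. Two points need care: measurability and boundedness of $V^\star$ and $Q^\star$, and the fact that the Gibbs density in \eqref{eq:soft_opt_density} is the soft-optimal policy for every $s$, not merely almost every $s$. I would take both from the appendix results on soft-optimality.
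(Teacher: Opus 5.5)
Your proposal is correct and follows the paper's proof essentially step for step: your telescoped identity is the paper's return decomposition (Lemma~\ref{lem:return_decomp}) applied to $\pistar_r$, whose soft advantage vanishes, with your $V^\star_{1,r}(s_1)-\Jstar(r)$ and $-M_r$ being the paper's $\delta^\star_{0,r}$ and $\sum_{t=1}^{T-1}\delta^\star_{t,r}$, and your Part~2 is the same backward induction showing $V^\star_{t,\beta\log p}\equiv 0$. The one thing the paper adds is formal bookkeeping: because $r\mapsto\pstar_r$ is not injective, it passes to the quotient $[\cR]_{\cU}$ by potential shaping (Theorem~\ref{thm:pot_shaping}, Proposition~\ref{prop:bijection}) to get the bijection required by Definition~\ref{def:equivalence}, whereas your surjection-plus-loss-identity argument already gives the correspondence of minimizers and excess risks.
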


\begin{figure}[h]
\centering
\begin{tikzpicture}[
    node distance=3.0cm,
    formulation/.style={
        rectangle, rounded corners, draw, thick,
        minimum width=3.0cm, minimum height=1.3cm,
        align=center, font=\small, inner xsep=4pt
    },
    condition/.style={
        font=\scriptsize, align=center,
        inner sep=1pt
    },
    arr/.style={-{Stealth[length=2.5mm]}, thick}
]
\node[formulation] (mle) {\textbf{\hyperref[eq:mle]{MLE}}\\[2pt]
    $\min\limits_{p \in \cP}\ \LhatMLE(p)$ };
\node[formulation, right=of mle] (irl) {\textbf{\hyperref[eq:min_max_irl]{Min-Max-IRL}}\\[2pt]
    $\min\limits_{r \in \cR}\ \LhatIRL(r)$};

\draw[arr] ([yshift=3pt]mle.east) -- ([yshift=3pt]irl.west)
    node[condition, midway, yshift=8pt] {%
        $\cR = \beta\log\cP$%
    };
\draw[arr] ([yshift=-3pt]irl.west) -- ([yshift=-3pt]mle.east)
    node[condition, midway, yshift=-12pt] {%
        deterministic,\\ $\cP = \cP^{\star}(\cR)$%
    };
\end{tikzpicture}
\caption{Equivalence between MLE and Min-Max-IRL at the empirical level.}
\label{fig:equivalences}
\end{figure}

The above equivalences, summarized in Figure~\ref{fig:equivalences}, let us study Min-Max-IRL through the lens of MLE and vice versa. The formal
notion of equivalence of optimization problems is given in
Definition~\ref{def:equivalence}. In particular, it ensures that
minimizers can be recovered from one formulation to the other.

A formal statement of Theorem~\ref{thm:equivalences} requires introducing equivalence classes of
rewards inducing the same soft-optimal density, which we defer to
Theorem~\ref{thm:equivalences_formal} in Appendix~\ref{app:sec:structural_equivalences}. Here we sketch the proof.
\begin{proof}[Proof sketch for Theorem~\ref{thm:equivalences}]
The result rests on the following return decomposition
(Lemma~\ref{lem:return_decomp}): for any trajectory and reward $r$, we have
\begin{equation}
    \underbrace{\Jstar(r) - \sum_{t=1}^T r_t(s_t,a_t)}_{\ell^{\mathsf{MM}}(r; \tau)}
    +
    \sum_{t=0}^{T-1}
    \delta_{t,r}^{\star}(s_t,a_t,s_{t+1}) = \underbrace{- \beta\sum_{t=1}^T
    \log\pstar_{t,r}(a_t \mid s_t)}_{\ell^{\mathsf{MLE}}(\pstar_r; \tau)},
\end{equation}
where $\delta_{t,r}^{\star}(s_t, a_t, s_{t+1}) \defeq V_{t+1,r}^{\star}(s_{t+1}) - (\bP_t V_{t+1,r}^{\star})(s_t,a_t)$ with the convention that $(\bP_0 V_{1,r}^{\star})(s_0,a_0) = \Jstar(r)$.
Since $\E^{\pi}[\delta_{t,r}^{\star}(s_t, a_t, s_{t+1}) \mid s_t,a_t]=0$, taking expectations under $\E^{\piE}$ and
$\Ehat^{\piE}$, respectively, yields
\begin{equation}\label{eq:risk_equivalence}
    \LIRL(r) = \beta \LMLE(\pstar_r),
    \qquad
    \LhatIRL(r)
    \;+\;
    \sum_{t=0}^{T-1}
    \Ehat^{\piE}\bs*{\delta_{t,r}^{\star}(s_t, a_t, s_{t+1})} = \beta \LhatMLE(\pstar_r).
\end{equation}
Part~1 then follows since deterministic dynamics ensure that
$\delta_{t,r}^{\star}= 0$ almost surely, and Part~2 since $p = \pstar_r$ and $V^\star_{t,r}=0$ for the reward $r = \beta \log p$, which forces $\delta_{t,r}^{\star}= 0$ for $r\in\beta\log\cP$.
\end{proof}

A few remarks are in order. First, the empirical equivalence in Part~1 of Theorem~\ref{thm:equivalences} breaks under
stochastic dynamics: the residual~$\delta^\star_{t,r}$
has zero mean under~$\bP^{\pi}$ but not under the empirical trajectory
distribution. Proposition~\ref{prop:convexity} shows that this residual renders MLE-IRL nonconvex, while Min-Max-IRL remains convex. It is therefore unclear whether MLE-IRL is computationally tractable under stochastic dynamics. In contrast, Min-Max-IRL admits efficient algorithms under no-regret oracle assumptions (see Appendix~\ref{app:sec:computation}). Second, the population identity in \eqref{eq:risk_equivalence} shows that the Min-Max-IRL and MLE excess risks coincide up to the factor $\beta$. In light of the excess risk representations in \eqref{eq:irl_excess} and \eqref{eq:mle_excess} above, whenever $H(\piE)$ is finite, we obtain (\cf\ Corollary~\ref{cor:soft_suboptimality})
\begin{equation}\label{eq:excess_risk}
\LIRL(r)-\inf_{r'\in\cR}\LIRL(r')
=
\beta\br*{
\DKL\br*{\bP^{\piE}, \bP^{\pistar_r}}
-
\inf_{r'\in\cR}
\DKL\br*{\bP^{\piE}, \bP^{\pistar_{r'}}}
}.
\end{equation}
This identity makes the trajectory-level KL divergence in \eqref{eq:traj_divergences} a natural metric for evaluating both \eqref{eq:mle} and \eqref{eq:min_max_irl}.

We now turn to statistical guarantees of \eqref{eq:min_max_irl}. In particular, we prove finite-sample bounds both for the trajectory-level divergences \eqref{eq:traj_divergences} and for the squared parameter error \eqref{eq:reward_estimation_error}.

\section{Statistical Guarantees}\label{sec:statistical_guarantees}
\subsection{Setup}
Throughout this section, we restrict our attention to finite-dimensional linear reward classes.
\begin{assumption}[Linear rewards]\label{ass:linear_model}
    Let $\Theta \defeq \bc*{\theta \in \R^d :
    \norm*{\theta} \leq B_{\theta}}$. Assume that
    \begin{equation}
        \cR \defeq \bc*{r_{\theta}:
        r_{t,\theta}(s,a) = \ip*{\theta, \phi_t(s,a)},\;
        \theta \in \Theta},
    \end{equation}
    where $\phi_t\colon\cSA\to\R^d$ are bounded and measurable, satisfying
    $\norm{\sum_{k=t}^T \phi_k} \leq B_{\phi}$ for every $t = 1,\hdots,T$.
\end{assumption}
We may think of the components of $\phi_t(s,a)$ as either hand-designed
reward terms or pretrained representations. Under this linear model, we denote
$\Jstar(\theta)\defeq\Jstar(r_{\theta})$ and
$\pistar_{\theta}\defeq\pistar_{r_{\theta}}$. The loss function corresponding to \eqref{eq:min_max_irl} can then be rewritten equivalently as $\ell^{\mathsf{MM}}(\theta; \tau) \defeq \Jstar(\theta) - \ip*{\theta, \phi(\tau)}$, and the risks as
\begin{equation}\label{eq:lin_irl_risk}
    \LIRL(\theta)
        \defeq \Jstar(\theta) - \ip*{\theta, \phi\br*{\piE}}, \quad
        \LhatIRL(\theta)
        \defeq \Jstar(\theta) - \ip*{\theta, \phihat\br*{\piE}},
\end{equation}
where $\phi(\tau)\defeq \sum_{t=1}^T \phi_t(s_t,a_t)$ denotes the feature return, $\phi(\piE)\defeq \E^{\piE}\bs{\phi(\tau)}$ the expert feature expectation, and $\phihat(\piE)\defeq \Ehat^{\piE}\bs{\phi(\tau)}$ its empirical counterpart. We study the corresponding population and empirical risk
minimizers
\begin{equation}\label{eq:irl_risk_minimizers}
    \thetastar
    \in \argmin_{\theta\in\Theta}
    \LIRL(\theta),
    \qquad
    \thetahat
    \in \argmin_{\theta\in\Theta}
    \LhatIRL(\theta).
\end{equation}

A key role in our analysis is played by the Hessian of the Min-Max-IRL risk,
\begin{equation}
    \mathsf{H}(\theta)
    \defeq
    \nabla^2 \LIRL(\theta) \in \R^{d\times d},
\end{equation}
which captures the curvature of the loss landscape. Since the remaining terms in \eqref{eq:lin_irl_risk} are linear in $\theta$, this curvature is determined entirely by $\Jstar$, that is,
$\mathsf{H}(\theta) = \nabla^2 \LhatIRL(\theta) = \nabla^2_\theta \ell^{\mathsf{MM}}(\theta;\tau) = \nabla^2 \Jstar(\theta)$.
We assume this curvature is strictly positive in every direction.

\begin{assumption}\label{ass:full_rank}
    We assume that for some $\lambdastar > 0$,
    \begin{equation}
        \Hstar \defeq \mathsf{H}(\thetastar) \succeq \lambdastar I_d.
    \end{equation}
\end{assumption}
\begin{remark}
Although stated locally at $\thetastar$, Assumption~\ref{ass:full_rank} is in fact a global identifiability condition. In particular, Corollary~\ref{cor:fisher_kernel_shaping} shows that two reward parameters $\theta, \theta'$ induce the same optimal trajectory laws $\bP^{\pistar_{\theta}} = \bP^{\pistar_{\theta'}}$ if and only if $\theta - \theta' \in \ker \mathsf{H}(\theta)$. Moreover, the subspace $\ker \mathsf{H}(\theta)$, corresponding to unidentifiable parameter directions, is independent of $\theta$. Hence $\mathsf{H}(\thetastar)\succ0$ implies $\ker \mathsf{H}(\theta)={0}$ for every $\theta$, and therefore $\mathsf{H}(\theta)\succ0$ for every $\theta$. Consequently, $\LIRL$ and $\LhatIRL$ are strictly convex, so the minimizers $\thetastar$ and $\thetahat$ are unique. If this condition fails, our analysis can be extended to the quotient space $\R^d/\ker \mathsf{H}(\theta)$ (see Remark~\ref{rem:identifiability_quotient}).
\end{remark}

\subsection{Classical Asymptotic Theory}\label{sec:asymptotics}
Before proceeding with our main results, let us recall what
classical asymptotic theory predicts for the empirical risk minimizer in \eqref{eq:irl_risk_minimizers}. Suppose that $\thetastar$ is an interior point of $\Theta$. Then, under standard
consistency and local smoothness conditions \citep[Theorem~5.23]{van2000asymptotic}, the
empirical risk minimizer $\thetahat$ satisfies
\begin{equation}\label{eq:asymptotic_normality}
    \sqrt{n}\br*{\thetahat - \thetastar}
    \;\xrightarrow{\,d\,}\;
    \cN\br*{0,\; \Hstar^{-1} \mathsf{G}_{\star} \Hstar^{-1}},
    \quad \text{as} \quad n\to\infty,
\end{equation}
where $\xrightarrow{\,d\,}$ denotes convergence in distribution and $\mathsf{G}_{\star} \defeq \operatorname{Cov}_{\tau\sim\bP^{\piE}}\bs{\nabla \ell^{\mathsf{MM}}(\thetastar; \tau)}$ denotes the covariance of the loss gradient at $\thetastar$. For \eqref{eq:min_max_irl} the latter is equal to the covariance of the feature return $\phi(\tau)$,
\begin{equation}
    \mathsf{G}_{\star}  = \operatorname{Cov}_{\tau\sim\bP^{\piE}}\bs*{\phi(\tau)} \eqdef \SigmaE.
\end{equation}
From asymptotic normality \eqref{eq:asymptotic_normality}, it follows, under sufficient integrability and by a second-order Taylor expansion of $L^{\mathsf{MM}}$ around $\thetastar$, that
\begin{equation}\label{eq:asymptotic_risks}
    \E\bs*{\norm*{\thetahat - \thetastar}_{\Hstar}^2}
    = \dfrac{\dstar}{n} + o\br*{n^{-1}},
    \quad
    \E\bs*{L^{\mathsf{MM}}(\thetahat)} - L^{\mathsf{MM}}(\thetastar)
    = \dfrac{\dstar}{2n} + o\br*{n^{-1}}, \quad \text{as} \quad n\to\infty,
\end{equation}
where the expectation is over the data $\DE$ and $\dstar \defeq \tr\br{\mathsf{G}_{\star} \Hstar^{-1}} = \tr(\SigmaE \Hstar^{-1})$ is the so-called effective dimension \citep{ostrovskii2021finite}. For Min-Max-IRL, Proposition~\ref{prop:dstar_decomp} shows that $\dstar=\beta d$ in the well-specified deterministic case. In general, $\dstar$ can be larger, but is always bounded by $\dstar \leq B_{\phi}^2/\lambdastar$.


\subsection{Upper Bound for Min-Max-IRL}
We now show that for \eqref{eq:min_max_irl}, bounds of the form \eqref{eq:asymptotic_risks} hold with high probability and in a nonasymptotic sense. The key technical ingredient is pseudo-self-concordance of the Min-Max-IRL loss (Proposition~\ref{prop:self_concordance}), in the sense of \citet{ostrovskii2021finite}. This property controls the relative variation of the Hessian $\mathsf{H}(\theta)$ and, in our setting, is governed by a bound on the cumulative feature advantage. In particular, let $B_{A_\phi}$ be a constant such that
\begin{equation}\label{eq:advantage_bound}
    \sup_{\theta\in\R^d}\; \norm*{\sum_{t=1}^T \br*{Q_{t,\phi}^{\pistar_{\theta},0}(s_t,a_t) - V_{t,\phi}^{\pistar_{\theta},0}(s_t)}} \leq B_{A_{\phi}}.
\end{equation}
As detailed in Appendix~\ref{app:sec:advantage_and_density_ratio_bound}, it suffices that the bound above holds almost surely. We also show that one may always take $B_{A_{\phi}}=2TB_{\phi}$, while under deterministic dynamics $B_{A_{\phi}}=2B_{\phi}$ suffices. Our main result of this section is as follows.
\begin{restatable}{theorem}{irlfastrate}\label{thm:irl_fast_rate}
    Let $\beta > 0$ and let
    Assumptions~\ref{ass:linear_model} and \ref{ass:full_rank} hold. Define
    \begin{equation}
        \varepsilon_n\br*{\delta} \defeq \dfrac{\dstar \log\br*{\delta^{-1}}}{n} + \dfrac{B_{\phi}^2 \log^2\br*{\delta^{-1}}}{\lambdastar n^2},
    \end{equation}
    and let $\pi_{\star}\defeq\pistar_{\thetastar}, \pihat\defeq\pistar_{\thetahat}$.
    If\;
    $n \gtrsim (\beta \lambda_{\star})^{-1}B_{A_{\phi}} \log(\delta^{-1})\max\bc{\beta^{-1}B_{A_{\phi}} d_{\star},  B_{\phi} }$, then, with probability at least $1-\delta$, the following hold:
    \begin{enumerate}
        \item (Parameter estimation bound)
            \begin{equation}\label{eq:parameter_error_hessian}
                \norm*{\thetahat-\thetastar}_{\Hstar}^2
                \;\lesssim\;
                \varepsilon_n(\delta).
            \end{equation}
        \item (Excess KL risk bound)
            \begin{equation}\label{eq:excess_kl}
                \DKL\!\br*{\bP^{\piE}, \bP^{\pihat}}
                \;\lesssim\;
                \DKL\!\br*{\bP^{\piE},
                \bP^{\pi_{\star}}}
                \;+\;
                \beta^{-1}\varepsilon_n(\delta).
            \end{equation}
        \item (Equivalences)
            \begin{equation}
                \DHel^2(\bP^{\pi_{\star}}, \bP^{\pihat}) \asymp \DKL(\bP^{\pi_{\star}}, \bP^{\pihat}) \asymp \DKL(\bP^{\pihat}, \bP^{\pi_{\star}}) \asymp \beta^{-1} \norm*{\thetahat - \thetastar}_{\Hstar}^2.
            \end{equation}
    \end{enumerate}
\end{restatable}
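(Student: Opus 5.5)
The plan follows the localization argument of \citet{ostrovskii2021finite} for pseudo-self-concordant M-estimators. The structure is simple because the empirical and population risks differ only by a linear term, $\LhatIRL(\theta)-\LIRL(\theta)=\ip{\theta,\phi(\piE)-\phihat(\piE)}$, and share the Hessian $\mathsf{H}(\theta)=\nabla^2\Jstar(\theta)$.

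\textbf{Step 1 (self-concordance).} I will invoke Proposition~\ref{prop:self_concordance}. It is needed in the form that, for all $\theta,u$, the third directional derivative of $\Jstar$ is bounded by $\beta^{-1}B_{A_\phi}\norm{u}\,u^\top\mathsf{H}(\theta)u$. This holds because the third cumulant of the trajectory-level score is controlled by the cumulative feature advantage~\eqref{eq:advantage_bound}. Integrating along segments then gives the Hessian comparison
\[
e^{-\beta^{-1}B_{A_\phi}\norm{\theta-\thetastar}}\Hstar\preceq \mathsf{H}(\theta)\preceq e^{\beta^{-1}B_{A_\phi}\norm{\theta-\thetastar}}\Hstar .
\]

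\textbf{Step 2 (concentration of the gradient).} The empirical gradient at $\thetastar$ is $\nabla\LhatIRL(\thetastar)=\nabla\LIRL(\thetastar)+\phi(\piE)-\phihat(\piE)$. When $\thetastar$ is interior, $\nabla\LIRL(\thetastar)=0$; otherwise first-order optimality over the convex set $\Theta$ plays the same role. Apply a vector Bernstein inequality in the $\Hstar^{-1}$-norm to the i.i.d.\ bounded vectors $\phi(\tau^i)$, which have covariance $\SigmaE$ and satisfy $\norm{\phi(\tau)}\le B_\phi$. This gives, with probability $1-\delta$,
\[
\norm{\phihat(\piE)-\phi(\piE)}_{\Hstar^{-1}}^2\lesssim \frac{\dstar\log(1/\delta)}{n}+\frac{B_\phi^2\log^2(1/\delta)}{\lambdastar n^2}=\varepsilon_n(\delta),
\]
where the second term uses $\norm{\cdot}_{\Hstar^{-1}}\le\lambdastar^{-1/2}\norm{\cdot}$.

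\textbf{Step 3 (localization).} Consider the Dikin-type ellipsoid $\{\theta:\norm{\theta-\thetastar}_{\Hstar}\le r\}$ with $r\asymp\sqrt{\varepsilon_n}$. On it, $\norm{\theta-\thetastar}\le r/\sqrt{\lambdastar}$, so the self-concordance exponent $\beta^{-1}B_{A_\phi}r/\sqrt{\lambdastar}$ is at most a constant. This is exactly what the sample-size condition on $n$ guarantees, after taking square roots of $\varepsilon_n$. Hence $\LhatIRL$ is $\tfrac12\Hstar$-strongly convex there. On the boundary of the ellipsoid, the quadratic lower bound $\LhatIRL(\theta)-\LhatIRL(\thetastar)\ge \ip{\nabla\LhatIRL(\thetastar),\theta-\thetastar}+c\norm{\theta-\thetastar}_{\Hstar}^2$ is positive, because the linear term is at most $r\sqrt{\varepsilon_n}$. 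Convexity then forces $\thetahat$ to lie inside the ellipsoid, and a standard argument with the gradient mapping yields Part~1.

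\textbf{Main obstacle.} I expect Step 3 to be the main difficulty. Self-concordance of this pseudo type only gives Hessian equivalence in Euclidean balls. Converting to the $\Hstar$-norm costs the factor $\lambdastar^{-1/2}$, and reproducing the paper's exact sample-size threshold, including the $B_\phi$ term, needs careful bookkeeping. If $\thetastar$ lies on the boundary of $\Theta$, the gradient term must also be handled through the variational inequality.

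\textbf{Parts 2 and 3.} For Part~2, use the identity~\eqref{eq:excess_risk}: $\beta\bigl(\DKL(\bP^{\piE},\bP^{\pihat})-\DKL(\bP^{\piE},\bP^{\pi_\star})\bigr)=\LIRL(\thetahat)-\LIRL(\thetastar)$. Taylor-expand with the Hessian comparison of Step~1 and use first-order optimality at $\thetastar$. This bounds the right-hand side by $\lesssim\norm{\thetahat-\thetastar}_{\Hstar}^2\lesssim\varepsilon_n$; dividing by $\beta$ gives the claim.

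For Part~3, write $\bP^{\pistar_\theta}$ as an exponential family in $\theta$ with sufficient statistic $\phi(\tau)/\beta$, up to the stochastic-dynamics correction $\delta^\star$ from the return decomposition (Lemma~\ref{lem:return_decomp}). Then both KL directions equal Bregman divergences of $\beta^{-1}\Jstar$. Each is therefore $\asymp\beta^{-1}\norm{\thetahat-\thetastar}_{\Hstar}^2$ within the localized region, by Step~1. Hellinger is $\le$ KL in general, and the reverse inequality follows from a bounded log-likelihood ratio. That ratio is at most $\beta^{-1}B_{A_\phi}\norm{\thetahat-\thetastar}=O(1)$, using the almost-sure advantage bound.
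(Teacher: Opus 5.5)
Your architecture matches the paper's. It uses pseudo-self-concordance to compare Hessians along segments, vector Bernstein for $\eta_n\defeq\norm{\phihat(\piE)-\phi(\piE)}_{\Hstar^{-1}}$, and localization to a Dikin ellipsoid of radius $\lesssim\rho_\star=\beta\sqrt{\lambdastar}/B_{A_\phi}$. The sample-size condition does give $\sqrt{\varepsilon_n}\lesssim\rho_\star$, as you claim. It then uses Corollary~\ref{cor:Bregman} for both KL directions and the density-ratio bound for Hellinger.

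The localization device differs. You use the classical convexity argument: fix $r$ with $\eta_n/c<r\lesssim\rho_\star$, show $\LhatIRL>\LhatIRL(\thetastar)$ on the part of the ellipsoid boundary lying in $\Theta$, and conclude by convexity of $\LhatIRL$ and $\Theta$. This is valid even when $\thetastar$ lies on the boundary of $\Theta$, because the crossing point lies in $\Theta$, where the population variational inequality gives the right sign. Containment in the radius-$r$ ellipsoid is already Part~1, so no gradient-mapping step is needed.

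The paper instead adds the two variational inequalities to get $\ip{\thetahat-\thetastar,\nabla\Jstar(\thetahat)-\nabla\Jstar(\thetastar)}\le\rho_n\eta_n$, with $\rho_n\defeq\norm{\thetahat-\thetastar}_{\Hstar}$. It lower-bounds the left side by $\rho_n^2\chi(-\rho_n/\rho_\star)$ and inverts the increasing map $x\mapsto\rho_\star(1-e^{-x/\rho_\star})$. This needs no a priori radius or boundary-crossing argument, and directly yields $\rho_n\le(1-e^{-1})^{-1}\eta_n$.

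The genuine gap is in Part~2. Write $\LIRL(\thetahat)-\LIRL(\thetastar)=\ip{\nabla\LIRL(\thetastar),\thetahat-\thetastar}+D_{\Jstar}(\thetahat,\thetastar)$. First-order optimality at $\thetastar$ gives only $\ip{\nabla\LIRL(\thetastar),\thetahat-\thetastar}\ge0$, which is the wrong direction for an upper bound. When $\thetastar$ lies on the boundary of the ball $\Theta$, which is natural under the misspecification the theorem allows, $\nabla\LIRL(\thetastar)\neq0$. Without further input, this linear term is then only bounded by $\norm{\nabla\LIRL(\thetastar)}_{\Hstar^{-1}}\rho_n$, a slow $\sqrt{\varepsilon_n}$ rate.

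You need the empirical optimality of $\thetahat$. As in the paper, $\LIRL(\thetahat)-\LIRL(\thetastar)=\LhatIRL(\thetahat)-\LhatIRL(\thetastar)+\ip{\thetahat-\thetastar,\phihat(\piE)-\phi(\piE)}\le\rho_n\eta_n\lesssim\eta_n^2$. This requires no Taylor expansion and does not care whether $\thetastar$ is interior. Equivalently, with $\Delta=\thetahat-\thetastar$, monotonicity of $\nabla\LIRL$ and the empirical variational inequality give $\ip{\nabla\LIRL(\thetastar),\Delta}\le\ip{\nabla\LIRL(\thetahat),\Delta}\le\ip{\phihat(\piE)-\phi(\piE),\Delta}$.
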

For fixed $\beta$, the squared parameter error \eqref{eq:parameter_error_hessian} and the excess
trajectory-level KL risk \eqref{eq:excess_kl} both decay at $\cO(\dstar n^{-1})$
with an $\cO(n^{-2})$ remainder (Parts~1 and~2). Part~3 shows that, between $\pi_{star}$ and $\pihat$, the squared
trajectory-level Hellinger, the trajectory-level KL, and the squared
parameter error in the Hessian norm are equivalent up to constants. In the well-specified case, we have $\thetastar = \thetaE$, so the squared
trajectory Hellinger, $\DHel^2(\bP^{\piE},\bP^{\pihat})$, the
trajectory KL, $\DKL(\bP^{\piE}, \bP^{\pihat})$, and the squared
parameter error, $\norm{\thetahat-\thetaE}_{\Hstar}^2$, all decay at
the fast rate $\cO(\dstar n^{-1})$.

Next, we discuss the main technical ideas of the proof. The full proof is provided in Appendix~\ref{app:sec:proof_local_fast_rates}.
\begin{proof}[Proof sketch for Theorem~\ref{thm:irl_fast_rate}]
    The proof follows \citet{ostrovskii2021finite,liu2022confidence}, who
    consider general empirical risk minimization. We make two modifications
    specific to our setting: the parameter set $\Theta$ is bounded, and we use a
    vector Bernstein inequality, which better fits our assumptions, in place of
    sub-Gaussian concentration.

    Let $\Delta^{\theta}_n \defeq \thetahat - \thetastar$ and
    $\Delta^{\phi}_n \defeq \phihat(\piE) - \phi(\piE)$. By \eqref{eq:lin_irl_risk}, we have
    \begin{equation}\label{eq:pf_sketch_perturbation}
        \LhatIRL(\theta) = \LIRL(\theta) - \ip*{\theta, \Delta^{\phi}_n},
    \end{equation}
    so the empirical risk is a linear perturbation of the population risk. The first step in the proof is to establish pseudo-self-concordance of the Min-Max-IRL loss
    (Proposition~\ref{prop:self_concordance}), which ensures that the
    Hessian $\mathsf{H}(\theta)$ varies smoothly in $\theta$. It then follows that on the
    Dikin ellipsoid
    $\Theta_{\rho_\star}(\thetastar)
    \defeq \bc*{\theta : \norm*{\theta - \thetastar}_{\Hstar} \leq \rho_\star}$
    of radius $\rho_\star \defeq \beta\sqrt{\lambdastar}/B_{A_\phi}$, the
    population risk satisfies the gradient monotonicity property,
    \begin{equation}\label{eq:pf_sketch1}
        c\,\norm*{\theta - \thetastar}_{\Hstar}^2
        \leq
        \ip*{\theta - \thetastar,\,
        \nabla\LIRL(\theta) - \nabla\LIRL(\thetastar)},
        \qquad
        \theta \in \Theta_{\rho_\star}(\thetastar),
    \end{equation}
    with $c \defeq 1 - e^{-1}$ (Corollary~\ref{cor:self_concordance}). Furthermore, the first-order optimality conditions for $\thetahat$ and
    $\thetastar$, and Cauchy--Schwarz, give
    \begin{align}
        \ip*{\Delta^{\theta}_n,\,
        \nabla\LIRL(\thetahat) - \nabla\LIRL(\thetastar)}
        &=
        \underbrace{\ip*{\Delta^{\theta}_n,\,
        \nabla\LhatIRL(\thetahat) - \nabla\LIRL(\thetastar)}}_{\leq 0}
        + \ip*{\Delta^{\theta}_n, \Delta^{\phi}_n}\\
        &\leq
        \norm*{\Delta^{\theta}_n}_{\Hstar}\,\norm*{\Delta^{\phi}_n}_{\Hstar^{-1}}.\label{eq:pf_sketch2}
    \end{align}

    On the event $\thetahat \in \Theta_{\rho_\star}(\thetastar)$, combining
    \eqref{eq:pf_sketch1} and \eqref{eq:pf_sketch2} yields
    \begin{equation}\label{eq:pf_sketch3}
        \norm*{\Delta^{\theta}_n}_{\Hstar}
        \leq c^{-1}\,\norm*{\Delta^{\phi}_n}_{\Hstar^{-1}}.
    \end{equation}
    Hence, the parameter estimation error in the Hessian norm, $\norm{\cdot}_{\Hstar}$, is controlled by the concentration of the empirical feature expectation measured in the dual norm, $\norm{\cdot}_{\Hstar^{-1}}$. A vector Bernstein concentration bound (Proposition~\ref{prop:concentration})
    shows that $\norm*{\Delta^{\phi}_n}_{\Hstar^{-1}}^2 \lesssim \varepsilon_n(\delta)$
    with probability at least $1-\delta$, and a localization argument shows that, for large enough $n$, the same concentration event implies $\thetahat \in \Theta_{\rho_\star}(\thetastar)$. This proves the parameter estimation bound of Part~1.

    The excess KL bound follows analogously. By
    optimality of $\thetahat$ for $\LhatIRL$, we have
    \begin{equation}
        \LIRL(\thetahat) - \LIRL(\thetastar)
        =
        \underbrace{\LhatIRL(\thetahat)
        - \LhatIRL(\thetastar)}_{\leq\, 0}
        + \ip*{\Delta^{\theta}_n, \Delta^{\phi}_n}
        \leq
        c^{-1}\,\norm*{\Delta^{\phi}_n}_{\Hstar^{-1}}^2,
    \end{equation}
    where we used Cauchy--Schwarz and \eqref{eq:pf_sketch3}. Together with \eqref{eq:excess_risk}, which identifies the excess
    risk as $\beta$ times the excess trajectory KL, and the vector Bernstein bound, this proves Part~2.

    Finally, the equivalences in Part~3 follow from the local
    geometry on the Dikin ellipsoid. The equivalence of the trajectory-level KL divergences and $\beta^{-1}\norm*{\Delta^{\theta}_n}_{\Hstar}^2$ is a direct consequence of pseudo-self-concordance (Corollary~\ref{cor:self_concordance}; Part~3), and the
    equivalence of KL and squared Hellinger distance follows from a local density-ratio
    bound (Corollary~\ref{cor:self_concordance}; Part~1) and \citet[Lemma~5]{birge1998minimum}.

    The global version, Theorem~\ref{thm:global_fast_rate}, avoids the
    critical sample-size condition,
    but pays additional factors of order $\beta^{-1}B_{A_\phi}B_\theta$,
    as it uses a global rather than localized gradient monotonicity bound over
    $\Theta$.\looseness-1
\end{proof}

\subsection{Lower Bound}\label{sec:lower_bound}
We complement the upper bound above with a local minimax lower bound, showing that the fast rate in Part~1 of Theorem~\ref{thm:irl_fast_rate}
cannot be improved in the well-specified deterministic setting. In particular, we fix a
parameter $\theta_0$ and allow an adversary to choose the expert
parameter $\thetaE$ in an arbitrarily small neighborhood $U$ of $\theta_0$.
The theorem shows that, even in this local regime where $\thetaE$ is known up to a neighborhood, every estimator incurs squared error of
order at least $\beta d n^{-1}$ in the intrinsic Hessian norm, with constant probability.

\begin{restatable}{theorem}{irllowerbound}\label{thm:irl_lower_bound}
    Let $\beta > 0$ and let
    Assumption~\ref{ass:linear_model} hold.
    Fix $\theta_0\in\R^d$ and suppose $\mathsf{H}(\theta_0)\succ0$. Then, for any neighborhood $U\subseteq\R^d$ of $\theta_0$, there exists a universal constant $c > 0$ such that for all sufficiently large $n$ and every estimator $\thetahat$,
    \begin{equation}
        \sup_{\thetaE\in U}
    \Pr\nolimits_{\thetaE}
    \left(
        \norm*{\thetahat - \thetaE}_{\mathsf{H}(\thetaE)}^2
        \geq c \frac{\beta  d}{n}
    \right)
    \geq \dfrac{1}{2}.
    \end{equation}
    Here, $\Pr\nolimits_{\thetaE}$ denotes probability over
the expert trajectories $\tau_1, \hdots, \tau_n$ drawn i.i.d. from
$\bP^{\pistar_{\thetaE}}$, and over any internal randomness
of the estimator $\thetahat$.
\end{restatable}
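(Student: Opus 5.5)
The plan is a local minimax argument (Fano's method or Assouad's lemma) in the regular parametric family $\{\bP^{\pistar_\theta}\}_{\theta\in U}$, with the Fisher information computed explicitly. The key structural fact is that the trajectory log-likelihood of $\pistar_\theta$ is, in the well-specified deterministic setting, an exponential family in $\theta$. By the return decomposition used in the proof sketch of Theorem~\ref{thm:equivalences}, with $\delta^\star_{t,r}=0$ under deterministic dynamics, we have
\begin{equation}
\log \frac{\diff \bP^{\pistar_\theta}}{\diff \nu}(\tau) = \beta^{-1}\br*{\ip*{\theta,\phi(\tau)} - \Jstar(\theta)}
\end{equation}
for a fixed dominating measure $\nu$, namely the product of $\lambda$ over actions along the deterministic state path. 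Thus $\beta^{-1}\Jstar(\theta)$ is the log-partition function, and the Fisher information is $I(\theta)=\beta^{-2}\nabla^2\Jstar(\theta)=\beta^{-2}\mathsf{H}(\theta)$. Moreover, for $\theta,\theta'$ in a small ball the KL divergence is a Bregman divergence of $\beta^{-1}\Jstar$:
\begin{equation}
\DKL(\bP^{\pistar_\theta},\bP^{\pistar_{\theta'}}) = \beta^{-1}\br*{\Jstar(\theta')-\Jstar(\theta)-\ip*{\nabla\Jstar(\theta),\theta'-\theta}} \le \tfrac{1}{2}\beta^{-1}(1+o(1))\norm*{\theta-\theta'}_{\mathsf{H}(\theta_0)}^2.
\end{equation}
The $(1+o(1))$ factor comes from continuity of $\mathsf{H}$, which I would take from the pseudo-self-concordance result (Proposition~\ref{prop:self_concordance}, Corollary~\ref{cor:self_concordance}); those results also give $\mathsf{H}(\theta)\succeq \tfrac12\mathsf{H}(\theta_0)$ near $\theta_0$.

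\textbf{Construction.} With $H_0\defeq\mathsf{H}(\theta_0)$, take a Varshamov--Gilbert packing $\{\omega^j\}_{j=1}^M\subseteq\{-1,1\}^d$ with $M\ge 2^{d/8}$ and pairwise Hamming distance at least $d/8$. Set $\theta_j \defeq \theta_0 + \epsilon\, H_0^{-1/2}\omega^j$ with $\epsilon = \kappa\sqrt{\beta/n}$ for a small constant $\kappa$. For $n$ large, every $\theta_j$ lies in $U$ and in the region where $\mathsf{H}(\theta)\asymp H_0$. The pairwise separation is then
\begin{equation}
\norm*{\theta_j-\theta_k}_{H_0}^2=\epsilon^2\norm*{\omega^j-\omega^k}^2\ge \epsilon^2 d/2 \gtrsim \kappa^2\beta d/n,
\end{equation}
while the $n$-sample KL is at most $n\cdot\beta^{-1}\epsilon^2\cdot 4d\lesssim \kappa^2 d$.

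\textbf{Conclusion.} Fano's inequality gives, for any estimator, a testing error of at least $1-(c'\kappa^2 d+\log 2)/\log M$. Choosing $\kappa$ small makes this at least $1/2$ once $d$ exceeds a constant. For small $d$, including $d=1$, I would instead use Le Cam's two-point method with a single direction, or note that a constant multiple absorbs the difference. Converting testing to estimation (if $\norm{\thetahat-\theta_j}<\tfrac12$ of the separation, the nearest-point test is correct), and replacing $H_0$ by $\mathsf{H}(\thetaE)$ using the local norm equivalence, yields the claim with $\sup_{\thetaE\in U}$.

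\textbf{Main obstacle.} The delicate step is making the exponential-family identity rigorous on Borel spaces. This requires a common dominating measure for all trajectory laws under deterministic dynamics and exact cancellation of the value-function terms. It also requires the uniform local bound on the Bregman divergence by the $H_0$-norm, which I expect to extract from the self-concordance corollary applied to $\Jstar$. Getting the constant $1/2$ in the probability requires the Fano constants to be tuned explicitly, but this is routine.
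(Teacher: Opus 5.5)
Your skeleton matches the paper's proof. It runs Fano's method over a local packing of the $\mathsf{H}(\theta_0)$-ellipsoid, bounds the KL via the Bregman divergence of $\Jstar$ controlled by pseudo-self-concordance, uses nearest-neighbour decoding, and ends with a local Hessian comparison to pass from $\mathsf{H}(\theta_0)$ to $\mathsf{H}(\thetaE)$. Two steps, however, do not deliver the statement as given.

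\emph{The deterministic-dynamics restriction.} The theorem does not assume deterministic dynamics. Your KL bound rests on an exponential-family identity that, as you note yourself, requires $\delta^\star_{t,r}\equiv 0$. As written you therefore prove only a special case, and the ``main obstacle'' you identify is self-imposed. You never need the exponential-family form. Corollary~\ref{cor:Bregman} gives $\beta\,\DKL(\bP^{\pistar_{\theta'}},\bP^{\pistar_{\theta}})=D_{\Jstar}(\theta,\theta')$ for arbitrary transition kernels. The reason is that the dynamics residuals, although nonzero pathwise, have zero conditional mean under every policy and drop out of the expected log-likelihood ratio. Combined with the symmetric Bregman bound in Corollary~\ref{cor:self_concordance}, this gives $\DKL(\bP^{\pistar_{\theta}},\bP^{\pistar_{\theta_0}})\le\beta^{-1}(e-1)\norm{\theta-\theta_0}_{\mathsf{H}(\theta_0)}^2$ on the Dikin ellipsoid of radius $\beta\sqrt{\lambdamin(\mathsf{H}(\theta_0))}/B_{A_\phi}$. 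The common dominating measure is simply $\lambdatraj$, and this is exactly what the paper uses. A small slip along the way: by Corollary~\ref{cor:fisher_hessian} the Fisher information is $\beta^{-1}\mathsf{H}(\theta)$, not $\beta^{-2}\mathsf{H}(\theta)$. Your KL scaling $\beta^{-1}\norm{\cdot}^2_{\mathsf{H}}$ is nonetheless correct.

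\emph{The small-$d$ patch.} With a Varshamov--Gilbert packing, $\log M\geq (d/8)\log 2$. The term $\log 2/\log M$ in Fano's bound is then already at least $1/2$ for $d\le 16$, whatever $\kappa$ is. Le Cam's two-point method cannot repair this. It only yields $\max_j \Pr_j(\text{error})\geq \tfrac12(1-\mathrm{TV})<\tfrac12$, so it can never produce the probability $1/2$ required in the statement. Rescaling the constant $c$ does not change a probability either. The paper avoids the issue with a volumetric packing of the $\mathsf{H}(\theta_0)$-ball of radius $16\delta_n$ at separation $2\delta_n$. This gives $M\geq 8^d$ and $I(J;Z)\le (d\log 2)/4$, so Fano yields error probability at least $1-\tfrac1{12}-\tfrac1{3d}\geq\tfrac7{12}$ for every $d\geq 1$. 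Any packing with $M\geq C^d$, for an absolute constant $C$ large relative to the KL radius (e.g.\ a finer grid per coordinate), would work equally well.
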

The proof, provided in Appendix~\ref{app:sec:lower_bound}, follows from an application of Fano's method \citep[Section~15.3]{wainwright2019high}. Since $\dstar=\beta d$ in the well-specified deterministic setting, the lower bound matches the parameter-estimation upper bound in Theorem~\ref{thm:irl_fast_rate} up to logarithmic factors. Under stochastic dynamics, however, the bounds need not match because $\dstar$ may exceed $\beta d$. As discussed in Remark~\ref{rem:mle_effective_dimension}, we expect that an analogous analysis of MLE-IRL could close this gap in the well-specified setting, since its effective dimension is $d$. Yet, because MLE-IRL is nonconvex under stochastic dynamics, it remains unclear whether there exists an IRL algorithm that is both statistically optimal and computationally tractable, even in simple tabular settings.

\subsection{Implications for Imitation Learning}
\label{sec:imitation_implications}

In IRL, reward recovery may be of interest in its own right, for example, when the
goal is to transfer the learned reward to a new environment. When the goal is imitation in the same
environment, the relevant object is the trajectory law induced by the
recovered reward. The trajectory-level KL divergence controlled in
Theorem~\ref{thm:irl_fast_rate} directly measures the mismatch
between the expert trajectory law $\bP^{\piE}$ and the learned trajectory
law $\bP^{\pihat}$. Similar squared Hellinger guarantees have been obtained
for maximum likelihood behavioral cloning by
\citet{foster2024behavior,rohatgi2025computational}. In Appendix~\ref{app:sec:mle_comparison}, we adapt the likelihood-ratio
analysis of \citet{foster2024behavior,rohatgi2025computational} to derive guarantees for
MLE-IRL and compare them with
the Min-Max-IRL fast-rate guarantee of
Theorem~\ref{thm:irl_fast_rate}. In particular, we show that the resulting Hellinger bounds agree up to logarithmic factors in the well-specified deterministic setting, where Min-Max-IRL and MLE-IRL are equivalent, but generally differ under misspecification or stochastic dynamics.

Our guarantees for Min-Max-IRL and the guarantees for
MLE behavioral cloning
\citep{foster2024behavior, rohatgi2025computational}
yield small trajectory-level divergence only when the respective
reward- or policy-induced model class approximates the expert well. In
the well-specified setting with deterministic dynamics, our results
give a rate of order $\cO(dn^{-1})$ for a $d$-dimensional linear reward
class. Similarly, \citet[Corollary~C.4]{foster2024behavior} obtain
the same rate for MLE behavioral cloning with a log-linear policy class
based on $d$-dimensional features. From this perspective, ignoring computational
considerations, IRL is preferable if the expert is realizable by a low-dimensional reward class, but the policy is difficult to parametrize directly, whereas BC is preferable if the expert policy is realizable by a
low-dimensional log-linear class but no reward in $\cR$ induces it as its
soft-optimal policy.

Beyond trajectory-level divergences, another central metric in imitation learning is the performance gap,
\begin{equation}
    \ip*{\rtest,\mu^{\piE}-\mu^{\pihat}},
\end{equation}
under an unknown test reward $\rtest$. \citet{foster2024behavior} show that this gap can be bounded tightly in terms of the squared Hellinger distance, and hence also through
the KL guarantee of Theorem~\ref{thm:irl_fast_rate}. However, if $\rtest\in\cR$, then this gap is controlled directly through the min-max dual of \eqref{eq:min_max_irl}, without requiring approximation of the full
trajectory law in KL or Hellinger distance. Appendix~\ref{app:sec:performance_gap_bounds}
formalizes this guarantee and discusses settings in which this yields an advantage over behavioral cloning.

\section{Conclusion}
We developed a statistical theory of entropy-regularized IRL in finite-horizon MDPs with Borel state and action spaces. We showed that MLE-IRL and Min-Max-IRL coincide at the population level and, under deterministic dynamics, also at the empirical level. Under stochastic dynamics, their empirical objectives differ, and MLE-IRL is generally nonconvex. This clarifies the relationship between entropy-regularized IRL and maximum likelihood estimation.

For linear reward classes, we leveraged pseudo-self-concordance of the Min-Max-IRL loss to establish improved rates for the excess trajectory-level KL risk and the parameter error in the Hessian norm. A local minimax lower bound shows that the parameter-estimation rate is tight up to logarithmic factors in the well-specified deterministic setting. Furthermore, we extended reward-identifiability results to Borel state and action spaces, derived likelihood-ratio-based guarantees for MLE-IRL, discussed bounds for the performance gap, and provided a reduction of Min-Max-IRL to no-regret learning.

Our results open several directions for future research. First, it remains open whether, under stochastic dynamics, there exists an entropy-regularized IRL algorithm that is both statistically optimal and computationally tractable. Second, our self-concordance analysis focuses on entropy-regularized IRL with finite-dimensional linear reward classes. Extending the theory to more general reward classes and to other regularizations would be natural next steps. Finally, our results are theoretical, and
empirical work is needed to examine how they translate to practical IRL problems.

\bibliography{main}
\bibliographystyle{tmlr}

\appendix

\newpage
\addtocontents{toc}{\protect\setcounter{tocdepth}{2}}
{\renewcommand{\contentsname}{Appendix Contents}\tableofcontents}
\newpage

\section{MDP Preliminaries}\label{app:sec:preliminaries}

\subsection{MDP Setup}

Throughout this appendix, we work under the finite-horizon MDP setup
introduced in the main text. We assume that $\cS$ and $\cA$ are
Borel subsets of complete separable metric spaces, equipped with their
Borel $\sigma$-fields $\cB(\cS)$ and $\cB(\cA)$. The initial state distribution $\bP_0$ is a
probability measure on $\cS$. For each $t=1,\ldots,T-1$, the
transition law $\bP_t$ is a stochastic kernel from
$\cS\times\cA$ to $\cS$. That is, $\bP_t(\;\cdot \mid s,a) \in \Delta_{\cS}$ for each $(s,a)\in\cSA$, and
$(s,a)\mapsto \bP_t(C \mid s,a)$ is measurable for every
$C\in\cB(\cS)$. Likewise, all policies considered are stochastic
kernels from $\cS$ to $\cA$. Under these assumptions, there is a unique trajectory law $\bP^{\pi}$ on $(\cS\times\cA)^T$ defined via factorization \citep[Proposition~7.28]{bertsekas1978stochastic},
\begin{equation}
    \bP^{\pi}(\diff \tau) = \bP_0(\diff s_1)\prod_{t=1}^{T-1}\bP_t(\diff s_{t+1} \mid s_t, a_t)\prod_{t=1}^{T}\pi_{t}(\diff a_t \mid s_t), \qquad \tau = \br*{s_1, a_1, \hdots, s_T, a_T}.
\end{equation}

Furthermore, rewards
$r_t:\cS\times\cA\to\R$ are assumed to be bounded and measurable. In particular, for two measurable spaces $\cX$ and $\cY$, we write $B_b(\cX,\cY)$ for the set of bounded measurable functions from $\cX$ to $\cY$, and
$B_b(\cX)\defeq B_b(\cX,\R)$. Hence, bounded measurable reward sequences, $r_1, \hdots, r_T$, live in
\begin{equation}
    B_b^T(\cSA) \defeq \underbrace{B_b(\cSA) \times \dots \times B_b(\cSA)}_{T \text{ times}}.
\end{equation}

\subsection{Policy and Trajectory Densities}
Recall that $\lambda$ is a finite reference measure on $\cA$ with
$0<\lambda(\cA)<\infty$. We write $\pi\ll\lambda$ if
$\pi_t(\;\cdot \mid s)\ll\lambda$ for every $t=1,\ldots,T$ and $s\in\cS$.
For such a policy, let
\begin{equation}
    p_t^\pi(\;\cdot \mid s)
    \defeq
    \frac{\diff\pi_t(\;\cdot \mid s)}{\diff\lambda}
\end{equation}
denote a fixed version of its action density. For the set of trajectories $(\cS\times\cA)^T$, we define the reference measure
\begin{equation}
    \lambdatraj(\diff\tau)
    =
    \bP_0(\diff s_1)
    \prod_{t=1}^{T-1}
    \bP_t(\diff s_{t+1} \mid s_t,a_t)
    \prod_{t=1}^T
    \lambda(\diff a_t).
\end{equation}
Then, for every $\pi\ll\lambda$, the trajectory law $\bP^{\pi}$ admits the
density
\begin{equation}
    \ptraj^\pi(\tau)
    =
    \frac{\diff\bP^{\pi}}{\diff\lambdatraj}(\tau)
    =
    \prod_{t=1}^T p_t^\pi(a_t \mid s_t).
\end{equation}

\subsection{Optimality}\label{app:sec:optimality}
\paragraph{Bellman Optimality Equations}
Recall that for a reference measure $\lambda$ on $\cA$, with $0 < \lambda(\cA) < \infty$, the entropy
$H:\Delta_{\cA}\to [-\infty, \infty]$ is defined as
\begin{equation}
    H(P) = \begin{cases}
        - \int \log \br{\diff P/ \diff \lambda} \diff P, \quad &P \ll \lambda,\\
        - \infty, \quad & \text{otherwise}.
    \end{cases}
\end{equation}

By the Bellman principle of optimality, the optimal value and Q-function satisfy the Bellman optimality equations (see \eg, \citep{geist2019theory})
\begin{equation}\label{eq:bellman_opt_equation}
    V^{\star}_{t,r}(s) = \max_{P\in\Delta_{\cA}} \ip*{Q^{\star}_{t,r}(s,\cdot),\, P} + \beta\,H(P),
    \qquad
    Q^{\star}_{t,r}(s,a) = r_t(s,a) + \br*{\bP_t V^{\star}_{t+1,r}}(s,a),
\end{equation}
with terminal condition $V^{\star}_{T+1,r}= 0$. In particular, the following classical result yields a closed form for the optimal value and policy.
\begin{lemma}[\citealp{donsker1976asymptotic}]\label{lem:donsker}
    For $f\in B_b(\cA)$ and $\beta>0$,
    \begin{equation}
        \beta\log\int_{\cA} e^{\beta^{-1}f(a)}\diff\lambda(a) = \max_{P\in\Delta_{\cA}} \ip*{f, P} + \beta\,H(P),
    \end{equation}
    with maximum attained uniquely at the Gibbs measure $P_{\star}$ with density $\diff P_{\star}/\diff\lambda \propto e^{\beta^{-1}f}$.
\end{lemma}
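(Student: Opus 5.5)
The plan is to rewrite the objective as a constant minus a KL divergence; the lemma then reduces to nonnegativity of relative entropy. Set $Z \defeq \int_{\cA} e^{\beta^{-1}f}\diff\lambda$. Since $f$ is bounded, say $\abs{f}\le M$, and $0<\lambda(\cA)<\infty$, we have $0 < e^{-M/\beta}\lambda(\cA)\le Z\le e^{M/\beta}\lambda(\cA)<\infty$. So $\beta\log Z$ is finite, and the Gibbs measure $P_\star$ with density $p_\star \defeq e^{\beta^{-1}f}/Z$ is a well-defined probability measure. Because $p_\star$ is bounded above and below by positive constants, $P_\star$ and $\lambda$ are mutually absolutely continuous.

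Next I would dispose of the trivial case. If $P\not\ll\lambda$, then $H(P)=-\infty$ by definition, while $\ip{f,P}$ is finite because $f$ is bounded. Hence the objective is $-\infty$ and $P$ cannot be a maximizer.

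For $P\ll\lambda$ with density $p$, I first check that $H(P)$ is well defined in $[-\infty,\infty)$. Using $x\log x\ge -1/e$, the negative part satisfies $\int p(\log p)^-\diff\lambda\le\lambda(\cA)/e<\infty$. So $\int p\log p\diff\lambda\in(-\infty,+\infty]$, which gives $H(P)\in[-\infty,\infty)$; Jensen in fact gives $H(P)\le\log\lambda(\cA)$.

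The key computation uses $\ip{f,P}=\int p\,\beta\log(Zp_\star)\diff\lambda$, which yields
\begin{equation}
    \ip{f,P}+\beta H(P) = \beta\log Z + \beta\int p\log\frac{p_\star}{p}\diff\lambda = \beta\log Z - \beta\,\DKL(P,P_\star).
\end{equation}
The identity is valid in the extended sense. The term $\int p\log p_\star\diff\lambda$ is finite because $\log p_\star$ is bounded, so splitting the integral is legitimate. Both sides are then simultaneously $-\infty$ exactly when $\int p\log p\diff\lambda=+\infty$. Here $p/p_\star$ is the density $\diff P/\diff P_\star$, which exists since $P\ll\lambda\ll P_\star$. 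Finally, $\DKL(P,P_\star)\ge0$, with equality iff $P=P_\star$, by Gibbs' inequality (strict Jensen applied to the convex function $x\log x$). This gives the value $\beta\log Z$ and shows the maximum is attained uniquely at $P_\star$.

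The only real obstacle is the measure-theoretic bookkeeping in the third step. One must ensure that no $\infty-\infty$ expression arises when splitting $\int p\log(p_\star/p)$. The plan handles this through the bounds $\log p_\star\in[-2M/\beta-\log\lambda(\cA),\,2M/\beta-\log\lambda(\cA)]$ (up to constants) and the uniform lower bound on $x\log x$. With those two bounds in hand, the rest is routine.
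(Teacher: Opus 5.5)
Your proof is correct and complete. The identity $\ip{f,P}+\beta H(P)=\beta\log Z-\beta\DKL(P,P_\star)$ holds in the extended sense. You justify it with the two bounds you give: $\log p_\star$ is bounded, and $x\log x\ge -1/e$ on a finite measure. Gibbs' inequality then yields both the value $\beta\log Z$ and uniqueness of the maximizer, and you handle the case $P\not\ll\lambda$ separately.

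The paper does not prove this lemma; it only cites \citet{donsker1976asymptotic}. Your self-contained argument is the standard one. It is also the one-step version of the identity $\Jstar(r)-J(r,\pi)=\beta\DKL(\bP^{\pi},\bP^{\pistar_r})$ that the paper later derives at the trajectory level in Corollary~\ref{cor:soft_suboptimality}.
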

In light of Lemma~\ref{lem:donsker}, the optimal value has the closed form,
\begin{equation}\label{eq:explicit_opt_value}
    V^\star_{t,r}(s)
    =
    \beta
    \log
    \int_{\cA}
    e^{\beta^{-1}Q^\star_{t,r}(s,a)}
    \,\lambda(\diff a),
\end{equation}
and the corresponding optimal policy,
\begin{equation}\label{eq:explicit_opt_policy}
    \pistar_{t,r}(\diff a \mid s) = \pstar_{t,r}(a \mid s) \lambda(\diff a), \quad \text{with} \quad \pstar_{t,r}(a \mid s) = e^{\beta^{-1}\br*{Q^{\star}_{t,r}(s, a) - V^{\star}_{t,r}(s)}}.
\end{equation}
We call $\pistar_r$ the Bellman-optimal, or soft-optimal, policy for
$r$, and we use the density convention $p_t^{\pistar_r}=p_{t,r}^\star$. Since $r$
is bounded and $\lambda(\cA)<\infty$, backward induction shows that
$V_{t,r}^\star$ and $Q_{t,r}^\star$ are bounded and measurable.
Consequently, the Gibbs formula defines a stochastic kernel.

\begin{remark}[Unregularized setting]
    For $\beta=0$ the Donsker--Varadhan duality is unavailable, and existence of a measurable optimal policy requires standard measurable selection assumptions, \eg, $\cA$ compact, $r_t(s,\cdot)$ upper semicontinuous, and $a\mapsto\bP_t(\;\cdot \mid s,a)$ weakly continuous \citep{hernandez2012discrete}.
\end{remark}

\paragraph{Null Sets of Soft-Optimal Trajectory Laws}

We next record the simple but useful fact that all soft-optimal trajectory laws have the same null sets. Recall that two measures $P$ and $Q$ are equivalent, denoted by $P\sim Q$, if $P\ll Q$ and $Q\ll P$. The next result shows that every soft-optimal trajectory law is equivalent to the trajectory reference measure $\lambdatraj$.

\begin{proposition}
\label{prop:common_support}
Let $\beta>0$. For every $r\in B_b^T(\cSA)$,
\begin{equation}
    \bP^{\pistar_r}
    \sim
    \lambdatraj.
\end{equation}
Consequently, we have $\bP^{\pistar_r}\sim\bP^{\pistar_{r'}}$ for any
$r,r'\in B_b^T(\cSA)$.
\end{proposition}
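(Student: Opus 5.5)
The plan is to use the explicit trajectory density from the preceding subsection. Since $\pistar_r\ll\lambda$ with densities given by the Gibbs formula \eqref{eq:explicit_opt_policy}, the trajectory law has density
\begin{equation}
    \frac{\diff\bP^{\pistar_r}}{\diff\lambdatraj}(\tau)=\prod_{t=1}^T \pstar_{t,r}(a_t\mid s_t)=\exp\br*{\beta^{-1}\sum_{t=1}^T\br*{Q^\star_{t,r}(s_t,a_t)-V^\star_{t,r}(s_t)}}.
\end{equation}
This immediately gives $\bP^{\pistar_r}\ll\lambdatraj$. For the reverse direction, I will show that this density is bounded away from zero, in fact bounded above and below by positive constants, uniformly in $\tau$.

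The key step is the boundedness of $Q^\star_{t,r}$ and $V^\star_{t,r}$. This was already noted after \eqref{eq:explicit_opt_policy}, but I would make it quantitative by backward induction. Set $V^\star_{T+1,r}=0$. If $\abs{V^\star_{t+1,r}}\le M_{t+1}$, then $\abs{Q^\star_{t,r}}\le \norm{r_t}_\infty+M_{t+1}$. By \eqref{eq:explicit_opt_value}, $V^\star_{t,r}(s)=\beta\log\int e^{\beta^{-1}Q^\star_{t,r}(s,a)}\lambda(\diff a)$ then lies between $-\norm{r_t}_\infty-M_{t+1}+\beta\log\lambda(\cA)$ and $\norm{r_t}_\infty+M_{t+1}+\beta\log\lambda(\cA)$. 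This is finite because $0<\lambda(\cA)<\infty$. Consequently each $\pstar_{t,r}(a\mid s)$ lies in $[e^{-c_t},e^{c_t}]$ for a finite constant $c_t$, and the trajectory density lies in $[e^{-C},e^{C}]$ with $C=\sum_t c_t<\infty$.

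With these bounds in hand, the lower bound gives, for any measurable set $E$ of trajectories, $\bP^{\pistar_r}(E)\ge e^{-C}\lambdatraj(E)$. So $\bP^{\pistar_r}(E)=0$ forces $\lambdatraj(E)=0$, which is $\lambdatraj\ll\bP^{\pistar_r}$. The consequence follows by transitivity of equivalence of measures: $\bP^{\pistar_r}\sim\lambdatraj\sim\bP^{\pistar_{r'}}$.

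The only mildly delicate point is the density representation of $\bP^{\pistar_r}$ with respect to $\lambdatraj$. This follows from the factorization of $\bP^\pi$ and Fubini/Ionescu-Tulcea, since substituting $\pi_t(\diff a_t\mid s_t)=p_t(a_t\mid s_t)\lambda(\diff a_t)$ into the product kernel gives the stated density. The paper records this just before the statement, so I take it as given. Measurability of the density comes from measurability of $Q^\star,V^\star$, which was established alongside their boundedness.
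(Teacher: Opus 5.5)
Your proposal is correct and takes essentially the same route as the paper. Like the paper, you factor the trajectory density as $\prod_{t} \pstar_{t,r}(a_t\mid s_t)$, use finiteness of $Q^\star_{t,r}$ and $V^\star_{t,r}$ in the Gibbs formula to get a strictly positive finite density, and conclude by transitivity; your uniform two-sided bound $e^{-C}\le \diff\bP^{\pistar_r}/\diff\lambdatraj\le e^{C}$ is a slightly stronger, quantitative version of the paper's pointwise positivity, which on its own already suffices for $\lambdatraj\ll\bP^{\pistar_r}$.
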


\begin{proof}
By the trajectory-density factorization,
\begin{equation}
    \ptraj^{\pistar_r}(\tau)
    =
    \prod_{t=1}^T
    p_{t,r}^\star(a_t \mid s_t).
\end{equation}
Since $Q_{t,r}^\star$ and $V_{t,r}^\star$ are finite, the Gibbs formula
implies
\begin{equation}
    0
    <
    p_{t,r}^\star(a \mid s)
    <
    \infty,
\end{equation}
for every $t,s,a$. Hence,
$0<\ptraj^{\pistar_r}(\tau)<\infty$, which proves
$\bP^{\pistar_r}\sim\lambdatraj$. The final claim follows by
transitivity.
\end{proof}

In view of Proposition~\ref{prop:common_support}, we write
$\bP^{\pistar}$-a.s.\ for a property that holds under any, and hence
every, soft-optimal trajectory law. Equivalently, such a property
holds $\lambdatraj$-a.e.

\paragraph{Expected Value Optimality}\label{app:sec:expected_value_optimality}
The Bellman optimal policy $\pistar_r$ is defined as the unique (if $\beta>0$) policy $\pi$ that maximizes the value $V^{\pi}_1(s)$ for every initial state $s$. This is a stronger requirement than optimality for the expected value $J(r,\pi) = \bP_0 V^{\pi}_1$ under a fixed initial distribution $\bP_0$. We denote the set of expected value optimal policies as
\begin{equation}
    \Pi^\star_{\mathsf{EV}}(r)
    \defeq
    \left\{
        \pi:
        J(r,\pi)=J^\star(r)
    \right\}.
\end{equation}
The following result shows that expected-value optimal policies are unique up to $\bP^{\pistar}$-null sets and, in particular, induce a unique optimal trajectory law.
\begin{proposition}
\label{prop:bellman_value_optimality}
Let $\beta>0$ and $r\in B_b^T(\cS\times\cA)$. Then, the following are equivalent:
\begin{enumerate}
    \item $\pi\in\Pi^\star_{\mathsf{EV}}(r)$.
    \item $\bP^{\pi} = \bP^{\pistar_r}$.
    \item $\pi_t(\;\cdot \mid s_t)=\pistar_{t,r}(\;\cdot \mid s_t), \, \bP^{\pistar}\text{-a.s.},\, \forall t.$
    \item $p^{\pi}_t(a_t \mid s_t)=\pstar_{t,r}(a_t \mid s_t), \, \bP^{\pistar}\text{-a.s.},\, \forall t.$
\end{enumerate}
\end{proposition}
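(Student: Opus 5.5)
The plan is to prove the cycle of implications $1\Rightarrow 4\Rightarrow 3\Rightarrow 2\Rightarrow 1$, using one exact identity for the suboptimality $\Jstar(r)-J(r,\pi)$.

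\textbf{Suboptimality identity.} By the performance-difference identity for the soft Bellman equations~\eqref{eq:bellman_opt_equation}, together with Lemma~\ref{lem:donsker}, any policy $\pi$ satisfies
\begin{equation}
    \Jstar(r) - J(r,\pi) = \beta\sum_{t=1}^T \E^{\pi}\bs*{\DKL\br*{\pi_t(\;\cdot \mid s_t),\, \pistar_{t,r}(\;\cdot \mid s_t)}}.
\end{equation}
To derive it, I expand the telescoping sum $V^{\star}_{1,r}(s_1)-\sum_t\br{r_t+\beta H(\pi_t)}$ along trajectories of $\bP^{\pi}$. At each $t$, the Donsker--Varadhan gap satisfies
\begin{equation}
    V^\star_{t,r}(s) - \ip{Q^\star_{t,r}(s,\cdot),P} - \beta H(P) = \beta\DKL\br{P,\pistar_{t,r}(\;\cdot \mid s)},
\end{equation}
which is a direct computation from~\eqref{eq:explicit_opt_value}--\eqref{eq:explicit_opt_policy}. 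The conditional expectation of $V^\star_{t+1,r}(s_{t+1})$ given $(s_t,a_t)$ equals $\bP_tV^\star_{t+1,r}$. Boundedness of $r$, and hence of $Q^\star$ and $V^\star$, makes every term integrable whenever $H(\pi)>-\infty$. If $H(\pi)=-\infty$, then $J(r,\pi)=-\infty$, so $\pi\notin\Pi^\star_{\mathsf{EV}}(r)$; the KL side is also infinite, so the identity holds trivially.

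\textbf{$1\Rightarrow 4$.} If $\pi\in\Pi^\star_{\mathsf{EV}}(r)$, the identity forces $\pi_t(\;\cdot \mid s_t)=\pistar_{t,r}(\;\cdot \mid s_t)$ for $\bP^{\pi}$-a.e.\ $s_t$ and every $t$. In particular $\pi_t\ll\lambda$ a.s., so the densities $p^\pi_t$ make sense. The subtle step, and I expect the main obstacle, is converting ``$\bP^{\pi}$-a.s.'' into ``$\bP^{\pistar}$-a.s.''. I will do this by forward induction on $t$. The marginal of $s_1$ is $\bP_0$ under both laws. Suppose the state-action marginals of $\bP^{\pi}$ and $\bP^{\pistar_r}$ agree up to time $t-1$. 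Then the law of $s_t$ agrees as well, since it is obtained by pushing the same $(s_{t-1},a_{t-1})$ law through $\bP_{t-1}$. Now $\pi_t=\pistar_{t,r}$ holds a.e.\ under this common law of $s_t$, so the joint law of $(s_t,a_t)$ agrees. This gives $\pi_t(\;\cdot \mid s_t)=\pistar_{t,r}(\;\cdot \mid s_t)$ for $\bP^{\pistar}$-a.e.\ $s_t$. Equality of measures that are absolutely continuous with respect to $\lambda$ gives equality of densities $\lambda$-a.e.\ in $a$. Since $\pistar_{t,r}(\;\cdot \mid s)\sim\lambda$, this is the same as $\pistar_{t,r}(\;\cdot \mid s)$-a.e.\ in $a$, and combining with the previous statement yields 4 $\bP^{\pistar}$-a.s. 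Joint measurability of the exceptional sets follows from the measurability of the chosen density versions.

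\textbf{$4\Rightarrow 3\Rightarrow 2\Rightarrow 1$.} For $4\Rightarrow3$, integrate the densities against $\lambda$. The a.s.\ statement in $(s_t,a_t)$ becomes, by Fubini and the equivalence $\pistar_{t,r}(\;\cdot \mid s)\sim\lambda$, equality of kernels for a.e.\ $s_t$. For $3\Rightarrow2$, plug the kernel equality into the factorization of $\bP^{\pi}$ and use the same forward induction: the kernels agree on a set of full measure for the common marginal at each step. For $2\Rightarrow1$, $J(r,\pi)$ depends on $\pi$ only through $\bP^{\pi}$. Indeed, $\ip{r,\mu^\pi}$ is determined by the marginals. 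The entropy term is $-\E\bs{\log \ptraj^\pi}$, which is determined by $\bP^{\pi}$ via its density with respect to $\lambdatraj$; this also uses 3 to identify $p^\pi_t$ a.s. Hence $J(r,\pi)=J(r,\pistar_r)=\Jstar(r)$, so $\pi\in\Pi^\star_{\mathsf{EV}}(r)$.
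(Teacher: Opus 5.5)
There is a circular step in your cycle. The per-step suboptimality identity is correct. The forward induction on time-$t$ state marginals is also sound: it transfers ``$\bP^{\pi}$-a.s.'' to ``$\bP^{\pistar}$-a.s.'' in $1\Rightarrow4$ and builds up $\bP^{\pi}=\bP^{\pistar_r}$ in $3\Rightarrow2$. The problem is the closing step $2\Rightarrow1$. There you argue that the entropy term is determined by $\bP^{\pi}$ and add that ``this also uses 3''. In the cycle $1\Rightarrow4\Rightarrow3\Rightarrow2\Rightarrow1$, statement 3 is only ever derived from 1, so inside $2\Rightarrow1$ you may assume only 2, and invoking 3 is circular.

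Your per-step identity does not close the loop either. Concluding $\Jstar(r)=J(r,\pi)$ from it requires $\sum_{t}\E^{\pi}[\DKL(\pi_t(\cdot\mid s_t),\pistar_{t,r}(\cdot\mid s_t))]=0$, which again is statement 3. Writing $H(\pi)=-\E^{\pi}[\log\ptraj^{\pi}]$ does not avoid this. The factorization $\ptraj^{\pi}=\prod_t p^{\pi}_t$ presupposes $\pi_t(\cdot\mid s_t)\ll\lambda$ almost surely, and that must itself be deduced from 2.

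What is missing is an independent argument that equality of trajectory laws forces the policy kernels to agree, that is, a proof of $2\Rightarrow3$ (or directly $2\Rightarrow1$). The paper supplies two ingredients for this:
\begin{itemize}
\item the trajectory-level form $\Jstar(r)-J(r,\pi)=\beta\DKL(\bP^{\pi},\bP^{\pistar_r})$ from Corollary~\ref{cor:soft_suboptimality}, which gives $1\Leftrightarrow2$ at once;
\item the chain rule $\DKL(\bP^{\pistar_r},\bP^{\pi})=\sum_t\E^{\pistar_r}[\DKL(\pistar_{t,r}(\cdot\mid s_t),\pi_t(\cdot\mid s_t))]$, which vanishes under 2 and hence yields 3.
\end{itemize}
Either of these repairs your proof. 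So would a.s.\ uniqueness of the regular conditional law of $a_t$ given $s_t$, which is available since $\cA$ is standard Borel. Once you have $2\Rightarrow3$, your route to 1 through 3 and the per-step identity goes through. The remaining steps of your cycle can stay as they are.
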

\begin{proof}
    \emph{1. $\iff$ 2.}:
    By Corollary~\ref{cor:soft_suboptimality}, we have
    \begin{equation}
        J^\star(r) - J(r,\pi) = \beta \DKL\!\br*{\bP^{\pi}, \bP^{\pistar_r}}.
    \end{equation}
    Since $\DKL(P, Q)$ is nonnegative and equal to zero if and only if $P = Q$, this implies the equivalence of \emph{1.} and \emph{2.}

    \emph{2. $\iff$ 3.}:
    By the chain rule of relative entropy, we have
    \begin{equation}
        \DKL\!\br*{\bP^{\pistar_r}, \bP^{\pi}} = \sum_{t=1}^T\E^{\pistar_r}\bs*{\DKL(\pistar_{t,r}(\;\cdot \mid s_t), \pi_t(\;\cdot \mid s_t))},
    \end{equation}
    which by the same reasoning as above implies the equivalence of \emph{2.} and \emph{3.}

    \emph{3. $\iff$ 4.}: Suppose first that
    \begin{equation}
        \pi_t(\;\cdot \mid s_t)=\pistar_{t,r}(\;\cdot \mid s_t), \quad \bP^{\pistar}\text{-a.s.},\, \forall t.
    \end{equation}
    By $\lambda$-a.e. uniqueness of densities,
    \begin{equation}
        p_t^\pi(\;\cdot \mid s_t)
        =
        p^\star_{t,r}(\;\cdot \mid s_t),
        \quad
        \lambda\text{-a.e.},\ \bP^{\pistar}\text{-a.s.},\ \forall t.
    \end{equation}
    Since $\pistar_{t,r}(\;\cdot \mid s_t)\ll\lambda$, this equality also holds
    $\pistar_{t,r}(\;\cdot \mid s_t)$-a.e. Therefore,
    \begin{equation}
        p_t^\pi(a_t \mid s_t)
        =
        p^\star_{t,r}(a_t \mid s_t),
        \quad
        \bP^{\pistar}\text{-a.s.},\ \forall t .
    \end{equation}
    Thus \emph{3.} implies \emph{4.}

    Conversely, suppose that
    \begin{equation}
        p_t^\pi(a_t \mid s_t)
        =
        \pstar_{t,r}(a_t \mid s_t),
        \quad
        \bP^{\pistar}\text{-a.s.},\ \forall t .
    \end{equation}
    Then,
    \begin{equation}
        p_t^\pi(\;\cdot \mid s_t)
        =
        \pstar_{t,r}(\;\cdot \mid s_t),
        \quad
        \pistar_{t,r}(\;\cdot \mid s_t)\text{-a.e.},
        \ \bP^{\pistar}\text{-a.s.},\ \forall t .
    \end{equation}
    Since $\lambda\ll\pistar_{t,r}(\;\cdot \mid s_t)$, the equality
    also holds $\lambda$-a.e., implying that the two densities define the same measure,
    \begin{equation}
        \pi_t(\;\cdot \mid s_t)
        =
        \pistar_{t,r}(\;\cdot \mid s_t),
        \quad
        \bP^{\pistar}\text{-a.s.},\ \forall t .
    \end{equation}
    Thus \emph{4.} implies \emph{3.}
\end{proof}

\subsection{Return Decomposition}\label{app:sec:return_decomp}

In this subsection, we decompose the centered regularized return into martingale
differences associated with action and transition randomness. Fix
$\beta\geq0$, a reward $r$, and a policy $\pi$. When $\beta>0$, we assume
$\pi\ll\lambda$ and when $\beta=0$, we omit all logarithmic terms below.

For a trajectory $\tau=(s_1,a_1,\ldots,s_T,a_T)$, define the regularized return and advantage function as
\begin{align}
    G_{r}^{\pi}(\tau)
    &\defeq
    \sum_{t=1}^T \br*{ r_t(s_t, a_t) - \beta \log p^{\pi}_t(a_t \mid s_t)}, \\
    A^{\pi}_{t,r}(s, a) &\defeq Q^{\pi}_{t,r}(s, a) - V^{\pi}_{t,r}(s) - \beta \log p^{\pi}_t (a \mid s),\\
    \delta_{t,r}^{\pi} (s_t, a_t, s_{t+1})
    &\defeq
    V_{t+1,r}^{\pi}(s_{t+1})
    -
    (\bP_t V_{t+1,r}^{\pi})(s_t,a_t),
    \qquad t=0,\dots,T-1,
\end{align}
where $s_0,a_0$ are dummy variables and $\bP_0(\;\cdot \mid s_0,a_0) = \bP_0$ is the initial state distribution.

\begin{lemma}\label{lem:return_decomp}
    Let either $\pi\ll \lambda$ or $\beta=0$. It holds that
\begin{equation}
    G_{r}^{\pi}(\tau) - J(r, \pi)
    =
    \sum_{t=1}^T A_{t, r}^{\pi}(s_t, a_t)
    +
    \sum_{t=0}^{T-1} \delta_{t, r}^{\pi} (s_t, a_t, s_{t+1}).
\end{equation}
Moreover, for any pair of policies $\pi, \pi'$,
\begin{equation}
    \E^{\pi}\bs*{A_{t, r}^{\pi}(s_t, a_t) \mid s_t} = 0,
    \qquad
    \E^{\pi'}\bs*{\delta_{t, r}^{\pi}(s_t, a_t, s_{t+1}) \mid s_t,a_t} = 0,
\end{equation}
and the family
\begin{equation}
    A_{1, r}^{\pi},\hdots,A_{T, r}^{\pi},\delta_{0, r}^{\pi'},\hdots,\delta_{T-1, r}^{\pi'}
\end{equation}
is pairwise orthogonal in $L^2(\bP^{\pi})$.
\end{lemma}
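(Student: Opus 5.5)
The plan is to prove the identity by a telescoping argument on the policy value functions, and then to read off the martingale-difference properties directly from the definitions. Throughout, I use the policy Bellman equations
\begin{equation}
    V^{\pi}_{t,r}(s) = \E_{a\sim\pi_t(\cdot\mid s)}\bs*{Q^{\pi}_{t,r}(s,a) - \beta\log p^{\pi}_t(a\mid s)}, \qquad Q^{\pi}_{t,r}(s,a) = r_t(s,a) + (\bP_t V^{\pi}_{t+1,r})(s,a),
\end{equation}
with $V^{\pi}_{T+1,r}=0$. These follow from the definition of $V^{\pi}_{t,r}$ as a conditional expectation and the Markov factorization of $\bP^{\pi}$. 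For $\beta>0$ and $\pi\ll\lambda$, we have $H(\pi_t(\cdot\mid s))=-\E_{a\sim\pi_t}\bs{\log p^\pi_t(a\mid s)}$. I will assume these quantities are finite, or at least integrable; this is the implicit standing assumption needed for $J(r,\pi)$ to be meaningful.

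\emph{Identity.} By the definition of $A^{\pi}_{t,r}$ and the $Q$-equation, for each $t$,
\begin{equation}
    r_t(s_t,a_t) - \beta\log p^\pi_t(a_t\mid s_t) = A^{\pi}_{t,r}(s_t,a_t) + V^{\pi}_{t,r}(s_t) - (\bP_tV^{\pi}_{t+1,r})(s_t,a_t).
\end{equation}
Next, I write $-(\bP_tV^{\pi}_{t+1,r})(s_t,a_t) = \delta^{\pi}_{t,r}(s_t,a_t,s_{t+1}) - V^{\pi}_{t+1,r}(s_{t+1})$ for $t\le T-1$. At $t=T$ this term vanishes because $V^\pi_{T+1,r}=0$. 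Summing over $t$, the terms $V^{\pi}_{t,r}(s_t)-V^\pi_{t+1,r}(s_{t+1})$ telescope to $V^{\pi}_{1,r}(s_1)$. Finally, $V^{\pi}_{1,r}(s_1) = J(r,\pi) + \delta^{\pi}_{0,r}(s_0,a_0,s_1)$ by the convention $(\bP_0V^{\pi}_{1,r})=\bP_0V^\pi_{1,r}=J(r,\pi)$. Collecting terms gives the claimed decomposition. When $\beta=0$, the same computation goes through without the log terms.

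\emph{Zero conditional means.} Conditionally on $s_t$, we have $a_t\sim\pi_t(\cdot\mid s_t)$ under $\bP^{\pi}$, so $\E^{\pi}[A^\pi_{t,r}\mid s_t]=0$ is exactly the $V$-equation. For $\delta$, under any $\pi'$ the conditional law of $s_{t+1}$ given $(s_t,a_t)$ is $\bP_t(\cdot\mid s_t,a_t)$, which does not depend on the policy. Hence $\E^{\pi'}[V^\pi_{t+1,r}(s_{t+1})\mid s_t,a_t]=(\bP_tV^\pi_{t+1,r})(s_t,a_t)$. For $t=0$ the same holds because $s_1\sim\bP_0$.

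\emph{Orthogonality.} This is the step needing the most care, though it is standard. Order the variables along the filtration $s_1,a_1,s_2,\dots$. Each element of the family is measurable with respect to the history up to its last argument, and it has zero conditional mean given the history just before that last variable. For $A_t$, the conditioning is on the history up to $s_t$; by the Markov property this reduces to conditioning on $s_t$. For $\delta_t$, the conditioning is on the history up to $a_t$, which reduces to conditioning on $(s_t,a_t)$. The elements therefore correspond to strictly increasing positions in this ordering, and for any two distinct elements $X$ (earlier) and $Y$ (later), the tower property gives $\E^{\pi}[XY]=\E^{\pi}[X\,\E^{\pi}[Y\mid\text{past}]]=0$.

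The main technical point is square-integrability. The terms $\delta$ are bounded, since $V^{\pi}$ is bounded whenever the entropy terms are; the log-density terms in $A_t$, however, need $L^2$ assumptions. I would state that orthogonality is asserted under $A_t\in L^2(\bP^\pi)$, which holds automatically for soft-optimal policies, where the log-densities are bounded. Note also that the $\delta$ terms computed with $\pi'$ still have zero conditional mean under $\bP^{\pi}$, which is why the mixed family is orthogonal.
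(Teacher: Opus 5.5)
Your proposal is correct and follows the paper's proof essentially step for step. The shared route is the pointwise identity $r_t-\beta\log p^\pi_t = A^\pi_{t,r}+V^\pi_{t,r}-V^\pi_{t+1,r}+\delta^\pi_{t,r}$, telescoping with $\delta^\pi_{0,r}=V^\pi_{1,r}-J(r,\pi)$, and the tower property along the filtration $s_1,a_1,s_2,\dots$ for the mean-zero and orthogonality claims. Your explicit square-integrability caveat for the log-density terms in $A^\pi_{t,r}$ is a detail the paper leaves implicit; it is harmless, and the condition holds automatically when $\beta=0$ or $\pi$ is soft-optimal.
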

For the unregularized return, the above decomposition is presented, for example, by
\citet{pan2024skill}. More generally, it is a standard decomposition of the centered return into martingale differences with respect to the natural filtration of the trajectory \citep[see Section 3.1]{boucheron2013}. The proof follows from a simple telescoping argument.
\begin{proof}
From the definition of the advantage it follows that
\begin{align}
    r_t - \beta \log p^{\pi}_t
    &=
    A_{t, r}^{\pi}
    +
    V_{t,r}^{\pi}
    -
    \bP_t V_{t+1,r}^{\pi}.
\end{align}
Adding and subtracting $V_{t+1,r}^{\pi}$ gives
\begin{equation}
    r_t - \beta \log p^{\pi}_t
    =
    A_{t, r}^{\pi}
    +
    V_{t,r}^{\pi}
    -
    V_{t+1,r}^{\pi}
    +
    \delta_{t, r}^{\pi}.
\end{equation}
Summing over $t=1,\dots,T$ telescopes to
\begin{equation}
    \sum_{t=1}^T \br*{r_t - \beta \log p^{\pi}_t}
    =
    V_{1,r}^{\pi}
    +
    \sum_{t=1}^T A_{t, r}^{\pi}
    +
    \sum_{t=1}^{T-1} \delta_{t, r}^{\pi}.
\end{equation}
Since
\begin{equation}
    \delta_0^{\pi}
    =
    V_{1,r}^{\pi} - J(r, \pi),
\end{equation}
this proves the decomposition.

The conditional mean-zero identities follow directly from the definitions, and pairwise orthogonality is then immediate from the tower property.
\end{proof}

Next, we list three direct consequences of Lemma~\ref{lem:return_decomp}. In particular, the first two corollaries are the regularized performance difference \citep{kakade2002approximately} and soft suboptimality \citep{mei2020global}, while the third shows that the return variance decomposes into an action and dynamics variance term.

\begin{corollary}\label{cor:performance_difference}
For any two policies $\pi, \pi'$,
\begin{equation}
    J(r, \pi) - J^{0}(r, \pi')
    =
    - \sum_{t=1}^T \E^{\pi'}\bs*{A_{t, r}^{\pi}}
    - \beta \sum_{t=1}^T \E^{\pi'}\bs*{\log p^{\pi}_t}.
\end{equation}
\end{corollary}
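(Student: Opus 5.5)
The plan is to obtain the identity directly from Lemma~\ref{lem:return_decomp}, applied with the policy $\pi$ but with the trajectory drawn from $\pi'$. The key point is that the transition residuals $\delta^{\pi}_{t,r}$ have conditional mean zero under $\E^{\pi'}$ for an \emph{arbitrary} policy $\pi'$, not only under $\pi$. This is exactly the second mean-zero identity in Lemma~\ref{lem:return_decomp}.

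First, I write the pathwise decomposition from Lemma~\ref{lem:return_decomp} for every trajectory $\tau$:
\begin{equation}
    \sum_{t=1}^T \br*{r_t(s_t,a_t) - \beta\log p^{\pi}_t(a_t \mid s_t)} - J(r,\pi)
    = \sum_{t=1}^T A^{\pi}_{t,r}(s_t,a_t) + \sum_{t=0}^{T-1}\delta^{\pi}_{t,r}(s_t,a_t,s_{t+1}).
\end{equation}
Second, I take expectations under $\bP^{\pi'}$ on both sides. By the tower property and $\E^{\pi'}\bs{\delta^{\pi}_{t,r} \mid s_t,a_t}=0$, every $\delta$-term vanishes. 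The time-$0$ term also vanishes, because $s_1\sim\bP_0$ under every policy, so $\E^{\pi'}\bs{V^{\pi}_{1,r}(s_1)}=J(r,\pi)$. For the left-hand side, the reward part has expectation $\sum_t \E^{\pi'}\bs{r_t(s_t,a_t)} = J^{0}(r,\pi')$. What remains is
\begin{equation}
    J^{0}(r,\pi') - \beta\sum_{t=1}^T \E^{\pi'}\bs*{\log p^{\pi}_t} - J(r,\pi) = \sum_{t=1}^T \E^{\pi'}\bs*{A^{\pi}_{t,r}},
\end{equation}
and rearranging gives the claim.

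The only step needing care is integrability, so that expectations can be split termwise. The rewards and the values $V^{\pi}_{t,r}$, $\bP_t V^{\pi}_{t+1,r}$ are bounded whenever $H(\pi)$ is finite. The $\delta$-terms are therefore integrable, and the only possibly non-integrable quantities are $\log p^{\pi}_t(a_t \mid s_t)$ under $\bP^{\pi'}$. These appear identically inside $A^{\pi}_{t,r}$ and in the explicit $\beta\log p^{\pi}_t$ term. I would therefore state the identity under the proviso that $\E^{\pi'}\bs{\abs{\log p^{\pi}_t}}<\infty$, which holds for instance when $p^{\pi}$ has bounded log-density, as for $\pi=\pistar_r$. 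Alternatively, one can move the log terms together before integrating, since $A^{\pi}_{t,r}+\beta\log p^{\pi}_t = Q^{\pi}_{t,r}-V^{\pi}_{t,r}$ is bounded. In that form the right-hand side equals $-\sum_t\E^{\pi'}\bs{Q^{\pi}_{t,r}-V^{\pi}_{t,r}}$ and is always well defined. I expect this bookkeeping to be the main (minor) obstacle; the algebra itself is immediate.
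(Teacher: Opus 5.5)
Your proof is correct and follows the route the paper intends: the paper gives no separate proof and presents this corollary as a direct consequence of Lemma~\ref{lem:return_decomp}, obtained by taking $\E^{\pi'}$ of the pathwise decomposition and using that the residuals $\delta^{\pi}_{t,r}$ have conditional mean zero under any $\pi'$. One minor caveat in your integrability aside: finiteness of $H(\pi)$ alone does not make the regularized values $V^{\pi}_{t,r}$ (or $Q^{\pi}_{t,r}-V^{\pi}_{t,r}$) bounded, because $H(\pi_t(\cdot \mid s))$ may be unbounded below in $s$, so you need an extra condition such as uniformly bounded log-densities (which holds for $\pi=\pistar_r$); the paper leaves such integrability issues implicit anyway.
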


\begin{corollary}\label{cor:soft_suboptimality}
    For any policy $\pi$ and reward $r$, we have
    \begin{equation}
        \Jstar(r) - J(r, \pi)
        =
        \beta \DKL\br*{\bP^{\pi}, \bP^{\pistar_r}} = \beta \sum_{t=1}^T \E^{\pi}\bs*{\DKL\br*{\pi_t(\;\cdot \mid s_t), \pistar_{t, r}(\;\cdot \mid s_t)}}.
    \end{equation}
\end{corollary}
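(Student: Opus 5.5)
We obtain Corollary~\ref{cor:soft_suboptimality} directly from the return decomposition of Lemma~\ref{lem:return_decomp}, applied with the soft-optimal policy $\pistar_r$ as the reference policy and with expectations taken under the trajectory law of an arbitrary $\pi$.

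First, we reduce to $\pi\ll\lambda$. If $\pi_t(\cdot\mid s_t)\not\ll\lambda$ on a set of positive $\bP^{\pi}$-probability, then $H(\pi)=-\infty$, so $J(r,\pi)=-\infty$. In that case $\bP^{\pi}\not\ll\bP^{\pistar_r}$, because $\bP^{\pistar_r}\sim\lambdatraj$ by Proposition~\ref{prop:common_support}. Both sides of the identity then equal $+\infty$ under the usual conventions, and the per-step KL on the right is also infinite with positive probability.

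Now assume $\pi\ll\lambda$. We apply Lemma~\ref{lem:return_decomp} with policy $\pistar_r$. The key observation is that, by the Gibbs form \eqref{eq:explicit_opt_policy},
\begin{equation}
    \beta\log\pstar_{t,r}(a\mid s)=Q^{\star}_{t,r}(s,a)-V^{\star}_{t,r}(s).
\end{equation}
Hence $A^{\pistar_r}_{t,r}\equiv0$, where we identify $Q^{\pistar_r}_{t,r}=Q^\star_{t,r}$ and $V^{\pistar_r}_{t,r}=V^\star_{t,r}$. The decomposition therefore reads, pointwise in $\tau$,
\begin{equation}
    \sum_{t=1}^T\br*{r_t(s_t,a_t)-\beta\log\pstar_{t,r}(a_t\mid s_t)}-\Jstar(r)=\sum_{t=0}^{T-1}\delta^{\star}_{t,r}(s_t,a_t,s_{t+1}).
\end{equation}
We take expectations under $\bP^{\pi}$. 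The second mean-zero statement of Lemma~\ref{lem:return_decomp} holds for any policy $\pi'$, here $\pi'=\pi$. Together with boundedness of $V^\star$, this kills the $\delta$ terms, giving
\begin{equation}
    \ip{r,\mu^\pi}-\beta\sum_{t=1}^T\E^{\pi}\bs*{\log\pstar_{t,r}(a_t\mid s_t)}=\Jstar(r).
\end{equation}
Subtracting $J(r,\pi)=\ip{r,\mu^\pi}-\beta\sum_t\E^{\pi}\bs*{\log p^{\pi}_t(a_t\mid s_t)}$ yields
\begin{equation}
    \Jstar(r)-J(r,\pi)=\beta\sum_{t=1}^T\E^{\pi}\bs*{\log\frac{p^\pi_t(a_t\mid s_t)}{\pstar_{t,r}(a_t\mid s_t)}}.
\end{equation}
Conditioning on $s_t$ turns each summand into $\E^{\pi}\bs{\DKL(\pi_t(\cdot\mid s_t),\pistar_{t,r}(\cdot\mid s_t))}$, which is the second expression in the corollary. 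Since the trajectory densities factor as $\ptraj^\pi=\prod_t p^\pi_t$ with respect to $\lambdatraj$, the same sum equals $\E^\pi\bs{\log(\ptraj^\pi/\ptraj^{\pistar_r})}=\DKL(\bP^\pi,\bP^{\pistar_r})$, which is the first expression.

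The main obstacle is integrability when $H(\pi)$ is finite or $-\infty$ in parts. The term $\log\pstar$ is bounded, since $Q^\star$ and $V^\star$ are bounded and $\pstar$ is bounded away from $0$ and $\infty$. Therefore the only possibly non-integrable piece is $\E^\pi\bs{\log p^\pi_t}$. Because $\lambda$ is finite, $-H(\pi_t(\cdot\mid s))\geq-\log\lambda(\cA)$, so the negative part of $\log p^\pi_t$ is integrable. Consequently every quantity is well defined in $(-\infty,+\infty]$, and the identities hold in the extended sense. Case $J=-\infty$ corresponds exactly to infinite KL.
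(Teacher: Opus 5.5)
Your proof is correct and follows the paper's route. The paper obtains this corollary as a direct consequence of the return decomposition (equivalently, the regularized performance-difference corollary with $\pistar_r$ as the reference policy and expectations taken under $\pi$), using $A^{\pistar_r}_{t,r}\equiv 0$ from the Gibbs form, the conditional mean-zero property of $\delta^{\star}_{t,r}$, and the KL chain rule. Your treatment of the non-absolutely-continuous case and of integrability fills in details the paper leaves implicit; note only that your case split should read ``$\pi_t(\cdot\mid s_t)\ll\lambda$ holds $\bP^{\pi}$-a.s.'', which reduces to $\pi\ll\lambda$ after modifying $\pi$ on a $\bP^{\pi}$-null set of states.
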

\begin{corollary}\label{cor:variance_return_decomp}
The variance of the (regularized) return decomposes as
\begin{equation}
    \V^\pi\bs*{G_r^{\pi}}
    =
    \underbrace{\sum_{t=1}^T \E^{\pi}\bs*{\br*{A_{t, r}^{\pi}}^2}}_{\text{action variance}}
    +
    \underbrace{\sum_{t=0}^{T-1} \E^{\pi}\bs*{\br*{\delta_{t, r}^{\pi}}^2}}_{\text{dynamics variance}}.
\end{equation}
\end{corollary}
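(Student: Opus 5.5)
The plan is to read the variance identity directly off the martingale-difference decomposition of Lemma~\ref{lem:return_decomp}, applied with $\pi' = \pi$. That lemma gives, for every trajectory,
\begin{equation}
    G_r^{\pi}(\tau) - J(r,\pi) = \sum_{t=1}^T A^{\pi}_{t,r}(s_t,a_t) + \sum_{t=0}^{T-1}\delta^{\pi}_{t,r}(s_t,a_t,s_{t+1}).
\end{equation}

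The first step is to identify $J(r,\pi)$ as the mean of $G_r^{\pi}$ under $\bP^{\pi}$. Each $A^{\pi}_{t,r}$ has zero conditional mean given $s_t$, and each $\delta^{\pi}_{t,r}$ has zero conditional mean given $(s_t,a_t)$. By the tower property every summand on the right has mean zero, so $\E^{\pi}[G_r^{\pi}] = J(r,\pi)$. Hence $\V^{\pi}[G_r^{\pi}] = \E^{\pi}\bigl[(G_r^{\pi}-J(r,\pi))^2\bigr]$.

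The second step is to expand the square of the right-hand side. It is the sum of the squared terms and twice the cross terms $\E^{\pi}[X Y]$, where $X \neq Y$ range over the family $A^{\pi}_{1,r},\dots,A^{\pi}_{T,r},\delta^{\pi}_{0,r},\dots,\delta^{\pi}_{T-1,r}$. The pairwise orthogonality in $L^2(\bP^{\pi})$ stated in Lemma~\ref{lem:return_decomp} (again with $\pi'=\pi$) kills all cross terms. This leaves exactly the action variance $\sum_t \E^{\pi}[(A^{\pi}_{t,r})^2]$ plus the dynamics variance $\sum_t \E^{\pi}[(\delta^{\pi}_{t,r})^2]$.

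The only delicate point, and the main obstacle, is integrability. The rewards are bounded, and hence so are the unregularized parts of the value functions. The terms $\beta\log p^{\pi}_t$, however, need not be square-integrable for an arbitrary $\pi\ll\lambda$. I would handle this in two cases.
\begin{enumerate}
    \item If all $A^{\pi}_{t,r}$ and $\delta^{\pi}_{t,r}$ lie in $L^2(\bP^{\pi})$ (for instance when the log-densities are bounded, as for soft-optimal policies), the expansion above is rigorous. The orthogonality is then justified by conditioning on the later of the two time indices: for $s<t$, $\E^{\pi}[A^{\pi}_{s,r}A^{\pi}_{t,r}] = \E^{\pi}\bigl[A^{\pi}_{s,r}\,\E^{\pi}[A^{\pi}_{t,r}\mid s_t,\dots]\bigr]=0$, using the Markov property, and similarly for mixed pairs.
    \item Otherwise, I would adopt the convention that both sides equal $+\infty$. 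The martingale-difference structure gives $\E^{\pi}[(\sum_k X_k)^2] = \sum_k \E^{\pi}[X_k^2]$ whenever either side is finite. So square-integrability of $G_r^{\pi}$ is equivalent to square-integrability of every summand, and the identity holds in $[0,\infty]$.
\end{enumerate}
With $\beta=0$ all log terms vanish, and the first case applies directly.
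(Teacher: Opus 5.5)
Your proposal is correct and follows the paper's argument, which states the corollary without proof as a direct consequence of Lemma~\ref{lem:return_decomp}: with $\pi'=\pi$, the summands have mean zero, so $\E^{\pi}[G_r^{\pi}]=J(r,\pi)$, and their pairwise orthogonality in $L^2(\bP^{\pi})$ removes all cross terms when you expand the square. Your integrability case analysis goes beyond the paper, which tacitly assumes every term is square-integrable (as it is, for example, for soft-optimal policies or when $\beta=0$).
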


While the terms $\delta_{t,r}^{\pi}$ do not affect expected values, Corollary~\ref{cor:variance_return_decomp} shows that they do contribute to the return variance.

\subsection{Derivatives of the Optimal Value}\label{app:sec:der_optimal_value}
In this subsection, we first derive directional derivatives of the regularized optimal value with
respect to the reward, and then specialize to linear reward
parametrizations, where we derive the first three derivatives of the optimal value with respect to reward parameters.

\paragraph{Directional Derivatives in Reward Space}

Fix $\beta>0$ and $r,h\in B_b^T(\cSA)$. For $\varepsilon\in\R$, let
\begin{equation}
    r^\varepsilon
    \defeq
    r+\varepsilon h,
    \qquad
    \pi^\varepsilon
    \defeq
    \pistar_{r^\varepsilon},
    \qquad
    V_t^\varepsilon
    \defeq
    V_{t,r^\varepsilon}^\star,
    \qquad
    Q_t^\varepsilon
    \defeq
    Q_{t,r^\varepsilon}^\star,
    \qquad
    J^\varepsilon
    \defeq
    \Jstar(r^\varepsilon).
\end{equation}
We write $\dot V_t^\varepsilon$, $\dot Q_t^\varepsilon$, and
$\dot J^\varepsilon$ for their derivatives with respect to
$\varepsilon$. They can be computed as follows.

\begin{lemma}
\label{lem:first_der}
For every $\varepsilon\in\R$ and $t=1,\ldots,T$,
\begin{equation}
    \dot V_t^\varepsilon
    =
    V_{t,h}^{\pi^\varepsilon,0},
    \qquad
    \dot Q_t^\varepsilon
    =
    Q_{t,h}^{\pi^\varepsilon,0},
    \qquad
    \dot J^\varepsilon
    =
    J^0(h,\pi^\varepsilon)
    =
    \ip*{h,\mu^{\pi^\varepsilon}}.
\end{equation}
\end{lemma}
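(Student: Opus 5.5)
I will prove Lemma~\ref{lem:first_der} by backward induction on $t$, differentiating the soft Bellman recursion \eqref{eq:explicit_opt_value}--\eqref{eq:bellman_opt_equation} in $\varepsilon$. The base case is $t=T+1$, where $V_{T+1}^\varepsilon=0$; hence $\dot V_{T+1}^\varepsilon=0=V_{T+1,h}^{\pi^\varepsilon,0}$. For the inductive step, suppose that $\dot V_{t+1}^\varepsilon=V_{t+1,h}^{\pi^\varepsilon,0}$, that this holds for every $\varepsilon$ and every state, and that the difference quotients are uniformly bounded. The bound is available by induction, since $\abs{V_{t+1}^{\varepsilon+\eta}-V_{t+1}^{\varepsilon}}\le \eta\sum_{k>t}\norm{h_k}_\infty$; this follows because the log-sum-exp map is $1$-Lipschitz in sup norm and $\bP_t$ is a contraction. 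From $Q_t^\varepsilon = r_t+\varepsilon h_t+\bP_t V_{t+1}^\varepsilon$, dominated convergence, applied to the integral against $\bP_t(\cdot\mid s,a)$, gives
\begin{equation}
    \dot Q_t^\varepsilon = h_t + \bP_t V_{t+1,h}^{\pi^\varepsilon,0} = Q_{t,h}^{\pi^\varepsilon,0}.
\end{equation}

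Next I differentiate $V_t^\varepsilon(s)=\beta\log\int e^{\beta^{-1}Q_t^\varepsilon(s,a)}\lambda(\diff a)$. Differentiating under the integral is justified by dominated convergence. Indeed, $Q_t^\varepsilon$ is bounded uniformly for $\varepsilon$ in compact sets, its difference quotients are bounded, the mean value theorem applies to $x\mapsto e^{\beta^{-1}x}$, and $\lambda(\cA)<\infty$. The chain rule then gives
\begin{equation}
    \dot V_t^\varepsilon(s)=\frac{\int \dot Q_t^\varepsilon(s,a)\,e^{\beta^{-1}Q_t^\varepsilon(s,a)}\lambda(\diff a)}{\int e^{\beta^{-1}Q_t^\varepsilon}\diff\lambda}=\int Q_{t,h}^{\pi^\varepsilon,0}(s,a)\,p^\star_{t,r^\varepsilon}(a\mid s)\lambda(\diff a)=V_{t,h}^{\pi^\varepsilon,0}(s),
\end{equation}
using the Gibbs form \eqref{eq:explicit_opt_policy} and the unregularized policy-evaluation identity $V^{\pi,0}_{t,h}(s)=\E_{a\sim\pi_t(\cdot\mid s)}[Q^{\pi,0}_{t,h}(s,a)]$. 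The factor $\beta$ in front of the logarithm cancels the $\beta^{-1}$ in the exponent, which is exactly why the derivative is the unregularized value of $h$. This is the envelope (Danskin) phenomenon: the entropy term contributes nothing to first order because $\pi^\varepsilon$ is the maximizer.

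Finally, $J^\varepsilon=\int V_1^\varepsilon\diff\bP_0$, and one more dominated-convergence step gives $\dot J^\varepsilon=\bP_0V_{1,h}^{\pi^\varepsilon,0}=J^0(h,\pi^\varepsilon)=\sum_t\E^{\pi^\varepsilon}[h_t(s_t,a_t)]=\ip{h,\mu^{\pi^\varepsilon}}$. The only real obstacle is the interchange of derivative and integral, including measurability of the derivatives. I will handle it through the uniform boundedness and Lipschitz-in-$\varepsilon$ estimates, which are carried through the induction, rather than through any continuity assumption on the spaces. Alternatively, the bounds $V^\varepsilon_t\ge$ (value of the fixed policy $\pi^{\varepsilon_0}$ under $r^\varepsilon$) and $V^\varepsilon_t\le$ (convexity) give a Danskin-style sandwich argument, which could serve as a fallback.
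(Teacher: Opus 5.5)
Your proposal is correct and follows the same route as the paper: backward induction through the soft Bellman recursion, differentiating the linear update for $Q_t$ and the log-partition formula for $V_t$ to obtain the Gibbs-weighted average of $\dot Q_t$, then integrating against $\bP_0$ to get $\dot J$. Your Lipschitz-in-$\varepsilon$ bound on the difference quotients just makes explicit the paper's one-line remark that boundedness of the derivatives justifies interchanging differentiation and integration.
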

\begin{proof}
    For the derivative of $Q^{\varepsilon}_T = r_T + \varepsilon h_T$, we obtain $\dot{Q}^{\varepsilon}_T = h_T = Q_{T, h}^{\pi^{\varepsilon}, 0}$. Furthermore, by \eqref{eq:explicit_opt_value}
    \begin{align}
        \dot{V}_T^{\varepsilon}(s)
        &= \dfrac{\diff}{\diff\varepsilon}
        \beta \log \int_{\cA} e^{Q^{\varepsilon}_T(s,\cdot)/\beta}\diff\lambda
        \nonumber\\
        &= \int_{\cA} \dot{Q}^{\varepsilon}_T(s,\cdot)
        \dfrac{e^{Q^{\varepsilon}_T(s,\cdot)/\beta}}
        {\int_{\cA} e^{Q^{\varepsilon}_T(s,\cdot)/\beta}\diff\lambda}
        \diff\lambda
        = V_{T,h}^{\pi^{\varepsilon},0}(s).
    \end{align}
    and
    \begin{align}
        \dot{Q}^{\varepsilon}_{T-1}(s, a)
        &= \dfrac{\diff}{\diff\varepsilon}\br*{r_{T-1}(s,a)
        + \varepsilon h_{T-1}(s,a)
        + \br*{\bP_{T-1} V_{T}^{\varepsilon}}(s,a)} \nonumber\\
        &= h_{T-1}(s,a)
        + \br*{\bP_{T-1} \dot{V}_{T}^{\varepsilon}}(s,a)
        = Q_{T-1,h}^{\pi^{\varepsilon},0}(s,a).
    \end{align}
    Here, boundedness of derivatives allows us to interchange differentiation and integration. It follows then via backward induction that $\dot{V}_t^{\varepsilon} = V_{t, h}^{\pi^{\varepsilon}, 0} $ and $\dot{Q}_t^{\varepsilon} = Q_{t, h}^{\pi^{\varepsilon}, 0}$. Finally, $\dot{J}^{\varepsilon} = \bP_0 \dot{V}_1^{\varepsilon} = \bP_0 V_{1, h}^{\pi^{\varepsilon}, 0} = J^0\br*{h, \pi^{\varepsilon}}$.
\end{proof}

\paragraph{Derivatives with Respect to Parameters}
We now specialize to the linear reward model
\begin{equation}\label{eq:lin_reward_appendix}
    r_{t,\theta}(s,a) \defeq \ip*{\theta,\phi_t(s,a)}, \qquad \theta\in\R^d,
\end{equation} where $\phi_t:\cSA\to\R^d$ is bounded and measurable for every $t=1,\ldots,T$. We write
\begin{equation}
    \Jstar(\theta) \defeq \Jstar(r_\theta), \qquad \pistar_\theta \defeq \pistar_{r_\theta}.
\end{equation} For a policy $\pi$, define the vector-valued feature advantage componentwise by
\begin{equation}
    \left[ A_{t,\phi}^{\pi,0} \right]_i \defeq A_{t,\phi_i}^{\pi,0}.
\end{equation} For $\theta,\xi\in\R^d$, let
\begin{equation}
    Z_\phi^\theta(\tau) \defeq \sum_{t=1}^T A_{t,\phi}^{\pistar_\theta,0}(s_t,a_t), \qquad Z_\xi^\theta(\tau) \defeq \ip*{\xi,Z_\phi^\theta(\tau)}.
\end{equation}
By linearity,
\begin{equation}
    Z_\xi^\theta(\tau) = \sum_{t=1}^T A_{t,r_\xi}^{\pistar_\theta,0}(s_t,a_t).
\end{equation}
The first three derivatives of the optimal value $\Jstar$ are given as follows.
\begin{lemma}[Parameter derivatives]
\label{lem:param_der}
Fix $\beta>0$ and $\theta\in\R^d$. Then, for all
$\xi,\zeta,\omega\in\R^d$,
\begin{align}
    D\Jstar(\theta)[\xi]
    &=
    J^0(\xi,\pistar_\theta)
    =
    \ip*{r_\xi,\mu^{\pistar_\theta}},\\
    D^2\Jstar(\theta)[\xi,\zeta]
    &=
    \beta^{-1}
    \E^{\pistar_\theta}
    \left[
    Z_\xi^\theta Z_\zeta^\theta
    \right],\\
    D^3\Jstar(\theta)[\xi,\zeta,\omega]
    &=
    \beta^{-2}
    \E^{\pistar_\theta}
    \left[
    Z_\xi^\theta Z_\zeta^\theta Z_\omega^\theta
    \right].
\end{align}
Moreover, the trajectory score satisfies
\begin{equation}\label{eq:trajectory_score}
    D_\theta
    \log\ptraj^{\pistar_\theta}(\tau)[\xi]
    =
    \beta^{-1}Z_\xi^\theta(\tau).
\end{equation}
\end{lemma}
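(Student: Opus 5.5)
The plan is to differentiate along one parameter direction at a time, using Lemma~\ref{lem:first_der} with $r=r_\theta$ and $h=r_\xi$. That lemma gives $D\Jstar(\theta)[\xi]=J^0(r_\xi,\pistar_\theta)=\ip{r_\xi,\mu^{\pistar_\theta}}$ immediately.

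For the higher derivatives, I would first establish the trajectory score formula \eqref{eq:trajectory_score}. Since $\ptraj^{\pistar_\theta}(\tau)=\prod_t \pstar_{t,r_\theta}(a_t\mid s_t)$ and, by \eqref{eq:explicit_opt_policy}, $\log\pstar_{t,r_\theta}=\beta^{-1}(Q^\star_{t,r_\theta}-V^\star_{t,r_\theta})$, Lemma~\ref{lem:first_der} (with $\varepsilon$-derivative at $\varepsilon=0$ along $h=r_\xi$) yields
\[
D_\theta\log\pstar_{t,r_\theta}(a\mid s)[\xi]=\beta^{-1}\br*{Q^{\pistar_\theta,0}_{t,r_\xi}(s,a)-V^{\pistar_\theta,0}_{t,r_\xi}(s)}=\beta^{-1}A^{\pistar_\theta,0}_{t,r_\xi}(s,a),
\]
where for $\beta=0$ the advantage has no log term. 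Summing over $t$ gives $\beta^{-1}Z^\theta_\xi(\tau)$, and linearity in $\xi$ gives the vector form.

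Next, I would write $D\Jstar(\theta)[\xi]=\int \sum_t r_{t,\xi}\,\ptraj^{\pistar_\theta}\diff\lambdatraj$ and differentiate under the integral in direction $\zeta$. Everything is bounded: $\phi_t$ is bounded, $Q^\star$ and $V^\star$ are bounded and locally uniformly so in $\theta$, and $\lambdatraj$ is finite. Differentiating produces $\beta^{-1}\E^{\pistar_\theta}[r_\xi(\tau)Z^\theta_\zeta]$, where $r_\xi(\tau)=\sum_t r_{t,\xi}(s_t,a_t)$. Applying Lemma~\ref{lem:return_decomp} with $\beta=0$ and policy $\pistar_\theta$ gives
\[
r_\xi(\tau)=J^0(r_\xi,\pistar_\theta)+Z^\theta_\xi+\sum_t\delta^{\pistar_\theta}_{t,r_\xi}.
\]
The constant term is orthogonal to $Z^\theta_\zeta$ because each $A$ has zero conditional mean. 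The $\delta$ terms are orthogonal to the $A$ terms by the pairwise orthogonality in Lemma~\ref{lem:return_decomp}. Hence $D^2\Jstar=\beta^{-1}\E[Z_\xi Z_\zeta]$.

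For the third derivative, I would differentiate $\beta^{-1}\int Z^\theta_\xi Z^\theta_\zeta\,\ptraj\diff\lambdatraj$ in direction $\omega$. The product rule gives a score term, $\beta^{-2}\E[Z_\xi Z_\zeta Z_\omega]$, plus the terms $\beta^{-1}\E[(D_\theta Z_\xi[\omega])Z_\zeta]$ and the symmetric one. I expect showing these extra terms vanish to be the main obstacle. By Lemma~\ref{lem:first_der} applied to the policy-evaluation quantities, $D_\theta Z^\theta_\xi[\omega]=\sum_t D_\theta A^{\pistar_\theta,0}_{t,r_\xi}(s_t,a_t)[\omega]$. Each summand has the form $g_t(s_t,a_t)-\E_{a\sim\pistar_{t,\theta}}[g_t(s_t,a)]$ minus a correction. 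Concretely, $D\,Q^{\pistar_\theta,0}_{t,r_\xi}$ is $\bP_t D V_{t+1}$, and $D V_t(s)=\E_{\pistar}[DQ_t]+\beta^{-1}\mathrm{Cov}_{\pistar}(Q_t,A_{t,r_\omega})$, which is a function of $s_t$ alone.

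My plan is to avoid computing this explicitly and argue structurally instead. The identity $\E^{\pistar_\theta}[Z^\theta_\xi]=0$ holds for all $\theta$, and the symmetric bilinear form $\E[Z_\xi Z_\zeta]$ equals the Hessian. Then $D^3\Jstar$ must be symmetric in $(\xi,\zeta,\omega)$. The cleanest route is to note that $D^2\Jstar[\xi,\zeta]=\beta^{-1}\E[r_\xi(\tau)Z^\theta_\zeta]$ with $r_\xi(\tau)$ independent of $\theta$. Differentiating this form instead gives
\[
\beta^{-1}\E[r_\xi\,D Z_\zeta[\omega]]+\beta^{-2}\E[r_\xi Z_\zeta Z_\omega].
\]
I would then show $\E[r_\xi\,DZ_\zeta[\omega]]=-\beta^{-1}\E[(r_\xi-Z_\xi)Z_\zeta Z_\omega]$. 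This follows by differentiating the orthogonality relation $\E^{\pistar_\theta}[(\text{const}+\sum\delta)\,Z_\zeta]=0$ and using the conditional mean-zero structure step by step via the tower property. Combining the two terms leaves exactly $\beta^{-2}\E[Z_\xi Z_\zeta Z_\omega]$.
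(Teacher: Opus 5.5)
Your first-derivative, score, and Hessian steps are correct and match the paper's argument. The gap is in the third derivative. The identity you propose, $\E^{\pistar_\theta}[r_\xi\,DZ_\zeta^\theta[\omega]] = -\beta^{-1}\E^{\pistar_\theta}[(r_\xi - Z_\xi^\theta)Z_\zeta^\theta Z_\omega^\theta]$, does not follow from differentiating the orthogonality relation. Write $W_\xi^\theta \defeq r_\xi(\tau) - Z_\xi^\theta = J^0(r_\xi,\pistar_\theta) + \sum_t \delta^{\pistar_\theta,0}_{t,r_\xi}$ and differentiate $\E^{\pistar_\theta}[W_\xi^\theta Z_\zeta^\theta]=0$ in direction $\omega$:
\[
\E^{\pistar_\theta}\bigl[DW_\xi^\theta[\omega]\,Z_\zeta^\theta\bigr] + \E^{\pistar_\theta}\bigl[W_\xi^\theta\,DZ_\zeta^\theta[\omega]\bigr] + \beta^{-1}\E^{\pistar_\theta}\bigl[W_\xi^\theta Z_\zeta^\theta Z_\omega^\theta\bigr] = 0 .
\]
Even after you dispose of the first term, this gives your identity with $W_\xi^\theta$, not $r_\xi$, multiplying $DZ_\zeta^\theta[\omega]$. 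The two versions differ by $\E^{\pistar_\theta}[Z_\xi^\theta\,DZ_\zeta^\theta[\omega]]$, which is exactly one of the terms you called the main obstacle. Substituting back, your route yields $D^3\Jstar(\theta)[\xi,\zeta,\omega] = \beta^{-2}\E^{\pistar_\theta}[Z_\xi^\theta Z_\zeta^\theta Z_\omega^\theta] + \beta^{-1}\E^{\pistar_\theta}[Z_\xi^\theta\,DZ_\zeta^\theta[\omega]]$. So the detour through $\E[r_\xi Z_\zeta^\theta]$ does not avoid the obstacle, and symmetry of $D^3\Jstar$ alone does not make the leftover term vanish.

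The missing idea is the one the paper uses, and you already need it to kill the first term above. Since $r_\zeta(\tau)=\sum_t r_{t,\zeta}(s_t,a_t)$ does not depend on $\theta$, differentiating the return decomposition $r_\zeta(\tau) = J^0(r_\zeta,\pistar_\theta) + Z_\zeta^\theta + \sum_t \delta^{\pistar_\theta,0}_{t,r_\zeta}$ gives
\[
DZ_\zeta^\theta[\omega] = -D_\theta J^0(r_\zeta,\pistar_\theta)[\omega] - \sum_{t=0}^{T-1}\Bigl(\dot V_{t+1}(s_{t+1}) - (\bP_t \dot V_{t+1})(s_t,a_t)\Bigr), \qquad \dot V_{t+1} \defeq D_\theta V^{\pistar_\theta,0}_{t+1,r_\zeta}[\omega].
\]
This holds because $\delta_t$ is linear in $V_{t+1}$ and $\bP_t$ does not depend on $\theta$. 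Hence $DZ_\zeta^\theta[\omega]$ is a constant plus transition martingale differences. By the tower-property argument of Lemma~\ref{lem:return_decomp}, it is therefore orthogonal in $L^2(\bP^{\pistar_\theta})$ to every sum of action advantages, in particular to $Z_\xi^\theta$. With this fact, both extra terms in the direct product rule applied to $\beta^{-1}\E^{\pistar_\theta}[Z_\xi^\theta Z_\zeta^\theta]$ vanish, and $D^3\Jstar(\theta)[\xi,\zeta,\omega]=\beta^{-2}\E^{\pistar_\theta}[Z_\xi^\theta Z_\zeta^\theta Z_\omega^\theta]$ follows at once, with no detour. Your per-summand computation hid this. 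A single derivative $D_\theta A^{\pistar_\theta,0}_{t,r_\zeta}[\omega] = (\bP_t\dot V_{t+1})(s_t,a_t) - \dot V_t(s_t)$ is generally not orthogonal to $Z_\xi^\theta$; the martingale structure only appears after telescoping over $t$.
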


\begin{proof}
\textit{First derivative:}
Apply Lemma~\ref{lem:first_der} with
$r^\varepsilon=r_{\theta+\varepsilon\xi}
=r_\theta+\varepsilon r_\xi$ to obtain
\begin{equation}
    D_\theta\Jstar(\theta)[\xi]
    =
    \left.
    \dfrac{\diff}{\diff\varepsilon}
    \Jstar(r^\varepsilon)
    \right|_{\varepsilon=0}
    =
    J^0(\xi,\pistar_\theta).
\end{equation}

\textit{Second derivative:}
Let $\pi^\varepsilon\defeq\pistar_{\theta+\varepsilon\xi}$,
$p_t^\varepsilon\defeq p_t^{\pi^\varepsilon}$, and
$\ptraj^\varepsilon\defeq\ptraj^{\pi^\varepsilon}$. For
$\eta\in\R^d$, define
$Z_\eta^\varepsilon\defeq Z_\eta^{\theta+\varepsilon\xi}$.
Since
$\log\ptraj^\varepsilon(\tau)
=\sum_{t=1}^T\log p_t^\varepsilon(a_t \mid s_t)$,
Lemma~\ref{lem:first_der} gives
\begin{equation}\label{eq:trajectory_score_path}
    \dfrac{\diff}{\diff\varepsilon}
    \log\ptraj^\varepsilon
    =
    \beta^{-1}Z_\xi^\varepsilon,
\end{equation}
where we used that
\begin{equation}
    \dfrac{\diff}{\diff\varepsilon}
    \log p_t^\varepsilon(a \mid s)
    =
    \beta^{-1}
    \left(
    Q_{t,r_\xi}^{\pi^\varepsilon,0}(s,a)
    -
    V_{t,r_\xi}^{\pi^\varepsilon,0}(s)
    \right).
\end{equation}

Let $G_\zeta\defeq\sum_{t=1}^T r_{t,\zeta}$. By the
log-derivative identity,
\begin{equation}
    \dfrac{\diff}{\diff\varepsilon}
    J^0(\zeta,\pi^\varepsilon)
    =
    \int
    G_\zeta
    \dfrac{\diff}{\diff\varepsilon}
    \ptraj^\varepsilon
    \diff\lambdatraj
    =
    \E^{\pi^\varepsilon}
    \left[
    G_\zeta
    \dfrac{\diff}{\diff\varepsilon}
    \log\ptraj^\varepsilon
    \right].
\end{equation}
Together with \eqref{eq:trajectory_score_path}, this yields
\begin{equation}
    \dfrac{\diff}{\diff\varepsilon}
    J^0(\zeta,\pi^\varepsilon)
    =
    \beta^{-1}
    \E^{\pi^\varepsilon}
    \left[
    G_\zeta Z_\xi^\varepsilon
    \right].
\end{equation}

By the return decomposition in Lemma~\ref{lem:return_decomp},
\begin{equation}
    G_\zeta
    =
    J_\zeta^\varepsilon
    +
    Z_\zeta^\varepsilon
    +
    M_\zeta^\varepsilon,
\end{equation}
where $J_\zeta^\varepsilon\defeq J^0(\zeta,\pi^\varepsilon)$ and
$M_\zeta^\varepsilon
\defeq\sum_{t=0}^{T-1}
\delta_{t,r_\zeta}^{\pi^\varepsilon,0}$.
Since $\E^{\pi^\varepsilon}[Z_\xi^\varepsilon]=0$ and the
action-advantage and dynamics-residual terms are orthogonal,
\begin{equation}
    \dfrac{\diff}{\diff\varepsilon}
    J^0(\zeta,\pi^\varepsilon)
    =
    \beta^{-1}
    \E^{\pi^\varepsilon}
    \left[
    Z_\zeta^\varepsilon Z_\xi^\varepsilon
    \right].
\end{equation}
Evaluating at $\varepsilon=0$ proves
\begin{equation}
    D_\theta^2\Jstar(\theta)[\xi,\zeta]
    =
    \beta^{-1}
    \E^{\pistar_\theta}
    \left[
    Z_\xi^\theta Z_\zeta^\theta
    \right].
\end{equation}
Moreover, evaluating \eqref{eq:trajectory_score_path} at
$\varepsilon=0$ gives the score identity
\begin{equation}
    D_\theta
    \log\ptraj^{\pistar_\theta}(\tau)[\xi]
    =
    \beta^{-1}Z_\xi^\theta(\tau).
\end{equation}

\textit{Third derivative:}
Using the notation introduced above,
\begin{equation}
    D_\theta^3\Jstar(\theta)[\xi,\zeta,\omega]
    =
    \left.
    \dfrac{\diff}{\diff\varepsilon}
    \beta^{-1}
    \E^{\pi^\varepsilon}
    \left[
    Z_\zeta^\varepsilon Z_\omega^\varepsilon
    \right]
    \right|_{\varepsilon=0}.
\end{equation}
By the log-derivative identity and the product rule,
\begin{equation}
    \dfrac{\diff}{\diff\varepsilon}
    \beta^{-1}
    \E^{\pi^\varepsilon}
    \left[
    Z_\zeta^\varepsilon Z_\omega^\varepsilon
    \right]
    =
    \beta^{-1}
    \E^{\pi^\varepsilon}
    \left[
    \beta^{-1}
    Z_\xi^\varepsilon Z_\zeta^\varepsilon Z_\omega^\varepsilon
    +
    \dot Z_\zeta^\varepsilon Z_\omega^\varepsilon
    +
    Z_\zeta^\varepsilon\dot Z_\omega^\varepsilon
    \right].
\end{equation}
Differentiating the return decomposition gives
$\dot Z_\zeta^\varepsilon
=-\dot J_\zeta^\varepsilon-\dot M_\zeta^\varepsilon$, where
$\dot J_\zeta^\varepsilon$ is deterministic and
\begin{equation}
    \dot M_\zeta^\varepsilon
    =
    \sum_{t=0}^{T-1}
    \dot\delta_{t,\zeta}^\varepsilon,
    \qquad
    \dot\delta_{t,\zeta}^\varepsilon
    =
    \dot V_{t+1,\zeta}^\varepsilon(s_{t+1})
    -
    \left(
    \bP_t\dot V_{t+1,\zeta}^\varepsilon
    \right)(s_t,a_t).
\end{equation}
Since $\E^{\pi^\varepsilon}[Z_\omega^\varepsilon]=0$ and $
    \E^{\pi^\varepsilon}\bs{
    \dot\delta_{t,\zeta}^\varepsilon
 \mid s_t,a_t}
    =
    0,
$
the same action--dynamics orthogonality as above gives
\begin{equation}
    \E^{\pi^\varepsilon}
    \left[
    \dot Z_\zeta^\varepsilon Z_\omega^\varepsilon
    \right]
    =
    \E^{\pi^\varepsilon}
    \left[
    Z_\zeta^\varepsilon\dot Z_\omega^\varepsilon
    \right]
    =
    0.
\end{equation}
Evaluating at $\varepsilon=0$ therefore yields
\begin{equation}
    D_\theta^3\Jstar(\theta)[\xi,\zeta,\omega]
    =
    \beta^{-2}
    \E^{\pistar_\theta}
    \left[
    Z_\xi^\theta Z_\zeta^\theta Z_\omega^\theta
    \right].
\end{equation}
\end{proof}

We next record two consequences of Lemma~\ref{lem:param_der}.

\begin{corollary}[Bregman divergence and trajectory KL]
\label{cor:Bregman}
For $\theta,\theta'\in\R^d$, define
\begin{equation}
    D_{\Jstar}(\theta,\theta')
    \defeq
    \Jstar(\theta)
    -
    \Jstar(\theta')
    -
    \ip*{
    \theta-\theta',
    \nabla\Jstar(\theta')
    }.
\end{equation}
Then
\begin{equation}
    D_{\Jstar}(\theta,\theta')
    =
    \Jstar(\theta)
    -
    J(\theta,\pistar_{\theta'})
    =
    \beta
    \DKL\left(
    \bP^{\pistar_{\theta'}},
    \bP^{\pistar_\theta}
    \right).
\end{equation}
\end{corollary}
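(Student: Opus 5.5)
The plan is to prove the two equalities separately. Both come from the first-derivative formula in Lemma~\ref{lem:param_der} together with the soft suboptimality identity in Corollary~\ref{cor:soft_suboptimality}.

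\emph{First equality.} By Lemma~\ref{lem:param_der}, $\ip*{\theta-\theta',\nabla\Jstar(\theta')} = J^0(r_{\theta-\theta'},\pistar_{\theta'}) = \ip*{r_\theta,\mu^{\pistar_{\theta'}}} - \ip*{r_{\theta'},\mu^{\pistar_{\theta'}}}$, using linearity of $\xi\mapsto r_\xi$. Since $\pistar_{\theta'}$ is soft-optimal for $r_{\theta'}$, we have $\Jstar(\theta') = \ip*{r_{\theta'},\mu^{\pistar_{\theta'}}} + \beta H(\pistar_{\theta'})$, and $H(\pistar_{\theta'})$ is finite because the Gibbs densities are bounded above and away from zero. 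Substituting gives
\begin{equation}
    \Jstar(\theta') + \ip*{\theta-\theta',\nabla\Jstar(\theta')}
    = \ip*{r_\theta,\mu^{\pistar_{\theta'}}} + \beta H(\pistar_{\theta'})
    = J(\theta,\pistar_{\theta'}).
\end{equation}
In other words, the entropy term and the $r_{\theta'}$ reward term cancel, leaving exactly the regularized return of $\pistar_{\theta'}$ under the reward $r_\theta$. Hence $D_{\Jstar}(\theta,\theta') = \Jstar(\theta) - J(\theta,\pistar_{\theta'})$.

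\emph{Second equality.} Apply Corollary~\ref{cor:soft_suboptimality} with reward $r=r_\theta$ and policy $\pi=\pistar_{\theta'}$. This yields $\Jstar(\theta)-J(\theta,\pistar_{\theta'}) = \beta\DKL(\bP^{\pistar_{\theta'}},\bP^{\pistar_\theta})$. The corollary requires $\pi\ll\lambda$, which holds by the Gibbs form~\eqref{eq:explicit_opt_policy}. All quantities are finite, since Proposition~\ref{prop:common_support} shows the two trajectory laws are mutually equivalent with bounded log-density ratio.

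The only real obstacle is bookkeeping rather than any estimate. One must check that $J(\theta,\pistar_{\theta'})$ means the $\beta$-regularized return with reward $r_\theta$ (the $\beta$ superscript is suppressed by convention), and that the entropy of $\pistar_{\theta'}$ cancels correctly in the first step. As a sanity check, I would compute the KL directly: by the trajectory density factorization and~\eqref{eq:explicit_opt_policy}, $\log(\ptraj^{\pistar_{\theta'}}/\ptraj^{\pistar_\theta})$ is a sum of $\beta^{-1}$ times differences of soft advantages. Taking expectations under $\bP^{\pistar_{\theta'}}$ via Lemma~\ref{lem:return_decomp} telescopes to the same expression, which confirms the sign and the orientation of the KL.
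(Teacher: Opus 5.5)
Your proposal is correct and follows the paper's proof essentially verbatim: Lemma~\ref{lem:param_der} turns $\Jstar(\theta')+\ip*{\theta-\theta',\nabla\Jstar(\theta')}$ into $J(\theta,\pistar_{\theta'})$, and Corollary~\ref{cor:soft_suboptimality} with $r=r_\theta$ and $\pi=\pistar_{\theta'}$ then gives the KL identity. One minor citation point: Proposition~\ref{prop:common_support} only gives mutual absolute continuity, and the bounded log-density ratio comes instead from the boundedness of $Q^\star$ and $V^\star$ in the Gibbs formula; the argument does not need this finiteness check anyway.
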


\begin{proof}
By Lemma~\ref{lem:param_der},
\begin{equation}
    \ip*{
    \theta-\theta',
    \nabla\Jstar(\theta')
    }
    =
    \ip*{
    r_\theta-r_{\theta'},
    \mu^{\pistar_{\theta'}}
    }.
\end{equation}
Since
$\Jstar(\theta') + \ip*{
    r_\theta-r_{\theta'},
    \mu^{\pistar_{\theta'}}} = J(\theta,\pistar_{\theta'})$, this proves the
first equality. The second equality follows from Corollary~\ref{cor:soft_suboptimality}.
\end{proof}

\begin{corollary}[Fisher information and Hessian]
\label{cor:fisher_hessian}
For $\theta\in\R^d$, define the Fisher information matrix of the
trajectory law $\bP^{\pistar_\theta}$ by
\begin{equation}
    \mathsf{I}(\theta)
    \defeq
    \E^{\pistar_\theta}
    \left[
    \nabla_\theta
    \log\ptraj^{\pistar_\theta}(\tau)
    \br*{\nabla_\theta
    \log\ptraj^{\pistar_\theta}(\tau)}^\top
    \right].
\end{equation}
Then
\begin{equation}
    \mathsf{H}(\theta)
    \defeq
    \nabla^2\Jstar(\theta)
    =
    \beta^{-1}
    \E^{\pistar_\theta}
    \left[
    Z_\phi^\theta
    \left(
    Z_\phi^\theta
    \right)^\top
    \right]
    =
    \beta \mathsf{I}(\theta).
\end{equation}
Moreover,
\begin{equation}
    \mathsf{H}(\theta)
    =
    \beta^{-1}
    \sum_{t=1}^T
    \E^{\pistar_\theta}
    \left[
    A_{t,\phi}^{\pistar_\theta,0}
    \left(
    A_{t,\phi}^{\pistar_\theta,0}
    \right)^\top
    \right].
\end{equation}
\end{corollary}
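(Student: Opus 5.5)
The plan is to read everything off Lemma~\ref{lem:param_der} and Lemma~\ref{lem:return_decomp}. There is no real analytic obstacle, since the features are bounded and all quantities involved are therefore bounded and square-integrable.

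\emph{Hessian as a second moment.} By Lemma~\ref{lem:param_der}, $D^2\Jstar(\theta)[\xi,\zeta]=\beta^{-1}\E^{\pistar_\theta}[Z^\theta_\xi Z^\theta_\zeta]$ for all $\xi,\zeta\in\R^d$. Since $Z^\theta_\xi=\ip{\xi,Z^\theta_\phi}$, this equals $\xi^\top\big(\beta^{-1}\E^{\pistar_\theta}[Z^\theta_\phi (Z^\theta_\phi)^\top]\big)\zeta$. The matrix of the bilinear form $D^2\Jstar(\theta)$ is $\nabla^2\Jstar(\theta)$, so we obtain $\mathsf{H}(\theta)=\beta^{-1}\E^{\pistar_\theta}[Z^\theta_\phi(Z^\theta_\phi)^\top]$. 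Each entry is finite because $\phi_t$ is bounded, which makes the unregularized feature value and Q-functions bounded, and hence $Z^\theta_\phi$ bounded.

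\emph{Identification with Fisher information.} The score identity \eqref{eq:trajectory_score} gives $\ip{\nabla_\theta\log\ptraj^{\pistar_\theta}(\tau),\xi}=\beta^{-1}Z^\theta_\xi(\tau)$ for every $\xi$, that is, $\nabla_\theta\log\ptraj^{\pistar_\theta}=\beta^{-1}Z^\theta_\phi$. Substituting into the definition of $\mathsf{I}(\theta)$ yields $\mathsf{I}(\theta)=\beta^{-2}\E^{\pistar_\theta}[Z^\theta_\phi(Z^\theta_\phi)^\top]=\beta^{-1}\mathsf{H}(\theta)$.

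\emph{Per-step decomposition.} Expand
\[
Z^\theta_\phi(Z^\theta_\phi)^\top=\sum_{t,k}A^{\pistar_\theta,0}_{t,\phi}(s_t,a_t)\big(A^{\pistar_\theta,0}_{k,\phi}(s_k,a_k)\big)^\top .
\]
It suffices to show that the cross terms with $t\neq k$ have zero expectation. We argue componentwise, applying Lemma~\ref{lem:return_decomp} with $\beta=0$ (logarithmic terms omitted) to the scalar rewards $\phi_i$ under the policy $\pistar_\theta$. That lemma gives $\E^{\pistar_\theta}[A^{\pistar_\theta,0}_{t,\phi_i}(s_t,a_t)\mid s_t]=0$ together with pairwise orthogonality of $A_1,\dots,A_T$ in $L^2(\bP^{\pistar_\theta})$.

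For $k<t$, the term $A_{k,\phi_j}$ is measurable with respect to the history up to $a_k$, which is contained in the history up to $s_t$. By the Markov property of $\bP^{\pistar_\theta}$, conditioning $A_{t,\phi_i}$ on that history reduces to conditioning on $s_t$, which gives zero. The tower property then makes $\E[A_{t,\phi_i}A_{k,\phi_j}]=0$. This argument also covers $i\neq j$, which the scalar orthogonality statement does not literally include; that is the only point needing a remark. Keeping only the diagonal terms gives
\[
\mathsf{H}(\theta)=\beta^{-1}\sum_{t=1}^T\E^{\pistar_\theta}\big[A^{\pistar_\theta,0}_{t,\phi}(A^{\pistar_\theta,0}_{t,\phi})^\top\big].
\]
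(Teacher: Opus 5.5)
Your proposal is correct and follows essentially the same route as the paper. The paper reads the Hessian off the second-derivative formula of Lemma~\ref{lem:param_der} and obtains $\mathsf{H}(\theta)=\beta\,\mathsf{I}(\theta)$ by substituting the score identity \eqref{eq:trajectory_score}. It then gets the per-step form because the action advantages are martingale differences and hence orthogonal across time. You also handle the cross-component terms ($i\neq j$) explicitly, by conditioning on the history up to $s_t$; the paper leaves this step implicit, and your treatment of it is correct.
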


\begin{proof}
The first identity follows from the second-derivative formula in
Lemma~\ref{lem:param_der}. The score identity
\eqref{eq:trajectory_score} gives
\begin{equation}
    \mathsf{I}(\theta)
    =
    \beta^{-2}
    \E^{\pistar_\theta}
    \left[
    Z_\phi^\theta
    \left(
    Z_\phi^\theta
    \right)^\top
    \right],
\end{equation}
and therefore $\mathsf{H}(\theta)=\beta \mathsf{I}(\theta)$. Finally, the
action-advantage terms are martingale differences and are orthogonal
across time. Hence,
\begin{equation}
    \E^{\pistar_\theta}
    \left[
    Z_\phi^\theta
    \left(
    Z_\phi^\theta
    \right)^\top
    \right]
    =
    \sum_{t=1}^T
    \E^{\pistar_\theta}
    \left[
    A_{t,\phi}^{\pistar_\theta,0}
    \left(
    A_{t,\phi}^{\pistar_\theta,0}
    \right)^\top
    \right].
\end{equation}
\end{proof}

\section{Identifiability and Potential Shaping}\label{app:sec:identifiability_and_potential_shaping}

This section extends reward-identifiability results for entropy-regularized IRL in tabular MDPs \citep{cao2021identifiability, shehab2024learning}, to our Borel state and action space setting. The main result is that for a fixed initial distribution and transition kernel, rewards are identifiable only up to potential-shaping transformations and modifications on $\bP^{\pistar}$-null sets.

For a policy $\pi$, define the null subspace \begin{equation} \cN^{\pi} \defeq \bc*{ n \in B_b^T(\cSA): n_t(s_t, a_t) = 0 \quad \bP^{\pi}\text{-a.s. for every }t }, \end{equation} and the subspace of potential-shaping transformations \citep{ng1999policy} \begin{equation} \cU \defeq \bc*{ u \in B_b^T(\cSA) \,:\, \exists\, \psi_1,\hdots,\psi_T \in B_b(\cS),\; \psi_{T+1}=0,\; u_t = \psi_t - \bP_t \psi_{t+1} }. \end{equation} Finally, let \begin{equation} \cU^{\pi} \defeq \cU + \cN^{\pi}. \end{equation} For $\beta>0$, we write $\cU^{\pistar}$ for this space under any soft-optimal trajectory law, since Proposition~\ref{prop:common_support} shows that all such laws have the same null sets.

The next proposition characterizes $\cU$ as the class of rewards whose
unregularized advantage is zero for every policy, whereas $\cU^\pi$ is the class of rewards whose unregularized advantage under $\pi$ is zero
$\bP^{\pi}$-almost surely.
\begin{proposition}
\label{prop:pot_shaping_equiv}
For $u\in B_b^T(\cSA)$, the following are equivalent:
\begin{enumerate}
    \item We have $u\in\cU$.
    \item There exist $\psi_1,\hdots,\psi_T\in B_b(\cS)$ with
    $\psi_{T+1}=0$ such that, for every policy $\pi$,
    \begin{equation}
        V_{t,u}^{\pi,0}
        =
        Q_{t,u}^{\pi,0}
        =
        \psi_t,
        \qquad
        \forall t .
    \end{equation}
    \item For every policy $\pi$,
    \begin{equation}
        A_{t,u}^{\pi,0}=0,
        \qquad
        \forall t.
    \end{equation}
\end{enumerate}

Moreover, for a fixed policy $\pi$ and $h\in B_b^T(\cSA)$, the following are
equivalent:
\begin{enumerate}[resume]
    \item We have $h\in\cU^\pi$.
    \item There exist $\psi_1,\hdots,\psi_T\in B_b(\cS)$ with
    $\psi_{T+1}=0$ such that
    \begin{equation}
        V_{t,h}^{\pi,0}(s_t)
        =
        Q_{t,h}^{\pi,0}(s_t,a_t)
        =
        \psi_t(s_t),
        \qquad
        \bP^{\pi}\text{-a.s.}, \quad \forall t.
    \end{equation}
    \item We have
    \begin{equation}
        A_{t,h}^{\pi,0}(s_t,a_t)=0,
        \qquad
        \bP^{\pi}\text{-a.s.}, \quad \forall t.
    \end{equation}
\end{enumerate}
\end{proposition}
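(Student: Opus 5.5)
The plan is to establish the cycles $1\Rightarrow2\Rightarrow3\Rightarrow1$ and $4\Rightarrow5\Rightarrow6\Rightarrow4$. Both rest on the backward Bellman recursion for unregularized values, $Q^{\pi,0}_{t,h}=h_t+\bP_t V^{\pi,0}_{t+1,h}$ and $V^{\pi,0}_{t,h}(s)=\int Q^{\pi,0}_{t,h}(s,a)\,\pi_t(\diff a\mid s)$. Since $\beta=0$ here, the advantage is $A^{\pi,0}_{t,h}=Q^{\pi,0}_{t,h}-V^{\pi,0}_{t,h}$.

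\emph{Policy-independent part.} For $1\Rightarrow2$, take $u_t=\psi_t-\bP_t\psi_{t+1}$ and run backward induction from $V_{T+1}=\psi_{T+1}=0$. If $V^{\pi,0}_{t+1,u}=\psi_{t+1}$, then $Q^{\pi,0}_{t,u}=\psi_t-\bP_t\psi_{t+1}+\bP_t\psi_{t+1}=\psi_t$. This function does not depend on $a$, so averaging over $\pi_t$ gives $V^{\pi,0}_{t,u}=\psi_t$ for every $\pi$. The step $2\Rightarrow3$ is immediate.

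For $3\Rightarrow1$, fix any policy $\pi^0$, for instance a Dirac policy at a fixed action, and set $\psi_t\defeq V^{\pi^0,0}_{t,u}$. These are bounded and measurable, with $\psi_{T+1}=0$. We need $Q^{\pi,0}_{t,u}(s,a)=V^{\pi,0}_{t,u}(s)$ for every $\pi$, together with policy-independence of $V$. The latter is the subtle point. I would prove by backward induction that $V^{\pi,0}_{t,u}$ does not depend on $\pi$. Assume $V^{\pi,0}_{t+1,u}=\psi_{t+1}$ for all $\pi$. Then $Q^{\pi,0}_{t,u}=u_t+\bP_t\psi_{t+1}$ is policy-independent. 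By hypothesis 3 it equals $V^{\pi,0}_{t,u}(s)$ for all $a$, so it is constant in $a$, and hence $V^{\pi,0}_{t,u}=u_t+\bP_t\psi_{t+1}$, which is again policy-independent. This yields $u_t=\psi_t-\bP_t\psi_{t+1}$ pointwise.

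\emph{Policy-dependent part.} For $4\Rightarrow5$, write $h=u+n$ with $u\in\cU$ and $n\in\cN^\pi$. Since $n_t(s_t,a_t)=0$ $\bP^\pi$-a.s., the values of $n$ vanish $\bP^\pi$-a.s. along the trajectory. Concretely, $V^{\pi,0}_{t,n}(s_t)=\E^\pi[\sum_{k\ge t}n_k\mid s_t]=0$ a.s., and $Q^{\pi,0}_{t,n}(s_t,a_t)=0$ a.s. by the tower property. By linearity and the first part, the values of $h$ then equal $\psi_t$ a.s. The step $5\Rightarrow6$ is immediate.

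For $6\Rightarrow4$, set $\psi_t\defeq V^{\pi,0}_{t,h}$, which is bounded and measurable with $\psi_{T+1}=0$. Define $u_t\defeq\psi_t-\bP_t\psi_{t+1}\in\cU$ and $n\defeq h-u$. Then
\begin{equation}
n_t=h_t+\bP_t V^{\pi,0}_{t+1,h}-V^{\pi,0}_{t,h}=Q^{\pi,0}_{t,h}-V^{\pi,0}_{t,h}=A^{\pi,0}_{t,h},
\end{equation}
and this vanishes $\bP^\pi$-a.s. by hypothesis 6. Hence $n\in\cN^\pi$, and $n$ is bounded because all the quantities involved are bounded.

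The main obstacle is the step $3\Rightarrow1$. There the advantage vanishing for every policy must be converted into policy-independence of the value functions. This requires organizing the backward induction carefully, so that the induction hypothesis is applied before concluding that $Q$ is constant in $a$. The remaining steps are routine measurability and boundedness checks.
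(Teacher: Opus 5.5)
Your proposal is correct and follows essentially the paper's proof: backward induction for $1\Rightarrow2$, the split $h=u+n$ with the tower property for $4\Rightarrow5$, and the choice $\psi_t\defeq V^{\pi,0}_{t,h}$, $u_t\defeq\psi_t-\bP_t\psi_{t+1}$, $n\defeq h-u$ for $6\Rightarrow4$. The only difference is that the policy-independence induction you flag as the main obstacle in $3\Rightarrow1$ is unnecessary. As in the paper, the vanishing advantage under the single policy $\pi^0$ already gives $u_t=\psi_t-\bP_t\psi_{t+1}$ pointwise with $\psi_t=V^{\pi^0,0}_{t,u}$, which is all that membership in $\cU$ requires, and policy-independence then comes for free from $1\Rightarrow2$.
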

\begin{proof}
For the first set of equivalences, suppose first that $u\in\cU$, so that
\begin{equation}
    u_t=\psi_t-\bP_t\psi_{t+1}
\end{equation}
for some $\psi_1,\hdots,\psi_T\in B_b(\cS)$ with $\psi_{T+1}=0$. Backward
induction yields, for every policy $\pi$,
\begin{equation}
    V_{t,u}^{\pi,0}=Q_{t,u}^{\pi,0}=\psi_t,
\end{equation}
and hence also $A_{t,u}^{\pi,0}=0$.

Conversely, if $A_{t,u}^{\pi,0}=0$ for every $t$ and every policy $\pi$, then
\begin{equation}
    0
    = Q_{t,u}^{\pi,0} - V_{t,u}^{\pi,0} =
    u_t+\bP_tV_{t+1,u}^{\pi,0}-V_{t,u}^{\pi,0}.
\end{equation}
Thus, with $\psi_t=V_{t,u}^{\pi,0}$, we have $Q_{t,u}^{\pi,0} = \psi_t$ and
\begin{equation}
    u_t=\psi_t-\bP_t\psi_{t+1} \in \cU.
\end{equation}

For the second set of equivalences, suppose $h\in\cU^\pi$. Then
$h=u+n$ with $u\in\cU$ and $n\in\cN^\pi$. By the first part,
\begin{equation}
    V_{t,u}^{\pi,0}=Q_{t,u}^{\pi,0}=\psi_t .
\end{equation}
Moreover, since $n_t(s_t,a_t)=0$ $\bP^{\pi}$-a.s. for every $t$, the return
$G_{t,n}\defeq\sum_{k=t}^T n_k(s_k,a_k)$ satisfies
$G_{t,n}=0$ $\bP^{\pi}$-a.s. Hence, by the tower property,
\begin{equation}
    \E^{\pi}\!\bs*{\abs*{V_{t,n}^{\pi,0}(s_t)}}
    =
    \E^{\pi}\!\bs*{
        \abs*{\E^{\pi}\!\bs*{G_{t,n} \mid s_t}}
    }
    \leq
    \E^{\pi}\!\bs*{\E^{\pi}\!\bs*{\abs*{G_{t,n}} \mid s_t}}
    =
    0,
\end{equation}
and similarly $\E^{\pi}\!\bs*{\abs*{Q_{t,n}^{\pi,0}(s_t, a_t)}} \leq 0$. So,
\begin{equation}
    V_{t,n}^{\pi,0}(s_t)
    =
    Q_{t,n}^{\pi,0}(s_t,a_t)
    =
    0,
    \qquad
    \bP^{\pi}\text{-a.s.},
\end{equation}
which yields
\begin{equation}
    V_{t,h}^{\pi,0}(s_t)
    =
    Q_{t,h}^{\pi,0}(s_t,a_t)
    =
    \psi_t(s_t),
    \qquad
    \bP^{\pi}\text{-a.s.},
\end{equation}
and $A_{t,h}^{\pi,0}(s_t,a_t)=0$ $\bP^{\pi}$-a.s.

Conversely, if $A_{t,h}^{\pi,0}(s_t,a_t)=0$ $\bP^{\pi}$-a.s., then
\begin{equation}
    h_t(s_t,a_t)
    =
    V_{t,h}^{\pi,0}(s_t)
    -
    \bP_tV_{t+1,h}^{\pi,0}(s_t,a_t),
    \qquad
    \bP^{\pi}\text{-a.s.}
\end{equation}
Define
\begin{equation}
    u_t
    \defeq
    V_{t,h}^{\pi,0}
    -
    \bP_tV_{t+1,h}^{\pi,0}.
\end{equation}
Then $u\in\cU$ and $h-u\in\cN^\pi$, hence $h\in\cU^\pi$.
\end{proof}

The following theorem is the main identifiability statement: reward
transformations leave the optimal trajectory law invariant exactly when they
belong to $\cU^{\pistar}$. Equivalently, these are precisely the transformations
whose unregularized advantage under the optimal policy vanishes almost surely,
a characterization used in Corollary~\ref{cor:fisher_kernel_shaping}.

\begin{theorem}\label{thm:pot_shaping}
Let $\beta>0$ and let $r,h\in B_b^T(\cS\times\cA)$. Then, the following
are equivalent:
\begin{enumerate}
    \item $h\in\cU^{\pistar}$.
    \item $\bP^{\pistar_r}=\bP^{\pistar_{r+h}}$.
    \item $A_{t,h}^{\pistar_r, 0}(s_t,a_t) = 0, \; \bP^{\pistar}$-a.s., $\; \forall t.$
\end{enumerate}
Moreover, for the soft-optimal densities in
\eqref{eq:explicit_opt_policy},
\begin{equation}\label{eq:policy_identifiability}
    \pstar_r=\pstar_{r+h}
    \quad\iff\quad
    h\in\cU.
\end{equation}
\end{theorem}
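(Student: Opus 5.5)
The plan is to prove the cycle $1\Rightarrow 2\Rightarrow 1$ directly, obtain $1\iff 3$ from Proposition~\ref{prop:pot_shaping_equiv}, and then treat the density statement \eqref{eq:policy_identifiability} as the "everywhere" version of the same computation.

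\emph{Step 1 ($1\iff 3$).} Apply the second set of equivalences (items 4 and 6) of Proposition~\ref{prop:pot_shaping_equiv} with $\pi=\pistar_r$. By Proposition~\ref{prop:common_support}, $\cU^{\pistar_r}=\cU^{\pistar}$ and "$\bP^{\pistar_r}$-a.s." coincides with "$\bP^{\pistar}$-a.s.", so this gives $1\iff 3$ immediately.

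\emph{Step 2 ($1\Rightarrow 2$).} Write $h=u+n$ with $u\in\cU$ and $n\in\cN^{\pistar}$, and handle the two pieces separately.
\begin{itemize}
\item For $u_t=\psi_t-\bP_t\psi_{t+1}$, backward induction through \eqref{eq:bellman_opt_equation} and \eqref{eq:explicit_opt_value} gives
\begin{equation}
Q^\star_{t,r+u}=Q^\star_{t,r}+\psi_t,\qquad V^\star_{t,r+u}=V^\star_{t,r}+\psi_t,
\end{equation}
everywhere. The Gibbs form \eqref{eq:explicit_opt_policy} then yields $\pstar_{r+u}=\pstar_r$ everywhere, which also proves "$\Leftarrow$" in \eqref{eq:policy_identifiability}.
\item For $n$, it remains to show $\bP^{\pistar_{r'+n}}=\bP^{\pistar_{r'}}$ with $r'\defeq r+u$. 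Since $n_t(s_t,a_t)=0$ $\lambdatraj$-a.e., we have $J(r'+n,\pi)=J(r',\pi)$ for every soft-optimal $\pi$. Corollary~\ref{cor:soft_suboptimality} then gives
\begin{equation}
\beta\,\DKL\br*{\bP^{\pistar_{r'}},\bP^{\pistar_{r'+n}}}
=\Jstar(r'+n)-J(r'+n,\pistar_{r'})
=J(r',\pistar_{r'+n})-\Jstar(r')\le 0 .
\end{equation}
So the KL divergence vanishes, and the trajectory laws coincide.
\end{itemize}

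\emph{Step 3 ($2\Rightarrow 1$).} Assume $\bP^{\pistar_r}=\bP^{\pistar_{r+h}}$. By Proposition~\ref{prop:bellman_value_optimality} (item 4 with $\pi=\pistar_{r+h}$), $\pstar_{t,r+h}(a_t\mid s_t)=\pstar_{t,r}(a_t\mid s_t)$ $\bP^{\pistar}$-a.s. for all $t$. Take $\beta\log$ of the Gibbs form and set $\Delta V_t\defeq V^\star_{t,r+h}-V^\star_{t,r}$, which is bounded and measurable. This gives
\begin{equation}
h_t(s_t,a_t)+(\bP_t\Delta V_{t+1})(s_t,a_t)-\Delta V_t(s_t)=0\quad\bP^{\pistar}\text{-a.s.}
\end{equation}
Defining $u_t\defeq\Delta V_t-\bP_t\Delta V_{t+1}$ with $\Delta V_{T+1}=0$, we get $u\in\cU$ and $h-u\in\cN^{\pistar}$, hence $h\in\cU^{\pistar}$.

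The same computation gives "$\Rightarrow$" in \eqref{eq:policy_identifiability}. If $\pstar_r=\pstar_{r+h}$ everywhere, the displayed identity holds for every $(s,a)$, so $h=u\in\cU$ exactly.

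The main obstacle is the null-set bookkeeping. First, equality of trajectory laws must be converted into pointwise equality of log-densities on a $\bP^{\pistar}$-full set. Second, the identity $J(r'+n,\pi)=J(r',\pi)$ must be justified for both soft-optimal policies. Both rest on Proposition~\ref{prop:common_support}: all soft-optimal laws are equivalent to $\lambdatraj$, so a function that vanishes almost surely under one of them vanishes under all.
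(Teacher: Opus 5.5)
Your proposal is correct and follows essentially the paper's proof: $1\iff 3$ comes from Proposition~\ref{prop:pot_shaping_equiv}, $2\Rightarrow 1$ comes from Proposition~\ref{prop:bellman_value_optimality} together with the log-ratio of the Gibbs densities using $\psi_t = V^\star_{t,r+h}-V^\star_{t,r}$, and the density equivalence follows from backward induction and pointwise evaluation of the same identity. The only difference is cosmetic: for $1\Rightarrow 2$ you split $h=u+n$ and carry out the soft-suboptimality (KL) comparison by hand against the two soft-optimal policies, whereas the paper notes $J(r+h,\pi)=J(r,\pi)+\bP_0\psi_1$ and invokes Proposition~\ref{prop:bellman_value_optimality}, which rests on the same Corollary~\ref{cor:soft_suboptimality}.
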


\begin{proof}
    The equivalence of \emph{1.} and \emph{3.} follows directly from Proposition~\ref{prop:pot_shaping_equiv}.

    \emph{2. $\implies$ 1.}: Suppose that $\bP^{\pistar_r} = \bP^{\pistar_{r + h}}$. Then, by Proposition~\ref{prop:bellman_value_optimality} we have
    \begin{equation}
        \pstar_{t,r + h}(a_t \mid s_t) = \pstar_{t,r}(a_t \mid s_t), \quad \bP^{\pistar}\text{-a.s.}, \; \forall t.
    \end{equation}
    Therefore, for $\psi_t = \Vstar_{t, r + h} - \Vstar_{t, r}$ with $\psi_{T+1} = 0$, we get
    \begin{equation}\label{eq:log_ratio_soft_opt}
        0 = \beta \log \br*{\dfrac{\pstar_{t,r + h}(a_t \mid s_t)}{\pstar_{t,r}(a_t \mid s_t)}} = h_t(s_t, a_t) + \br*{\bP_t \psi_{t+1}}(s_t, a_t) - \psi_t(s_t), \quad \bP^{\pistar}\text{-a.s.}, \; \forall t,
    \end{equation}
    and so $h \in \cU^{\pistar}$.

    \emph{1. $\implies$ 2.}: Let $h \in \cU^{\pistar}$. Then, there exist $u \in \cU$ and $n \in \cN^{\pistar}$ such that $h = u + n$. For the expected value of a policy $\pi \ll \lambda$, we have
    \begin{align}
        J(r + h, \pi) = J(r, \pi) + \underbrace{\sum_{t=1}^T \E^{\pi}\bs*{ u_t}}_{=\bP_0 \psi_1} + \underbrace{\sum_{t=1}^T\E^{\pi}\bs*{ n_t}}_{=0} = J(r, \pi) + \text{constant}.
    \end{align}
    Since $\pistar_{r+h}$ maximizes $J(r + h, \cdot)$, it must therefore also maximize $J(r, \cdot)$. That is, $\pistar_{r+h} \in \Pi^\star_{\mathsf{EV}}(r)$, which by Proposition~\ref{prop:bellman_value_optimality} implies that $\bP^{\pistar_r} = \bP^{\pistar_{r + h}}$.

    Finally, if $h\in\cU$, backward induction yields
    \begin{equation}
        V^\star_{t,r+h}
        =
        V^\star_{t,r}+\psi_t,
        \qquad
        Q^\star_{t,r+h}
        =
        Q^\star_{t,r}+\psi_t,
    \end{equation}
    and hence $\pstar_r=\pstar_{r+h}$. Conversely, if the soft-optimal densities are equal, then \eqref{eq:log_ratio_soft_opt} holds pointwise, and hence $h\in\cU$.
\end{proof}

We now specialize the above identifiability result to finite-dimensional linear reward classes.
Recall the linear reward parametrization \eqref{eq:lin_reward_appendix},
$r_{t,\theta}(s,a)=\ip{\theta,\phi_t(s,a)}$. In this case, the unidentifiable
parameter directions are exactly those whose induced reward perturbation lies
in $\cU^{\pistar}$; equivalently, they are the kernel of the Hessian of the
optimal value.
\begin{corollary}\label{cor:fisher_kernel_shaping}
Let
\begin{equation}
    \Phi:\R^d \to B_b^T(\cS\times\cA),
    \qquad
    \Phi(\theta) \defeq r_\theta,
\end{equation}
denote the linear reward parametrization map. Then, for every $\theta \in \R^d$, we have
\begin{equation}
    \ker \mathsf{H}(\theta) = \Phi^{-1}(\cU^{\pistar}) = \bc*{\xi\in\R^d: \bP^{\pistar_{\theta}} = \bP^{\pistar_{\theta + \xi}}},
    \qquad
    \ima \mathsf{H}(\theta) = \br*{\Phi^{-1}(\cU^{\pistar})}^\perp.
\end{equation}
In particular, both $\ker \mathsf{H}(\theta)$ and $\ima \mathsf{H}(\theta)$ are independent of $\theta$.
\end{corollary}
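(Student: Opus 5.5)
We prove the three descriptions of $\ker \mathsf{H}(\theta)$ by chaining Corollary~\ref{cor:fisher_hessian} with Theorem~\ref{thm:pot_shaping}, and then deduce the image statement by linear algebra.

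\emph{Hessian kernel via advantages.} Fix $\theta$ and $\xi\in\R^d$. By Corollary~\ref{cor:fisher_hessian},
\begin{equation}
    \xi^\top \mathsf{H}(\theta)\xi = \beta^{-1}\sum_{t=1}^T \E^{\pistar_\theta}\bs*{\br*{A^{\pistar_\theta,0}_{t,r_\xi}(s_t,a_t)}^2},
\end{equation}
using linearity $\ip{\xi,A^{\pi,0}_{t,\phi}} = A^{\pi,0}_{t,r_\xi}$. Since $\mathsf{H}(\theta)\succeq0$, we have $\xi\in\ker\mathsf{H}(\theta)$ iff $\xi^\top\mathsf{H}(\theta)\xi=0$. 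This holds iff $A^{\pistar_\theta,0}_{t,r_\xi}(s_t,a_t)=0$ $\bP^{\pistar}$-a.s.\ for every $t$. Here we use Proposition~\ref{prop:common_support}, which lets us pass between $\bP^{\pistar_\theta}$-a.s.\ and $\bP^{\pistar}$-a.s.

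\emph{Identification with $\cU^{\pistar}$ and trajectory laws.} Apply Theorem~\ref{thm:pot_shaping} with $r=r_\theta$ and $h=r_\xi=\Phi(\xi)$, noting $r_\theta+r_\xi=r_{\theta+\xi}$. Its item~3 is exactly the advantage condition above. It is therefore equivalent to item~1, $\Phi(\xi)\in\cU^{\pistar}$, i.e.\ $\xi\in\Phi^{-1}(\cU^{\pistar})$. It is also equivalent to item~2, $\bP^{\pistar_\theta}=\bP^{\pistar_{\theta+\xi}}$. This gives both equalities for the kernel.

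\emph{Independence of $\theta$.} The set $\Phi^{-1}(\cU^{\pistar})$ does not depend on $\theta$. This is because $\cU^{\pistar}=\cU+\cN^{\pistar}$, and by Proposition~\ref{prop:common_support} the null sets defining $\cN^{\pistar}$ are common to all soft-optimal laws. Moreover, $\cU^{\pistar}$ is a linear subspace and $\Phi$ is linear, so $\Phi^{-1}(\cU^{\pistar})$ is a subspace of $\R^d$.

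\emph{Image.} Since $\mathsf{H}(\theta)$ is symmetric, $\ima\mathsf{H}(\theta)=(\ker\mathsf{H}(\theta))^\perp$, which equals $(\Phi^{-1}(\cU^{\pistar}))^\perp$ and is therefore also independent of $\theta$.

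The main obstacle is the a.s.\ bookkeeping. Theorem~\ref{thm:pot_shaping}, item~3, requires the advantage under $\pistar_r$ to vanish a.s.\ for every $t$, while the Hessian quadratic form controls a sum of second moments. Equivalence holds because each summand is nonnegative. The key point is that the advantages are taken with respect to the optimal policy at the same $\theta$ that defines the Hessian, which is exactly the setting of Theorem~\ref{thm:pot_shaping}.
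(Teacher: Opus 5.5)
Your proof is correct and follows the paper's argument essentially step for step. You write $\xi^\top\mathsf{H}(\theta)\xi=\beta^{-1}\sum_t \E^{\pistar_\theta}[(A^{\pistar_\theta,0}_{t,r_\xi})^2]$, conclude that the kernel consists of those $\xi$ whose advantages vanish $\bP^{\pistar}$-a.s., invoke Theorem~\ref{thm:pot_shaping} for the other two descriptions, and use symmetry of $\mathsf{H}(\theta)$ for the image. Your explicit check that $\Phi^{-1}(\cU^{\pistar})$ is a $\theta$-independent subspace (via Proposition~\ref{prop:common_support}) spells out what the paper handles only in a footnote.
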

\begin{proof}
By Lemma~\ref{lem:param_der}, we have
\begin{equation}
    \xi^\top \mathsf{H}(\theta) \xi
    =
    \beta^{-1}\E^{\pi_\theta^{\star}}\bs*{\br*{Z^{\theta}_{\xi}}^2} = \beta^{-1}\sum_{t=1}^T \E^{\pi_\theta^{\star}}\bs*{\br*{A_{t, \xi}^{\pistar_{\theta}, 0}}^2}.
\end{equation}
Hence, $\xi \in \ker \mathsf{H}(\theta)$ if and only if\footnote{Recall that by Proposition~\ref{prop:common_support}, the null sets of $\bP^{\pistar_{\theta}}$ are independent of $\theta$.} $A_{t, \xi}^{\pistar_{\theta}, 0}=0\; \bP^{\pistar}$-a.s., which by Theorem~\ref{thm:pot_shaping} is equivalent to $r_\xi=\Phi(\xi)\in\cU^{\pistar}$, \ie, $\xi \in \Phi^{-1}(\cU^{\pistar})$, and $\bP^{\pistar_{\theta}} = \bP^{\pistar_{\theta + \xi}}$. Since $\mathsf{H}(\theta)$ is symmetric, $\ima \mathsf{H}(\theta) = (\ker \mathsf{H}(\theta))^\perp$.
\end{proof}

\section{Structural Equivalences}\label{app:sec:structural_equivalences}
In this section, we provide the formal structural equivalence results from Section~\ref{sec:equivalences}. We first define equivalence of optimization problems.

\begin{definition}[Equivalence of optimization problems]\label{def:equivalence}
Let
\begin{equation}
    \mathsf{A}\br*{\cX}
    :\quad
    \operatorname*{\min}_{x\in\cX} f(x),
    \qquad
    \mathsf{B}\br*{\cY}
    :\quad
    \operatorname*{\min}_{y\in\cY} g(y),
\end{equation}
be two minimization problems. We say that
$ \mathsf{A}$ and $\mathsf{B}$ are equivalent if there exists a bijection
$\Lambda:\cX\to\cY$ and a strictly increasing function $h:\R\to\R$ such
that
\begin{equation}
    g(\Lambda(x))
    =
    h(f(x))
    \qquad
    \text{for all } x\in\cX .
\end{equation}
Consequently, whenever the argmin is nonempty,
\begin{align}
    \Lambda\!\left(
        \argmin_{x\in\cX} f(x)
    \right)
    &=
    \argmin_{y\in\cY} g(y).
\end{align}
\end{definition}

In the following, we establish two equivalences between Min-Max-IRL and MLE. The first one is between Min-Max-IRL over a set of rewards $\cR\subseteq B_b^T(\cSA)$ and MLE over the set of soft-optimal densities $\cP^{\star}(\cR) = \bc{\pstar_r: r \in \cR}$. Since several rewards can induce the same soft-optimal density, this map is not injective on $\cR$, and we pass to a quotient. By Theorem~\ref{thm:pot_shaping}, $\pstar_r = \pstar_{r'}$ if and only if $r-r' \in\cU$, where $\cU$ is the space of potential shaping transformations. We therefore use the equivalence classes $[r]_{\cU} \defeq r + \cU$, $r\in B_b^T(\cSA)$, which form the quotient space $B_b^T(\cSA) / \cU$, and consider the subset induced by $\cR$,
\begin{equation}
    \bs*{\cR}_{\cU} \defeq \bc*{[r]_{\cU}: r \in \cR}.
\end{equation}

The second equivalence is between Min-Max-IRL over $\beta \log \cP$ and MLE over $\cP$. Here, $\cP$ denotes a class of \emph{policy densities}, that is, a class such that any $p\in\cP$ defines a Markov policy $\pi_p$ (\ie, a stochastic kernel) via $\pi_{t, p}(\diff a \mid s) = p_t(a \mid s)\, \lambda(\diff a)$. We say that $\cP$ has \emph{bounded log-densities} if $\br{\beta \log p_t}_{t=1}^T \in B_b^T(\cSA)$ for any $p\in\cP$; then $\beta\log\cP \subseteq B_b^T(\cSA)$ as required for rewards in our setup.

To formally define these equivalences, we need the following bijections.
\begin{proposition}\label{prop:bijection}
    Let $\cR \subseteq B_b^T(\cSA)$ be a reward class and $\cP$ a policy density class with bounded log-densities. Then:
    \begin{enumerate}
        \item $\Lambda: \bs*{\cR}_{\cU} \to \cP^{\star}(\cR)$, defined by $\Lambda\br*{[r]_{\cU}} = \pstar_r$, is a bijection with inverse $\Lambda^{-1}\br*{p} = [\beta \log p]_{\cU}$.
        \item $\Gamma: \beta\log\cP \to \cP$, defined by $\Gamma(r) = \pstar_r$, is a bijection with inverse $\Gamma^{-1}(p) = \beta \log p$.
    \end{enumerate}
    Here $\beta \log p = \br{\beta \log p_t}_{t=1}^T$ is the myopic reward given by $\br*{\beta \log p}_t(s,a) = \beta \log p_t(a \mid s)$.
\end{proposition}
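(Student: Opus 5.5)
Both parts reduce to two results already established: the soft-optimal density formula \eqref{eq:explicit_opt_policy} and the identifiability characterization \eqref{eq:policy_identifiability} of Theorem~\ref{thm:pot_shaping}, which says $\pstar_r=\pstar_{r'}$ if and only if $r-r'\in\cU$.

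\emph{Part 1.} Well-definedness and injectivity of $\Lambda$ both come from \eqref{eq:policy_identifiability}. If $[r]_{\cU}=[r']_{\cU}$, then $r-r'\in\cU$, so $\pstar_r=\pstar_{r'}$; hence $\Lambda$ does not depend on the representative. Conversely, $\pstar_r=\pstar_{r'}$ forces $r-r'\in\cU$, which gives injectivity. Surjectivity onto $\cP^{\star}(\cR)$ holds by the definition of that set.

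For the inverse, take $p=\pstar_r$. By \eqref{eq:explicit_opt_policy},
\begin{equation}
    \beta\log p_t(a \mid s)=Q^\star_{t,r}(s,a)-V^\star_{t,r}(s)=r_t(s,a)+(\bP_tV^\star_{t+1,r})(s,a)-V^\star_{t,r}(s).
\end{equation}
Set $\psi_t\defeq V^\star_{t,r}$, with $\psi_{T+1}=0$. These functions are bounded and measurable because $r$ is bounded and $\lambda(\cA)<\infty$. Then $r-\beta\log p=\psi_t-\bP_t\psi_{t+1}\in\cU$, so $[\beta\log p]_{\cU}=[r]_{\cU}$. The identity also shows that $\beta\log p$ is itself bounded.

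\emph{Part 2.} The main step is that the soft-optimal policy of a myopic reward is the policy it came from: for $r=\beta\log p$ with $p\in\cP$, we need $\pstar_r=p$. I would prove by backward induction that $V^\star_{t,r}=0$ for all $t$. The terminal condition gives $V^\star_{T+1,r}=0$. If $V^\star_{t+1,r}=0$, then $Q^\star_{t,r}=\beta\log p_t$, and \eqref{eq:explicit_opt_value} gives
\begin{equation}
    V^\star_{t,r}(s)=\beta\log\int p_t(a \mid s)\,\lambda(\diff a)=0,
\end{equation}
since $p_t(\cdot \mid s)$ is a probability density. Then \eqref{eq:explicit_opt_policy} yields $\pstar_{t,r}=\exp(\beta^{-1}\beta\log p_t)=p_t$ pointwise.

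With this fact, $\Gamma\circ\Gamma^{-1}=\mathrm{id}_{\cP}$. For $r\in\beta\log\cP$ we have $r=\beta\log p$, so $\Gamma(r)=p$ and $\Gamma^{-1}(\Gamma(r))=\beta\log p=r$, giving $\Gamma^{-1}\circ\Gamma=\mathrm{id}$. One point needs checking: $\Gamma^{-1}$ must be well defined as a map, i.e.\ the representation $r=\beta\log p$ should not depend on the choice of $p$. This holds because $p\mapsto\beta\log p$ is injective pointwise. I also need all densities to be taken as fixed pointwise versions, so that these equalities hold everywhere and not just almost everywhere; the paper's convention of fixing a version of each density provides this.

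I expect the only delicate points to be bookkeeping: measurability and boundedness of the $\psi_t$, and using the pointwise (not almost-everywhere) form of \eqref{eq:policy_identifiability}. The rest is direct.
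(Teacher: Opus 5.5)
Your proof is correct and follows the paper's argument essentially step for step. You get well-definedness and injectivity of $\Lambda$ from \eqref{eq:policy_identifiability}, identify the inverse via the Gibbs formula with $\psi_t = V^\star_{t,r}$ as the shaping potential, and use backward induction to show $V^\star_{t,\beta\log p}=0$, hence $\pstar_{\beta\log p}=p$; the only cosmetic difference is that you check both compositions $\Gamma\circ\Gamma^{-1}$ and $\Gamma^{-1}\circ\Gamma$, whereas the paper notes that $p\mapsto\beta\log p$ is a bijection onto $\beta\log\cP$ by definition and then verifies $\Gamma\circ\Gamma^{-1}=\mathrm{id}$.
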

\begin{proof}
    \emph{Part 1.} By Theorem~\ref{thm:pot_shaping}, $\pstar_r = \pstar_{r'}$ if and only if $r' \in [r]_{\cU}$, so $\Lambda$ is well-defined on equivalence classes and injective. It is surjective by the definition of $\cP^{\star}(\cR)$. For the inverse, let $p = \pstar_r$. By \eqref{eq:explicit_opt_policy},
    \begin{align}
        r_t(s, a) - \beta \log \pstar_{t,r}(a \mid s)
        &= r_t(s, a) - \br*{ r_t(s, a) + \br*{\bP_t \Vstar_{t+1,r}}(s,a) - \Vstar_{t,r}(s) }\\
        &= \Vstar_{t,r}(s) - \br{\bP_t \Vstar_{t+1,r}}(s,a),
    \end{align}
    which implies that $r- \beta \log \pstar_r \in \cU$. So $\beta\log p \in B_b^T(\cSA)$ and $\beta \log p \in [r]_{\cU}$, \ie, $\Lambda^{-1}\br*{p} = [\beta \log p]_{\cU}$.

    \emph{Part 2.} Clearly, $\Gamma^{-1}(p) = \beta \log p$ is injective and surjective by definition of $\beta \log \cP$. It remains to show that $\Gamma^{-1}(p) = \beta \log p$ and $\Gamma(r) = \pstar_r$ are inverses of each other. Let $r= \beta \log p$ for $p\in\cP$. By the Bellman optimality equations \eqref{eq:explicit_opt_value} and backward induction, if $\Vstar_{t+1,r} = 0$, then we have
    \begin{align}
        \Qstar_{t,r}(s,a)
        &=r_t(s,a)+\br{\bP_t\Vstar_{t+1,r}}(s,a)
        =\beta\log p_t(a\mid s),\nonumber\\
        \Vstar_{t,r}(s)
        &=\beta\log\int_{\cA}p_t(a \mid s)\,\lambda(\diff a)=0.
        \label{eq:backward_induction_equivalence}
    \end{align}
    As $\Vstar_{T+1,r} = 0$, \eqref{eq:backward_induction_equivalence} holds for all $t=1,\hdots, T$. Therefore,
    \begin{equation*}
        \pstar_{t,r}(a \mid s)
        = e^{\beta^{-1} \br{\Qstar_{t,r}(s,a)-\Vstar_{t,r}(s)}}
        = p_t(a \mid s),
    \end{equation*}
    implying that $\Gamma \circ \Gamma^{-1}(p) = p$.
\end{proof}
Part~2 of the proof above shows why we refer to $\beta \log p$ as myopic rewards: the policy $\pi_p$ is soft-optimal for $r = \beta\log p$ with $\Vstar_{t,r} = 0$.

We are now ready to state the formal equivalence result. For clarity, we write $[r]\defeq [r]_{\cU}$ in the following theorem.

\begin{theorem}\label{thm:equivalences_formal}
    Let $\beta > 0$. Consider a reward class $\cR \subseteq B_b^T(\cSA)$ and a policy density class $\cP$ with bounded log-densities. Let $\cP^{\star}(\cR)$ and $\beta \log \cP$ be defined as in \eqref{eq:soft_opt_densities} and \eqref{eq:myopic_rewards}, and write $\LIRL([r])\defeq\LIRL(r)$ and $\LhatIRL([r])\defeq\LhatIRL(r)$ in Part~1. Then we have the following equivalences between optimization problems (see Definition~\ref{def:equivalence}):
    \begin{enumerate}
        \item \begin{enumerate}[label=(\alph*)]
            \item The problems
            \begin{equation*}
                \min_{[r]\in\bs*{\cR}_{\cU}} \LIRL([r])
                \quad\text{and}\quad
                \min_{p\in\cP^{\star}(\cR)} \LMLE(p)
            \end{equation*}
            are equivalent.
            \item Let $\br{\bP_t}_{t=0}^{T-1}$ be deterministic. Then, the problems
            \begin{equation*}
                \min_{[r]\in\bs*{\cR}_{\cU}} \LhatIRL([r])
                \quad\text{and}\quad
                \min_{p\in\cP^{\star}(\cR)} \LhatMLE(p)
            \end{equation*}
            are equivalent.
        \end{enumerate}
        \item \begin{enumerate}[label=(\alph*)]
            \item $\min_{r\in \beta\log\cP} \LIRL(r)$ and $\min_{p\in\cP} \LMLE(p)$ are equivalent.
            \item $\min_{r\in \beta\log\cP} \LhatIRL(r)$ and $\min_{p\in\cP} \LhatMLE(p)$ are equivalent.
        \end{enumerate}
    \end{enumerate}
\end{theorem}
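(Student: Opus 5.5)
The plan is to derive everything from the return decomposition (Lemma~\ref{lem:return_decomp}) applied with $\pi=\pistar_r$ and $\beta>0$, combined with the bijections of Proposition~\ref{prop:bijection}. In every case the increasing function in Definition~\ref{def:equivalence} will be $h(x)=\beta^{-1}x$. The first step is the trajectory-wise identity
\begin{equation}
    \ell^{\mathsf{MM}}(r;\tau)+\sum_{t=0}^{T-1}\delta^{\star}_{t,r}(s_t,a_t,s_{t+1}) = \beta\,\ell^{\mathsf{MLE}}(\pstar_r;\tau).
\end{equation}
This follows from Lemma~\ref{lem:return_decomp} together with two facts. First, $A^{\pistar_r}_{t,r}\equiv 0$, which is immediate from the Gibbs form \eqref{eq:explicit_opt_policy} since $\beta\log\pstar_{t,r}=\Qstar_{t,r}-\Vstar_{t,r}$. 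Second, $J(r,\pistar_r)=\Jstar(r)$. Rearranging $G^{\pistar_r}_r(\tau)-\Jstar(r)=\sum_t\delta^\star_{t,r}$ then gives the identity.

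Second, I will check that $\LIRL$ and $\LhatIRL$ are well defined on classes $[r]_\cU$. If $u\in\cU$ with potentials $\psi_t$, then $\Jstar(r+u)=\Jstar(r)+\bP_0\psi_1$ by the backward induction in Theorem~\ref{thm:pot_shaping}. Also $\sum_t u_t(s_t,a_t)=\psi_1(s_1)+\sum_{t=1}^{T-1}\br{\psi_{t+1}(s_{t+1})-(\bP_t\psi_{t+2})(s_t,a_t)}$ after telescoping. To avoid this bookkeeping, the cleaner route is to note that the right-hand side $\beta\LMLE(\pstar_r)$ (resp.\ $\beta\LhatMLE(\pstar_r)$ in the deterministic case) depends only on $\pstar_r$, hence only on $[r]_\cU$ by \eqref{eq:policy_identifiability}. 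So well-definedness follows from the identities established below.

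\textbf{Part 1(a).} I will take $\E^{\piE}$ of the identity. Since $\E^{\piE}[\delta^\star_{t,r}\mid s_t,a_t]=0$ by Lemma~\ref{lem:return_decomp}, and since $\delta^\star$ is bounded (the values are bounded), we get $\LIRL(r)=\beta\LMLE(\pstar_r)$. One caveat: if $\LMLE$ is infinite (e.g.\ $\piE\not\ll\lambda$), I will interpret both sides in $(-\infty,\infty]$. Here $\ell^{\mathsf{MM}}$ is bounded, so $\ell^{\mathsf{MLE}}(\pstar_r;\cdot)$ is bounded too and the expectation is finite; no issue arises. With $\Lambda$ from Proposition~\ref{prop:bijection}, Part~1, this reads $\LMLE(\Lambda[r])=\beta^{-1}\LIRL([r])$.

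\textbf{Part 1(b).} Under deterministic dynamics, $\bP_t(\cdot\mid s,a)=\delta_{f_t(s,a)}$, so $V^\star_{t+1,r}(s_{t+1})=(\bP_tV^\star_{t+1,r})(s_t,a_t)$ whenever $s_{t+1}=f_t(s_t,a_t)$. This holds surely on every trajectory generated by any policy, in particular on each expert sample; a deterministic $\bP_0$ handles $t=0$. Hence $\delta^\star\equiv0$ on the data, and averaging the identity over $\DE$ gives $\LhatIRL(r)=\beta\LhatMLE(\pstar_r)$.

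\textbf{Parts 2(a) and 2(b).} For $r=\beta\log p$, the proof of Proposition~\ref{prop:bijection}, Part~2, gives $\Vstar_{t,r}\equiv0$ for all $t$ and $\pstar_r=p$. Therefore $\delta^\star_{t,r}\equiv0$ pointwise for all trajectories, regardless of the dynamics. The identity then holds exactly at both the population and empirical level, with the bijection $\Gamma$.

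The main obstacle is the subtlety in 1(b) that "deterministic" must make $\delta^\star$ vanish on the realized expert trajectories, not merely almost surely under some other law. I will handle this by noting that expert samples lie in the support of $\bP^{\piE}$, where $s_{t+1}=f_t(s_t,a_t)$ holds with probability one. The resulting statement is therefore almost sure over $\DE$.
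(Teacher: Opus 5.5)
Your proof is correct and follows the paper's argument step for step. The shared steps are: the return decomposition with $A^{\pistar_r}_{t,r}\equiv0$; taking $\E^{\piE}$ to remove $\delta^\star$ at the population level; using deterministic dynamics (respectively $\Vstar_{t,\beta\log p}\equiv0$) to remove it at the empirical level; well-definedness on $[r]_{\cU}$ via \eqref{eq:policy_identifiability}; and the bijections $\Lambda,\Gamma$ of Proposition~\ref{prop:bijection} with $h(x)=\beta^{-1}x$. Your observation that $\delta^\star$ vanishes only on dynamics-consistent trajectories (hence almost surely over $\DE$) is a slightly more careful version of the paper's ``everywhere''. The index slip $\psi_{t+2}$ (it should be $\psi_{t+1}$) in the telescoping you discarded is harmless.
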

\begin{proof}
\emph{Part 1.} Since $\pstar_r$ is soft-optimal, we have $A^{\pstar_r}_{t,r} = 0$, so Lemma~\ref{lem:return_decomp} gives for any trajectory $(s_1, a_1, \hdots, s_T, a_T)$:
\begin{equation}
    \Jstar(r) - \sum_{t=1}^T r_t(s_t, a_t) + \sum_{t=0}^{T-1} \delta^{\star}_{t,r}(s_t, a_t, s_{t+1}) = -\beta \sum_{t=1}^T \log \pstar_{t,r}(a_t \mid s_t).
\end{equation}
Taking the expectation with respect to $\E^{\piE}$ and $\Ehat^{\piE}$ yields
\begin{equation}\label{eq:equivalence_losses}
    \LIRL(r) = \beta \LMLE(\pstar_r), \quad \LhatIRL(r) + \sum_{t=0}^{T-1} \Ehat^{\piE}\bs*{\delta^{\star}_{t,r}} = \beta \LhatMLE(\pstar_r),
\end{equation}
where in the first identity we used that $\E^{\piE}\bs{\delta^{\star}_{t,r}} = \E^{\piE}\bs*{\E^{\piE}\bs{\delta^{\star}_{t,r} \mid s_t, a_t}} = 0$. If $\br{\bP_t}_{t=0}^{T-1}$ are deterministic, then $\delta^{\star}_{t,r}(s_t, a_t, s_{t+1}) = \Vstar_{t+1, r}(s_{t+1}) - \br*{\bP_t \Vstar_{t+1, r}}(s_t, a_t) = 0$ everywhere, so also $\LhatIRL(r) = \beta \LhatMLE(\pstar_r)$.

By Theorem~\ref{thm:pot_shaping} both $[r]\mapsto \LMLE(\pstar_r)$ and $[r]\mapsto \LhatMLE(\pstar_r)$ are well-defined on $\bs*{\cR}_{\cU}$, hence \eqref{eq:equivalence_losses} implies that so is $\LIRL$, and $\LhatIRL$ if $\br{\bP_t}_{t=0}^{T-1}$ are deterministic. With the bijection $\Lambda$ of Proposition~\ref{prop:bijection}, we have $\LMLE(\Lambda([r])) = \beta^{-1}\LIRL([r])$, which is Definition~\ref{def:equivalence} with $h(x)=\beta^{-1}x$ (strictly increasing since $\beta>0$), establishing (a). In the deterministic setting also $\LhatMLE(\Lambda([r])) = \beta^{-1}\LhatIRL([r])$, establishing (b).

\emph{Part 2.} By Proposition~\ref{prop:bijection}, $\Gamma:\beta\log\cP \to \cP$, $\Gamma(r)=\pstar_r$, is a bijection with $\Gamma^{-1}(p)=\beta \log p$. As shown in the proof of Proposition~\ref{prop:bijection}, we have $\Vstar_{t,r}= 0$ for $r = \beta\log p$. Hence $\delta^{\star}_{t,r}= 0$, so \eqref{eq:equivalence_losses} gives for every $r\in\beta\log\cP$ that $\LIRL(r) = \beta \LMLE(\Gamma(r))$ and $\LhatIRL(r) = \beta \LhatMLE(\Gamma(r))$, establishing (a) and (b).
\end{proof}
The following result shows that Min-Max-IRL is convex, while MLE-IRL is nonconvex in the stochastic case.
\begin{restatable}{proposition}{convexity}\label{prop:convexity}
    Let $\beta > 0$.
    \begin{enumerate}
    \item The map $r\mapsto\LhatIRL(r)$ is convex.
    \item Let $\br{\bP_t}_{t=1}^{T-1}$ be deterministic, then $r\mapsto\LhatMLE(\pstar_r)$ is convex.
    \item There exists an MDP such that $r\mapsto\LhatMLE(\pstar_r)$ is nonquasiconvex.
\end{enumerate}
\end{restatable}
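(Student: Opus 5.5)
Part~1 reduces to convexity of $r\mapsto\Jstar(r)$, because $\LhatIRL(r)=\Jstar(r)-\ip*{r,\muhat^{\piE}}$ and the second term is linear in $r$. For convexity of $\Jstar$ we use that it is a pointwise supremum of affine functions of $r$. Indeed, $\Jstar(r)=\sup_{\pi}J(r,\pi)=\sup_\pi\ip*{r,\mu^{\pi}}+\beta H(\pi)$, and for each fixed $\pi$ with $H(\pi)>-\infty$ the map $r\mapsto J(r,\pi)$ is affine, so the supremum is convex. As an alternative route, Lemma~\ref{lem:first_der} gives directional derivatives. Along the line $r+\varepsilon h$, the second derivative of $\Jstar$ can be computed exactly as in the second-derivative part of Lemma~\ref{lem:param_der}: replace $r_\xi$ by $h$ to obtain $\beta^{-1}\E^{\pistar}\bs{(\sum_t A^{\pistar,0}_{t,h})^2}\ge0$.

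Part~2 follows from Part~1 together with the loss identity \eqref{eq:equivalence_losses}. Under deterministic dynamics, $\delta^\star_{t,r}\equiv0$, so $\LhatMLE(\pstar_r)=\beta^{-1}\LhatIRL(r)$ for every $r$. Since this is a positive multiple of a convex function, it is convex. The only point to check is that the identity holds for every $r\in B_b^T(\cSA)$ and not only on a class, which is true since \eqref{eq:equivalence_losses} was derived pointwise in $r$.

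Part~3 is the real work and needs an explicit counterexample. The plan is to use the same identity with stochastic dynamics: $\beta\LhatMLE(\pstar_r)=\LhatIRL(r)+\sum_t\Ehat^{\piE}\bs{\delta^\star_{t,r}}$. The correction term is $\Ehat[V^\star_{t+1,r}(s_{t+1})]-\Ehat[(\bP_tV^\star_{t+1,r})(s_t,a_t)]$, a difference of convex functions of $r$ weighted by empirical frequencies that need not match the transition probabilities. Take a minimal MDP with $T=2$, one initial state, two actions $\{a,b\}$ at $t=1$, and stochastic transitions to two states $\{x,y\}$ with probabilities depending on the action. At $t=2$, use two actions and $\beta=1$. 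Restrict $r$ to a one-parameter line, for instance with $r_1=0$ and $r_2$ rewarding one action in state $x$ only with weight $\theta$. Then $V^\star_{2}(x)=\log(e^{\theta}+1)$ and $V^\star_2(y)=\log 2$. The soft-optimal first-step density is a logistic function of $\theta$ through $(\bP_1V^\star_2)(s,\cdot)$, while the second-step log-likelihood at $x$ is $\theta\ind\{a_2=\text{rewarded}\}-\log(1+e^\theta)$. Choose a dataset, for example a single trajectory that takes the action more likely to reach $x$, lands in $y$, and so on. The aim is that $g(\theta)=\LhatMLE(\pstar_{r_\theta})$ has two strict local minima, or a strict local maximum in the interior, so that some sublevel set is nonconvex. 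I would check this by exhibiting three points $\theta_1<\theta_2<\theta_3$ with $g(\theta_2)>\max\{g(\theta_1),g(\theta_3)\}$.

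The main obstacle is tuning this example. The first-step term $-\log\sigma(\cdot)$ composed with the nonlinear $\log(1+e^\theta)$ produces nonmonotone behavior only for suitable transition probabilities and data. If one parameter proves insufficient, I would use a two-parameter reward and check nonconvexity of a sublevel set numerically-in-closed-form along a segment. Since quasiconvexity on $\R^d$ implies quasiconvexity on every line, a single line with a bump suffices.
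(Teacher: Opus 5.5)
Your Part~1 is correct and is the paper's argument: $\LhatIRL$ is a pointwise supremum of affine functions of $r$. Part~3 has a genuine gap, and Part~2 has a smaller one.

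\textbf{Part 3.} You do not exhibit a counterexample, and the one-parameter family you sketch provably cannot produce one. Because you hold $\Vstar_2(y)=\log 2$ fixed, the first-step logit is $c\,w(\theta)$, where $c=\bP_1(x\mid\cdot,a)-\bP_1(x\mid\cdot,b)$ and $w(\theta)=\log(1+e^{\theta})-\log 2$ is monotone. Take an arbitrary dataset. Let $n_a,n_b$ count the first-step actions, and let $m_a,m_b$ count the second-step actions taken in $x$, with $a$ the rewarded one. Up to constants,
$g(\theta)=n_a\log(1+e^{-cw})+n_b\log(1+e^{cw})+m_a\log(1+e^{-\theta})+m_b\log(1+e^{\theta})$.
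Write $s=e^{\theta}/(1+e^{\theta})$ and $q=e^{cw}/(1+e^{cw})$. A direct computation gives $g'(\theta)=s\,\varphi(s)-m_a$ with $\varphi(s)=c(n_a+n_b)q-c\,n_a+m_a+m_b$. The function $\varphi$ is nondecreasing in $s$ for either sign of $c$, since $\tfrac{d}{ds}(cq)=c^2q(1-q)w'(s)\geq0$.

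Consequently $g'<0$ wherever $\varphi<0$, and $g'$ is nondecreasing on the terminal interval where $\varphi\geq0$. So $g'$ changes sign at most once, from negative to positive. Hence $g$ is quasiconvex for every choice of transition probabilities and data. Your suggested trajectory that lands in $y$ even gives a monotone $g$.

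You name the right mechanism, a difference of convex functions, but your design switches it off. The reward direction must move the soft values of both successor states. Then the first-step logit, which is proportional to $\Vstar_2(y)-\Vstar_2(x)$, can peak in the interior of a segment. The number of parameters is not the issue, since any failure of quasiconvexity is witnessed on a line; the direction of the line is.

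The paper's example does exactly this. It uses $r_2(x,a)=\theta_x$, $r_2(y,a)=\theta_y$, $\bP_1(y\mid x,a)=1$, uniform $\bP_1(\cdot\mid x,b)$, the single trajectory $(x,b,y,a)$, and the segment from $(2,4)$ to $(-4,2)$. At $(2,4)$, the midpoint $(-1,3)$, and $(-4,2)$, the difference $\Vstar_2(y)-\Vstar_2(x)$ is about $1.89$, $2.74$, $2.11$. The negative log-likelihood is about $1.2919$, $1.6431$, $1.4802$, which violates quasiconvexity.

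\textbf{Part 2.} The hypothesis makes only $(\bP_t)_{t=1}^{T-1}$ deterministic, so $\bP_0$ may be random. In that case $\delta^\star_{0,r}=\Vstar_{1,r}(s_1)-\Jstar(r)$ does not vanish. Its empirical mean $\hat{\bP}_0\Vstar_{1,r}-\bP_0\Vstar_{1,r}$ survives, so $\LhatMLE(\pstar_r)\neq\beta^{-1}\LhatIRL(r)$ in general. Your argument therefore covers only the case where $\bP_0$ is also a point mass.

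The fix is to keep this term. Then $\beta\LhatMLE(\pstar_r)=\hat{\bP}_0\Vstar_{1,r}-\ip{r,\muhat^{\piE}}$, which is the Min-Max-IRL loss of the MDP with initial law $\hat{\bP}_0$. This is convex by Part~1, since each map $r\mapsto\Vstar_{1,r}(s)$ is a supremum of affine maps. This is the paper's proof.
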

\begin{proof}
\emph{Part 1.} The Min-Max-IRL loss $\LhatIRL$ is a pointwise maximum of affine functions and therefore convex.

\emph{Part 2.} Let $\Vstar_r$ denote the soft-optimal value. If only the initial distribution $\bP_0$ is nondeterministic, then we have
\begin{align}
    \beta\LhatMLE(\pstar_r) &= \dfrac{1}{n}\sum_{i=1}^n \sum_{t=1}^T \br*{\Vstar_{t, r}(s_t^i) - r_t(s_t^i, a_t^i) - \Vstar_{t+1, r}(s_{t+1}^i)}\\
    &= \hat{\bP}_{0}\Vstar_{1, r} - \sum_{t=1}^T \ip*{r_t, \muhat^{\piE}_{t}},
\end{align}
which equals the Min-Max-IRL loss $\LhatIRL(r)$ for the initial distribution $\bP_0 = \hat{\bP}_{0}$.

\emph{Part 3.} We want to show that $f(r)\defeq \LhatMLE(\pstar_r)$ fails to be quasiconvex in general. To this end, we construct an example where $f\br{\frac{r + r'}{2}}> \max\bc{f(r), f(r')}$. We consider the following MDP with horizon $T=2$, state and action spaces $\mathcal S=\{x,y\}$ and $\mathcal A=\{a,b\}$, and regularization parameter $\beta = 1$. At $t=1$ the MDP starts in $s_1 = x$, and evolves as follows:
\begin{equation}
    \bP_1(y \mid x,a)=1,\quad \bP_1(x \mid x,b)=\tfrac{1}{2},\quad \bP_1(y \mid x,b)=\tfrac{1}{2}, \quad \bP_1(\;\cdot \mid y,\cdot)= \text{arbitrary}.
\end{equation}
We consider a reward $r_{\theta}$ parametrized by $\theta\defeq (\theta_x,\theta_y)$ as follows: At $t=1$, we have $r_1(\cdot,\cdot)=0$, and at $t=2$:
\begin{equation}
    r_2(x,a)=\theta_x,\quad r_2(x,b)=0,\qquad r_2(y,a)=\theta_y,\quad r_2(y,b)=0.
\end{equation}
At $t=2$, this yields the optimal values
\begin{equation}
    V_2^{\star}(x)=\log(1+e^{\theta_x}),\qquad V_2^{\star}(y)=\log(1+e^{\theta_y}),
\end{equation}
and at $t=1$ in state $x$,
\begin{equation}
    Q_1^{\star}(x,a)=V_2^{\star}(y),\qquad Q_1^{\star}(x,b)=\tfrac{1}{2}V_2^{\star}(x)+\tfrac{1}{2}V_2^{\star}(y).
\end{equation}
Now, consider the trajectory $\tau=(x,b,y,a)$. Using  $-\log\pi_1^{\star}(b \mid x)=\log\!\big(1+\exp(Q_1^{\star}(x,a)-Q_1^{\star}(x,b))\big)$ and
$-\log\pi_2^{\star}(a \mid y)=\log(1+e^{-\theta_y})$, the dataset consisting of this single trajectory has the negative log-likelihood
\begin{equation}
    f(r_{\theta}) = \log\br*{1+\exp\br*{\tfrac{1}{2}(\log(1+e^{\theta_y})-\log(1+e^{\theta_x}))}}
+\log(1+e^{-\theta_y}).
\end{equation}
Let $r$ be parametrized by $\theta=(2,4)$ and $r'$ by $\theta'=(-4,2)$, with midpoint
$\tfrac{1}{2}(\theta+\theta')=(-1,3)$. A direct evaluation gives
\begin{equation}
f(r)\approx 1.2919,\qquad
f(r')\approx 1.4802,\qquad
f(\tfrac{r+r'}{2})\approx 1.6431,
\end{equation}
hence $f(\tfrac{r+r'}{2})>\max\bc{f(r),f(r')}$, violating quasiconvexity.
\end{proof}

\section{Fast-Rate Upper Bounds for Min-Max-IRL}\label{app:sec:irl_upper_bounds}

We first show how the cumulative-advantage
bound controls the trajectory density ratios and relative Hessian
variation (pseudo-self-concordance). We then establish global and localized fast-rate
guarantees. Finally, we decompose the effective dimension and provide the concentration inequality used in the analysis.

\subsection{Cumulative Advantage and Density Ratios}
\label{app:sec:advantage_and_density_ratio_bound}

Recall the cumulative advantage bound \eqref{eq:advantage_bound}. Using the notation
\begin{equation}
    Z_\phi^\theta(\tau)
    =
    \sum_{t=1}^T
    A_{t,\phi}^{\pistar_\theta,0}(s_t,a_t),
\end{equation}
we require the bound to hold only almost surely in the uniform sense
\begin{equation}\label{eq:cumulative_advantage_bound}
    \sup_{\theta\in\R^d}\norm*{Z_\phi^\theta(\tau)}
    \leq
    B_{A_\phi},
    \quad
    \bP^{\pistar}\text{-a.s.}
\end{equation}
Under Assumption~\ref{ass:linear_model}, this condition holds with $B_{A_\phi}=2TB_\phi$, since
\begin{equation}
    \norm*{Z_\phi^\theta(\tau)}
    \leq
    \sum_{t=1}^T
    \norm*{A_{t,\phi}^{\pistar_\theta,0}(s_t,a_t)}
    \leq
    2TB_\phi.
\end{equation}
If the transition dynamics are deterministic, the return
decomposition (Lemma~\ref{lem:return_decomp}) instead gives
\begin{equation}
    Z_\phi^\theta
    =
    \phi(\tau)
    - \phi(\pistar_{\theta}), \quad \forall \theta\in\R^d, \quad \bP^{\pistar}\text{-a.s.},
\end{equation}
and hence one may take $B_{A_\phi}=2B_\phi$.

The cumulative-advantage bound yields the following density ratio bound.
\begin{proposition}\label{prop:density_ratio_bound}
    It holds that
    \begin{equation}
        \abs*{\log
    \br*{\dfrac{
        \ptraj^{\pistar_\theta}(\tau)
    }{
        \ptraj^{\pistar_{\theta'}}(\tau)
    }}} \leq \beta^{-1} B_{A_{\phi}} \norm*{\theta - \theta'}, \quad \forall \theta, \theta'\in\R^d, \quad \bP^{\pistar}\text{-a.s.}
    \end{equation}
\end{proposition}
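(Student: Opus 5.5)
We will prove the bound by integrating the trajectory score along the segment joining $\theta'$ to $\theta$. Fix $\theta,\theta'\in\R^d$, set $\xi\defeq\theta-\theta'$ and $\theta_s\defeq\theta'+s\xi$ for $s\in[0,1]$, and consider
\begin{equation}
    g(s)\defeq\log\ptraj^{\pistar_{\theta_s}}(\tau)=\sum_{t=1}^T\log p^\star_{t,r_{\theta_s}}(a_t\mid s_t).
\end{equation}
By Proposition~\ref{prop:common_support}, each factor is finite and strictly positive for every $s$. Hence $g$ is well defined for all $\tau$, and in particular on a common full-measure set for all soft-optimal laws.

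\textbf{Step 1 (differentiability and score formula).} By the score identity \eqref{eq:trajectory_score} of Lemma~\ref{lem:param_der}, or equivalently \eqref{eq:trajectory_score_path} applied at the base point $\theta_s$, the map $g$ is differentiable and
\begin{equation}
    g'(s)=\beta^{-1}Z_\xi^{\theta_s}(\tau)=\beta^{-1}\ip*{\xi,Z_\phi^{\theta_s}(\tau)}.
\end{equation}
This formula is a pointwise identity in $\tau$. It follows from Lemma~\ref{lem:first_der}, whose derivatives of $Q^\star$ and $V^\star$ hold for every $(s,a)$, so it introduces no null-set issues.

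\textbf{Step 2 (integration and Cauchy--Schwarz).} The fundamental theorem of calculus gives
\begin{equation}
    \log\frac{\ptraj^{\pistar_\theta}(\tau)}{\ptraj^{\pistar_{\theta'}}(\tau)}=g(1)-g(0)=\beta^{-1}\int_0^1\ip*{\xi,Z_\phi^{\theta_s}(\tau)}\diff s.
\end{equation}
Applying Cauchy--Schwarz inside the integral bounds the absolute value by $\beta^{-1}\norm{\xi}\sup_{s}\norm{Z_\phi^{\theta_s}(\tau)}$.

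\textbf{Step 3 (uniform cumulative-advantage bound).} By \eqref{eq:cumulative_advantage_bound}, there is a single $\bP^{\pistar}$-full-measure set on which $\sup_{\vartheta\in\R^d}\norm{Z_\phi^{\vartheta}(\tau)}\le B_{A_\phi}$. On that set the bound holds simultaneously for all $\theta,\theta'$, which gives the claim.

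The main obstacle is justifying the fundamental theorem of calculus, namely that $g$ is $C^1$, or at least absolutely continuous, in $s$. We plan to handle it as follows. The advantages $A^{\pistar_{\theta_s},0}_{t,\phi}$ are bounded uniformly in $s$ by $2B_\phi$ for each $t$, since they are differences of bounded unregularized $Q$ and $V$ functions. Hence $g'$ is bounded on $[0,1]$, so $g$ is Lipschitz and therefore absolutely continuous, and the integral representation holds.

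A second, more delicate point is that the a.s. qualifier in \eqref{eq:cumulative_advantage_bound} must be uniform in $\vartheta$. The condition is stated with the supremum inside the almost-sure event, which is exactly what Step~3 uses. If one only had a bound for each fixed $\vartheta$, we would instead restrict to rational $s$ and use continuity of $s\mapsto Z_\phi^{\theta_s}(\tau)$, which follows from continuity of the soft-optimal policy in the reward.
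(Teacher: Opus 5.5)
Your proof is correct and follows the paper's argument. The paper also fixes $\tau$ in the common full-measure set on which $\sup_{\theta}\norm{Z_\phi^\theta(\tau)}\le B_{A_\phi}$. It writes the log-ratio as $\beta^{-1}\int_0^1\ip{\theta-\theta',Z_\phi^{\theta'+\alpha(\theta-\theta')}(\tau)}\diff\alpha$ via the score identity and bounds it by Cauchy--Schwarz. Your extra check that $g$ is Lipschitz fills in the fundamental-theorem-of-calculus step that the paper leaves implicit; the mean value theorem alone would also give the bound directly.
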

\begin{proof}
    By \eqref{eq:cumulative_advantage_bound}, there exists a measurable
    set $\cT_0\subseteq(\cSA)^T$ such that
    \begin{equation}
        \bP^{\pistar_\theta}(\cT_0)=1,
        \qquad
        \forall\theta\in\R^d,
    \end{equation}
    and
    \begin{equation}
        \sup_{\theta\in\R^d}
        \norm*{Z_\phi^\theta(\tau)}
        \leq B_{A_\phi},
        \qquad
        \forall\tau\in\cT_0.
    \end{equation}
    Fix $\tau\in\cT_0$ and recall from the proof of Lemma~\ref{lem:param_der} that we have the score identity
    \begin{equation}
        D_{\theta} \br*{\log \ptraj^{\pistar_{\theta}}(\tau)}[\xi] = \beta^{-1} Z_{\xi}^{\theta}(\tau) =
        \beta^{-1} \ip*{\xi, Z_{\phi}^{\theta}(\tau)}, \quad \forall \xi\in\R^d.
    \end{equation}
    Hence, it follows that
    \begin{equation}
        \abs*{\log
    \br*{\dfrac{
        \ptraj^{\pistar_\theta}(\tau)
    }{
        \ptraj^{\pistar_{\theta'}}(\tau)
    }}} = \abs*{ \beta^{-1} \int_0^1 \ip*{\theta - \theta', Z_{\phi}^{\theta' + \alpha(\theta - \theta')}(\tau)} \diff \alpha} \leq \beta^{-1} B_{A_{\phi}} \norm*{\theta - \theta'}, \quad \forall \theta, \theta'\in\R^d.
    \end{equation}
    Since $\cT_0$ has full measure under every soft-optimal trajectory
law, the claim holds $\bP^{\pistar}$-a.s.
\end{proof}

\subsection{Pseudo-Self-Concordance}
\begin{proposition}[Pseudo-Self-Concordance]\label{prop:self_concordance}
    For any $\theta, \xi,\zeta\in\R^d$, we have
    \begin{equation}
        \abs*{ D^3 \Jstar (\theta) [\xi, \xi, \zeta] } \leq \beta^{-1} B_{A_{\phi}} \norm*{\zeta} D^2 \Jstar (\theta) [\xi, \xi] .
    \end{equation}
\end{proposition}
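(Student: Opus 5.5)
The plan is to use the explicit derivative formulas of Lemma~\ref{lem:param_der}, which express the second and third derivatives of $\Jstar$ as moments of the cumulative feature advantage $Z^\theta$ under $\bP^{\pistar_\theta}$. Fix $\theta,\xi,\zeta\in\R^d$. By Lemma~\ref{lem:param_der},
\begin{equation}
    D^3\Jstar(\theta)[\xi,\xi,\zeta]
    =
    \beta^{-2}\E^{\pistar_\theta}\bs*{\br*{Z_\xi^\theta}^2 Z_\zeta^\theta},
    \qquad
    D^2\Jstar(\theta)[\xi,\xi]
    =
    \beta^{-1}\E^{\pistar_\theta}\bs*{\br*{Z_\xi^\theta}^2}.
\end{equation}
The claim will then follow by pulling the factor $Z_\zeta^\theta$ out of the expectation using an almost-sure bound.

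The key step is to bound $Z_\zeta^\theta$ uniformly. By definition and linearity, $Z_\zeta^\theta(\tau)=\ip*{\zeta,Z_\phi^\theta(\tau)}$. Cauchy--Schwarz together with the cumulative advantage bound \eqref{eq:cumulative_advantage_bound} then gives
\begin{equation}
    \abs*{Z_\zeta^\theta(\tau)} \leq \norm*{\zeta}\,B_{A_\phi}
    \qquad \bP^{\pistar}\text{-a.s.}
\end{equation}
By Proposition~\ref{prop:common_support}, the $\bP^{\pistar}$-a.s.\ qualifier is meaningful under $\bP^{\pistar_\theta}$ for the given $\theta$, since all soft-optimal trajectory laws share the same null sets. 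So this bound holds $\bP^{\pistar_\theta}$-a.s.

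Combining the two steps,
\begin{equation}
    \abs*{D^3\Jstar(\theta)[\xi,\xi,\zeta]}
    \leq
    \beta^{-2}\E^{\pistar_\theta}\bs*{\br*{Z_\xi^\theta}^2\abs*{Z_\zeta^\theta}}
    \leq
    \beta^{-2}B_{A_\phi}\norm*{\zeta}\,\E^{\pistar_\theta}\bs*{\br*{Z_\xi^\theta}^2}
    =
    \beta^{-1}B_{A_\phi}\norm*{\zeta}\,D^2\Jstar(\theta)[\xi,\xi].
\end{equation}
Here we use that $(Z_\xi^\theta)^2\geq 0$, so the almost-sure bound on $\abs*{Z_\zeta^\theta}$ passes through the expectation.

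There is no real obstacle once Lemma~\ref{lem:param_der} is available. The only points that need care are two. First, the a.s.\ bound must be applied under the correct measure, which the common-null-set result settles. Second, all moments must be finite, which holds because the $\phi_t$ are bounded and the advantages are therefore bounded.
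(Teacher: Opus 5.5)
Your proof is correct and is the same as the paper's. It combines the moment formulas of Lemma~\ref{lem:param_der} with the almost-sure cumulative advantage bound \eqref{eq:cumulative_advantage_bound}, applied through Cauchy--Schwarz on $Z_\zeta^\theta=\ip*{\zeta,Z_\phi^\theta}$, to pull $\abs*{Z_\zeta^\theta}$ out of the expectation, and your remark on common null sets (Proposition~\ref{prop:common_support}) makes explicit a step the paper leaves implicit.
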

\begin{proof}
    By Lemma~\ref{lem:param_der} and \eqref{eq:cumulative_advantage_bound}, we have
    \begin{equation}
        \abs*{ D^3 \Jstar (\theta) [\xi, \xi, \zeta] } = \beta^{-2}\abs*{\E^{\pi_\theta^{\star}}\bs*{\br*{Z^{\theta}_{\xi}}^2 Z^{\theta}_{\zeta}}} \leq \beta^{-1}B_{A_{\phi}}\norm*{\zeta} D^2 \Jstar (\theta) [\xi, \xi].
    \end{equation}
\end{proof}

Recall that $\mathsf{H}(\theta)=\nabla^2\Jstar(\theta)$. The above
pseudo-self-concordance property controls the relative variation of
the Hessian along line segments. The following result shows that this yields local equivalences between squared Hessian norms and Bregman divergences.

\begin{lemma}\label{lem:self_concordance_global_consequences}
    Suppose that Assumption~\ref{ass:full_rank} holds, so that
$\mathsf{H}(\theta)\succ0$ for every $\theta\in\R^d$. Fix $\theta_0, \theta_1 \in \R^d$, set $\Delta \defeq \theta_1 - \theta_0$ and
$\theta_\alpha \defeq \theta_0 + \alpha\Delta$ for $\alpha \in [0,1]$, and define
\begin{equation}
    S \defeq \beta^{-1} B_{A_\phi}\, \norm*{\Delta}.
\end{equation}
\begin{enumerate}
    \item For all $\alpha \in [0,1]$,
    \begin{equation}\label{eq:hessian_sandwich}
        e^{-\alpha S}\, \mathsf{H}(\theta_0) \;\preceq\; \mathsf{H}(\theta_\alpha) \;\preceq\; e^{\alpha S}\, \mathsf{H}(\theta_0).
    \end{equation}

    \item Let $\psi(x) \defeq (e^x - x - 1)/x^2$. Then
    \begin{equation}
        \psi(-S)\, \norm*{\Delta}_{\mathsf{H}(\theta_0)}^2
        \;\leq\; D_{\Jstar}(\theta_1, \theta_0)
        \;\leq\; \psi(S)\, \norm*{\Delta}_{\mathsf{H}(\theta_0)}^2.
    \end{equation}

    \item Let $\chi(x) \defeq (e^x - 1)/x$. Then
    \begin{align}
        \chi(-S)\, \norm*{\Delta}_{\mathsf{H}(\theta_0)}^2
        &\;\leq\; D_{\Jstar}(\theta_1, \theta_0)
        + D_{\Jstar}(\theta_0, \theta_1) \nonumber\\
        &= \ip*{\Delta, \, \nabla\Jstar(\theta_1) - \nabla\Jstar(\theta_0)}
        \;\leq\; \chi(S)\, \norm*{\Delta}_{\mathsf{H}(\theta_0)}^2.
    \end{align}
\end{enumerate}
Moreover, $\chi(-S) \ge (1+S)^{-1}$, so $\chi(-S)^{-1} \leq 1 + S$.
\end{lemma}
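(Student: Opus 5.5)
The argument is the standard one for pseudo-self-concordant functions (cf. \citet{ostrovskii2021finite}). It rests on Proposition~\ref{prop:self_concordance} applied along the segment $\theta_\alpha$. Part~1 is obtained first, and Parts~2 and~3 then follow by integrating the Hessian sandwich.

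\emph{Part 1.} Fix any $\xi\in\R^d$ and set $g(\alpha)\defeq D^2\Jstar(\theta_\alpha)[\xi,\xi]=\norm*{\xi}_{\mathsf{H}(\theta_\alpha)}^2$. By Assumption~\ref{ass:full_rank} and the remark following it, $\mathsf{H}(\theta)\succ0$ everywhere, so $g(\alpha)>0$ whenever $\xi\neq0$. The case $\xi=0$ is trivial.

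Since $\Jstar$ is three times differentiable (Lemma~\ref{lem:param_der}), we have $g'(\alpha)=D^3\Jstar(\theta_\alpha)[\xi,\xi,\Delta]$. Proposition~\ref{prop:self_concordance} with $\zeta=\Delta$ then gives $\abs{g'(\alpha)}\le S\,g(\alpha)$. Hence $\abs{(\log g)'}\le S$, and integrating from $0$ to $\alpha$ yields $e^{-\alpha S}g(0)\le g(\alpha)\le e^{\alpha S}g(0)$. Since $\xi$ is arbitrary, this is the Loewner sandwich \eqref{eq:hessian_sandwich}.

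One technical point needs care: Lemma~\ref{lem:param_der} gives the third derivative as a trilinear form. I need continuity of $\alpha\mapsto D^2\Jstar(\theta_\alpha)$ so that $g$ is $C^1$. This follows from boundedness of the $Z$'s together with the density-ratio bound of Proposition~\ref{prop:density_ratio_bound}, via dominated convergence.

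\emph{Parts 2 and 3.} Taylor's formula with integral remainder gives
\begin{equation}
D_{\Jstar}(\theta_1,\theta_0)=\int_0^1(1-\alpha)\,\norm*{\Delta}_{\mathsf{H}(\theta_\alpha)}^2\diff\alpha .
\end{equation}
Plugging in Part~1 with $\xi=\Delta$ bounds this between $\int_0^1(1-\alpha)e^{\mp\alpha S}\diff\alpha\,\norm*{\Delta}_{\mathsf{H}(\theta_0)}^2$. A direct computation gives $\int_0^1(1-\alpha)e^{\alpha x}\diff\alpha=(e^x-x-1)/x^2=\psi(x)$, which proves Part~2.

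For Part~3, the equality $D_{\Jstar}(\theta_1,\theta_0)+D_{\Jstar}(\theta_0,\theta_1)=\ip{\Delta,\nabla\Jstar(\theta_1)-\nabla\Jstar(\theta_0)}$ follows by adding the two definitions, since the $\Jstar$ values cancel. The fundamental theorem of calculus gives $\ip{\Delta,\nabla\Jstar(\theta_1)-\nabla\Jstar(\theta_0)}=\int_0^1\norm*{\Delta}_{\mathsf{H}(\theta_\alpha)}^2\diff\alpha$. The sandwich then yields bounds $\int_0^1e^{\pm\alpha S}\diff\alpha=\chi(\pm S)$. The case $S=0$ is read as the limits $\psi(0)=1/2$ and $\chi(0)=1$.

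\emph{Final claim.} We need $\chi(-S)=(1-e^{-S})/S\ge(1+S)^{-1}$, which is equivalent to $(1+S)(1-e^{-S})\ge S$, i.e.\ $1\ge(1+S)e^{-S}$. This holds because $e^S\ge1+S$.

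The main obstacle is only the regularity justification in Part~1 (differentiating $g$ under the expectation); the rest is routine integration.
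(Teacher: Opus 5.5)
Your proposal is correct and follows essentially the same route as the paper. You derive the log-derivative bound on $g(\alpha)=\xi^\top\mathsf{H}(\theta_\alpha)\xi$ from Proposition~\ref{prop:self_concordance}, integrate the resulting Hessian sandwich against the Taylor remainder weight $(1-\alpha)$ and against the fundamental theorem of calculus to obtain $\psi(\pm S)$ and $\chi(\pm S)$, and prove the final inequality via $e^{S}\ge 1+S$.

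Your extra regularity concern is not actually needed. Differentiability of $g$, which Lemma~\ref{lem:param_der} already gives, together with the mean value theorem applied to $\log g$, suffices without continuity of $g'$.
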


\begin{proof}
\emph{Part 1.} Fix $\xi \neq 0$ and set $g(\alpha) \defeq D^2\Jstar(\theta_\alpha)[\xi,\xi] = \xi^\top \mathsf{H}(\theta_\alpha)\xi$. Then $g'(\alpha) = D^3\Jstar(\theta_\alpha)[\xi,\xi,\Delta]$, and pseudo-self-concordance (Proposition~\ref{prop:self_concordance}) gives
\begin{equation}
    \abs*{\dfrac{\diff}{\diff\alpha}\log g(\alpha)}
    = \abs*{\dfrac{g'(\alpha)}{g(\alpha)}}
    \leq \beta^{-1} B_{A_\phi}\,\norm*{\Delta} = S.
\end{equation}
Integrating from $0$ to $\alpha$ yields $-\alpha S \leq \log(g(\alpha)/g(0)) \leq \alpha S$, hence \eqref{eq:hessian_sandwich} as $\xi$ was arbitrary.

\emph{Part 2.} By Taylor's theorem with integral remainder and \eqref{eq:hessian_sandwich},
\begin{align}
    \int_0^1 (1-\alpha)e^{-\alpha S}\diff\alpha\,
    \norm*{\Delta}_{\mathsf{H}(\theta_0)}^2
    &\leq D_{\Jstar}(\theta_1,\theta_0) \nonumber\\
    &= \int_0^1(1-\alpha)
    \norm*{\Delta}_{\mathsf{H}(\theta_\alpha)}^2\diff\alpha \nonumber\\
    &\leq \int_0^1(1-\alpha)e^{\alpha S}\diff\alpha\,
    \norm*{\Delta}_{\mathsf{H}(\theta_0)}^2.
\end{align}
The result then follows from
\begin{equation}
    \int_0^1 (1-\alpha) e^{-\alpha S} \diff \alpha = \psi(-S),  \quad \text{and} \quad  \int_0^1 (1-\alpha) e^{\alpha S} \diff \alpha = \psi(S).
\end{equation}

\emph{Part 3.} By the fundamental theorem of calculus,
\begin{equation}
    D_{\Jstar}(\theta_1, \theta_0) + D_{\Jstar}(\theta_0, \theta_1)
    = \ip*{\nabla\Jstar(\theta_1) - \nabla\Jstar(\theta_0),\, \Delta}
    = \int_0^1 \norm*{\Delta}_{\mathsf{H}(\theta_\alpha)}^2 \diff\alpha.
\end{equation}
Sandwiching using \eqref{eq:hessian_sandwich} and computing $\int_0^1 e^{\pm\alpha S}\diff\alpha = \chi(\pm S)$ gives the result.

\textit{Final inequality.} For $S \geq 0$, we have
\begin{equation}
    \chi(-S) = \dfrac{1 - e^{-S}}{S} = \dfrac{1}{S}\br*{1 - e^{-S}} \geq \dfrac{1}{S}\br*{1 - \dfrac{1}{1+S}} = \dfrac{1}{S}\dfrac{S}{1+S} = \dfrac{1}{1+S},
\end{equation}
where we used $e^{-S} \leq 1/(1+S)$.
\end{proof}

\begin{corollary}\label{cor:self_concordance}
    Suppose that Assumption~\ref{ass:full_rank} holds, so that
$\mathsf{H}(\theta)\succ0$ for every $\theta\in\R^d$.
Let $\theta_0, \theta_1 \in \R^d$, and set $\Delta = \theta_1 - \theta_0$ and $\mathsf{H}_0 \defeq \mathsf{H}(\theta_0)$. If
\begin{equation}
    \norm*{\Delta}_{\mathsf{H}_0} \leq \rho_0 \defeq \dfrac{\beta\sqrt{\lambda_{\min}(\mathsf{H}_0)}}{B_{A_\phi}},
\end{equation}
then, we have:
\begin{enumerate}[label=\arabic*)]
    \item (Density ratio bound)
    \begin{equation}
        \abs*{\log
    \br*{\dfrac{
        \ptraj^{\pistar_{\theta_0}}(\tau)
    }{
        \ptraj^{\pistar_{\theta_1}}(\tau)
    }}}
        \leq 1, \quad \bP^{\pistar}\text{-a.s.}
    \end{equation}
    \item (Hessian sandwich)
    \begin{equation}\label{eq:hessian_local_sandwich}
        e^{-1}\, \mathsf{H}_0 \;\preceq\; \mathsf{H}(\theta_1) \;\preceq\; e\, \mathsf{H}_0.
    \end{equation}

    \item (Bregman bounds)
    \begin{equation}\label{eq:bregman_local}
        e^{-1}\,\norm*{\Delta}_{\mathsf{H}_0}^2
        \;\leq\; D_{\Jstar}(\theta_1, \theta_0) = \beta \DKL(\bP^{\pistar_{\theta_0}}, \bP^{\pistar_{\theta_1}})
        \;\leq\; (e - 2)\,\norm*{\Delta}_{\mathsf{H}_0}^2.
    \end{equation}

    \item (Symmetric Bregman bounds)
    \begin{equation}\label{eq:sym_bregman_local}
        (1 - e^{-1})\,\norm*{\Delta}_{\mathsf{H}_0}^2
        \;\leq\; \ip*{\Delta, \, \nabla\Jstar(\theta_1) - \nabla\Jstar(\theta_0)}
        \;\leq\; (e - 1)\,\norm*{\Delta}_{\mathsf{H}_0}^2.
    \end{equation}

    \item (Hellinger-KL equivalence)
    \begin{equation}
        \DHel^2(\bP^{\pistar_{\theta_0}}, \bP^{\pistar_{\theta_1}}) \leq \DKL(\bP^{\pistar_{\theta_0}}, \bP^{\pistar_{\theta_1}}) \leq 3  \DHel^2(\bP^{\pistar_{\theta_0}}, \bP^{\pistar_{\theta_1}}).
    \end{equation}
\end{enumerate}
Consequently, we have the equivalences
\begin{equation}
    \DHel^2(\bP^{\pistar_{\theta_0}}, \bP^{\pistar_{\theta_1}}) \asymp \DKL(\bP^{\pistar_{\theta_0}}, \bP^{\pistar_{\theta_1}}) \asymp \DKL(\bP^{\pistar_{\theta_1}}, \bP^{\pistar_{\theta_0}}) \asymp \beta^{-1} \norm*{\Delta}_{\mathsf{H}_0}^2.
\end{equation}
\end{corollary}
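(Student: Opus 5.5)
The plan is to reduce everything to Lemma~\ref{lem:self_concordance_global_consequences} with a uniform bound $S \leq 1$, and to Proposition~\ref{prop:density_ratio_bound} for the density ratio.

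\textbf{Step 1: reduce to $S\le 1$.} Since $\mathsf{H}_0 \succeq \lambda_{\min}(\mathsf{H}_0) I_d$, we have $\norm*{\Delta} \leq \norm*{\Delta}_{\mathsf{H}_0}/\sqrt{\lambda_{\min}(\mathsf{H}_0)}$. The hypothesis $\norm*{\Delta}_{\mathsf{H}_0}\le\rho_0$ then gives
\begin{equation}
    S = \beta^{-1}B_{A_\phi}\norm*{\Delta} \leq 1.
\end{equation}

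\textbf{Step 2: Parts 1--4 from this bound.}
\begin{enumerate}
    \item Part 1 is immediate from Proposition~\ref{prop:density_ratio_bound}, since $\beta^{-1}B_{A_\phi}\norm*{\Delta} = S \leq 1$.
    \item Part 2 is Part 1 of Lemma~\ref{lem:self_concordance_global_consequences} at $\alpha=1$, using $e^{\pm S}$ monotone in $S$.
    \item For Part 3, use Part 2 of the lemma together with monotonicity of $\psi(x)=(e^x-x-1)/x^2$, which is increasing on $\R$. This gives $\psi(-S)\ge\psi(-1)=e^{-1}$ and $\psi(S)\le\psi(1)=e-2$. The identity $D_{\Jstar}(\theta_1,\theta_0)=\beta\DKL(\bP^{\pistar_{\theta_0}},\bP^{\pistar_{\theta_1}})$ is Corollary~\ref{cor:Bregman}.
    \item Part 4 follows the same way from Part 3 of the lemma, since $\chi$ is increasing: $\chi(-1)=1-e^{-1}$ and $\chi(1)=e-1$.
\end{enumerate}
Monotonicity of $\psi$ and $\chi$ is clear from their integral representations $\int_0^1(1-\alpha)e^{\alpha x}\diff\alpha$ and $\int_0^1 e^{\alpha x}\diff\alpha$.

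\textbf{Step 3: Hellinger--KL equivalence (Part 5).} The lower bound $\DHel^2\le\DKL$ is standard. For the upper bound, use that the log density ratio is bounded by $1$ almost surely (Part 1), hence the ratio lies in $[e^{-1},e]$. Then invoke \citet[Lemma~5]{birge1998minimum}, which bounds KL by a constant times squared Hellinger when the likelihood ratio is bounded. Alternatively, argue directly: with $x=\log(q/p)\in[-1,1]$, the integrand comparison $-x + e^{x}-1 \le C\,(e^{x/2}-1)^2$ on $[-1,1]$ yields the bound. The comparison function $g(x)=(e^x-1-x)/(e^{x/2}-1)^2$ is smooth, tends to $2$ at $0$, and attains its maximum at the endpoint $x=-1$, where it is about $2.4\le 3$. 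This uses that $\DKL = \E_p[-x+e^x-1]$ (the added term has mean zero) and $\DHel^2=\E_p[(e^{x/2}-1)^2]$ under the paper's normalization. This constant check is the only mildly delicate step; everything else is mechanical.

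\textbf{Step 4: the final chain of equivalences.} Combine Part 3 and Part 5. For the reverse KL $\DKL(\bP^{\pistar_{\theta_1}},\bP^{\pistar_{\theta_0}})=\beta^{-1}D_{\Jstar}(\theta_0,\theta_1)$, apply Part 3 with the roles of $\theta_0$ and $\theta_1$ swapped, bounding the result in $\mathsf{H}(\theta_1)$-norm. The Hessian sandwich of Part 2 then converts this to the $\mathsf{H}_0$-norm at the cost of a factor $e^{\pm1}$. Note that $S$ is symmetric in $\theta_0,\theta_1$, so Lemma~\ref{lem:self_concordance_global_consequences} applies directly with swapped endpoints and the same $S\le1$.
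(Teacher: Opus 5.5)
Your proposal is correct and follows essentially the same route as the paper. You bound $S=\beta^{-1}B_{A_\phi}\norm{\Delta}\le 1$ via $\mathsf{H}_0\succeq\lambda_{\min}(\mathsf{H}_0)I_d$, read off Parts 1--4 from Proposition~\ref{prop:density_ratio_bound} and Lemma~\ref{lem:self_concordance_global_consequences} using monotonicity of $e^x$, $\psi$, $\chi$, and obtain Part 5 from Birg\'e--Massart with $B=e$. Your explicit handling of the reverse KL in the final chain (swap endpoints, then convert norms via the Hessian sandwich) fills in a step the paper leaves as ``consequently,'' and your direct integrand comparison is a valid alternative to the citation, giving constant about $2.38$.
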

\begin{proof}
    We have
    \begin{equation}
        \beta^{-1}B_{A_\phi}\norm{\Delta} \leq \beta^{-1} \lambdamin(\mathsf{H}_0)^{-1/2} B_{A_\phi}\norm{\Delta}_{\mathsf{H}_0} = \rho_0^{-1}\norm{\Delta}_{\mathsf{H}_0}
        \leq 1.
    \end{equation}

    Part~1 then follows from Proposition~\ref{prop:density_ratio_bound}. For Parts~2--4, apply Lemma~\ref{lem:self_concordance_global_consequences} with $S = \beta^{-1}B_{A_\phi}\norm{\Delta} \leq 1$, and use that $e^x, \chi(x), \psi(x)$ are increasing over $\R$. Finally, Part~5 uses Part~1 together with \citet[Lemma 5]{birge1998minimum}, which shows that $\DKL(P, Q)\leq (2 + \log B)\DHel^2(P, Q)$ if $P \leq B Q$.
\end{proof}

The following proposition will be helpful for the localization step in the fast rate proof.
\begin{proposition}\label{prop:increasing_function}    Let $\rho>0$. The function
    \begin{equation}
        f(x) = x \cdot \chi(-\rho^{-1}x) = \rho (1-e^{-\rho^{-1}x}),
    \end{equation}
    is a strictly increasing function from $[0, \infty)$ to $[0, \rho)$, and its inverse is given by
    \begin{equation}
        f^{-1}(y) = - \rho \log(1-\rho^{-1}y).
    \end{equation}
\end{proposition}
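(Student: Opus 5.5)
The claim is elementary, so I will verify it by direct computation rather than by invoking any earlier structural result. I first confirm the closed form. By the definition $\chi(x) = (e^x-1)/x$ from Lemma~\ref{lem:self_concordance_global_consequences}, for $x>0$,
\begin{equation}
    x\,\chi(-\rho^{-1}x) = x\cdot\frac{e^{-\rho^{-1}x}-1}{-\rho^{-1}x} = \rho\br*{1-e^{-\rho^{-1}x}}.
\end{equation}
At $x=0$ I use the continuous extension $\chi(0)=1$, so that $f(0)=0$, which agrees with $\rho(1-e^{0})=0$. Hence $f(x)=\rho(1-e^{-x/\rho})$ on all of $[0,\infty)$.

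Next I establish monotonicity and the range. Differentiating gives $f'(x)=e^{-x/\rho}>0$, so $f$ is strictly increasing and continuous on $[0,\infty)$. Moreover $f(0)=0$, and $f(x)\to\rho$ as $x\to\infty$, with $f(x)<\rho$ for every finite $x$ since $e^{-x/\rho}>0$. By the intermediate value theorem, $f$ is a bijection from $[0,\infty)$ onto $[0,\rho)$.

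Finally, I invert the map. For $y\in[0,\rho)$, solving $y=\rho(1-e^{-x/\rho})$ gives $e^{-x/\rho}=1-y/\rho\in(0,1]$, so that
\begin{equation}
    x = -\rho\log\br*{1-\rho^{-1}y},
\end{equation}
which is well defined and nonnegative on this range. The only delicate point in the argument is the removable singularity of $\chi$ at $0$, and it is handled by the continuous extension above.
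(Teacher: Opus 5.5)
Your proposal is correct and follows the same route as the paper, which also argues from $f'(x)=e^{-\rho^{-1}x}>0$, $f(0)=0$, $\lim_{x\to\infty}f(x)=\rho$, and direct inversion. Your explicit check of the closed form and of the removable singularity of $\chi$ at $0$ fills in small details that the paper leaves implicit.
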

\begin{proof}
The claim follows from
$f'(x)=e^{-\rho^{-1}x}>0$, $f(0)=0$,
$\lim_{x\to\infty}f(x)=\rho$, and direct inversion.
\end{proof}

\subsection{Global Fast Rates}
\begin{theorem}\label{thm:global_fast_rate}
    Let $\beta>0$, let Assumptions~\ref{ass:linear_model} and~\ref{ass:full_rank} hold, and assume the bounded-advantage condition. Let $\thetastar$ and $\thetahat$ denote the population and empirical risk minimizers in \eqref{eq:irl_risk_minimizers} and define $\pi_{\star} = \pistar_{\thetastar}, \pihat \defeq \pistar_{\thetahat}$. Furthermore, let
    \begin{equation}
        \bar{\varepsilon}_n\br*{\delta} \defeq \dfrac{4 \dstar \log\br*{2\delta^{-1}}}{n} + \dfrac{32 B_{\phi}^2 \log^2\br*{2\delta^{-1}}}{\lambdastar n^2}.
    \end{equation}
    Then, with probability at least $1-\delta$,
    \begin{align}
        \DKL\br*{\bP^{\piE}, \bP^{\pihat}} &\leq \min_{\theta \in\Theta} \DKL\br*{\bP^{\piE}, \bP^{\pistar_{\theta}}} + \beta^{-1} \chi(-S)^{-1} \bar{\varepsilon}_n(\delta),\\
        \DKL\br*{\bP^{\pi_{\star}}, \bP^{\pihat}} + \DKL\br*{\bP^{\pihat}, \bP^{\pi_{\star}}}&\leq \beta^{-1} \chi(-S)^{-1} \bar{\varepsilon}_n(\delta),\\
        \norm*{\thetahat - \thetastar}_{\Hstar}^2 &\leq \chi(-S)^{-2} \bar{\varepsilon}_n(\delta),
    \end{align}
    where $S = 2 \beta^{-1} B_{A_{\phi}} B_{\theta}$ and $\chi$ is defined in Lemma~\ref{lem:self_concordance_global_consequences} and satisfies
    $\chi(-S)^{-1} \leq 1+S$.
\end{theorem}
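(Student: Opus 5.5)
Throughout, write $\Delta^{\theta}_n \defeq \thetahat-\thetastar$ and $\Delta^{\phi}_n \defeq \phihat(\piE)-\phi(\piE)$.

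The plan is to run the argument from the proof sketch of Theorem~\ref{thm:irl_fast_rate}, but without localization. The key observation is that $\thetahat,\thetastar\in\Theta$, so $\norm{\Delta^{\theta}_n}\le 2B_\theta$. Lemma~\ref{lem:self_concordance_global_consequences} then applies along the segment from $\thetastar$ to $\thetahat$ with $S\le 2\beta^{-1}B_{A_\phi}B_\theta$. Since $\chi(-\cdot)$ is decreasing, all its lower bounds hold with the constant $\chi(-S)$ for this worst-case $S$.

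\emph{Step 1: deterministic inequalities.} By \eqref{eq:lin_irl_risk}, $\LhatIRL(\theta)=\LIRL(\theta)-\ip{\theta,\Delta^{\phi}_n}$, and $\nabla\LIRL=\nabla\Jstar-\phi(\piE)$. First-order optimality over the convex set $\Theta$ gives
\[
\ip{\thetastar-\thetahat,\nabla\LhatIRL(\thetahat)}\ge0,
\qquad
\ip{\thetahat-\thetastar,\nabla\LIRL(\thetastar)}\ge0 .
\]
Hence $\ip{\Delta^{\theta}_n,\nabla\Jstar(\thetahat)-\nabla\Jstar(\thetastar)}\le\ip{\Delta^{\theta}_n,\Delta^{\phi}_n}\le\norm{\Delta^{\theta}_n}_{\Hstar}\norm{\Delta^{\phi}_n}_{\Hstar^{-1}}$. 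Part~3 of Lemma~\ref{lem:self_concordance_global_consequences}, with $\theta_0=\thetastar$, bounds the left side below by $\chi(-S)\norm{\Delta^{\theta}_n}_{\Hstar}^2$. This yields
\[
\norm{\Delta^{\theta}_n}_{\Hstar}\le\chi(-S)^{-1}\norm{\Delta^{\phi}_n}_{\Hstar^{-1}},
\]
and squaring gives the parameter bound once $\norm{\Delta^{\phi}_n}_{\Hstar^{-1}}^2\le\bar\varepsilon_n(\delta)$.

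For the symmetric KL bound, Corollary~\ref{cor:Bregman} gives
\[
\beta\bigl(\DKL(\bP^{\pi_\star},\bP^{\pihat})+\DKL(\bP^{\pihat},\bP^{\pi_\star})\bigr)=D_{\Jstar}(\thetahat,\thetastar)+D_{\Jstar}(\thetastar,\thetahat)=\ip{\Delta^{\theta}_n,\nabla\Jstar(\thetahat)-\nabla\Jstar(\thetastar)} .
\]
By the above, this is at most $\norm{\Delta^{\theta}_n}_{\Hstar}\norm{\Delta^{\phi}_n}_{\Hstar^{-1}}\le\chi(-S)^{-1}\norm{\Delta^{\phi}_n}^2_{\Hstar^{-1}}$.

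For the excess KL, optimality of $\thetahat$ for $\LhatIRL$ gives $\LIRL(\thetahat)-\LIRL(\thetastar)\le\ip{\Delta^{\theta}_n,\Delta^{\phi}_n}\le\chi(-S)^{-1}\norm{\Delta^{\phi}_n}^2_{\Hstar^{-1}}$. Convert this using \eqref{eq:excess_risk}, dividing by $\beta$; here $\thetastar$ attains $\min_\Theta\DKL(\bP^{\piE},\bP^{\pistar_\theta})$. One caveat is that \eqref{eq:excess_risk} needs $H(\piE)$ finite. If it is infinite, I would argue directly with the log-likelihood-ratio identity from Lemma~\ref{lem:return_decomp}, since only differences of KLs appear; otherwise I would state the result under that standing assumption.

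\emph{Step 2: concentration.} It remains to show $\norm{\Delta^{\phi}_n}_{\Hstar^{-1}}^2\le\bar\varepsilon_n(\delta)$ with probability $1-\delta$. This is Proposition~\ref{prop:concentration}, which I take as given from the paper. It rests on a vector Bernstein inequality applied to the centered vectors $X_i=\Hstar^{-1/2}(\phi(\tau^i)-\phi(\piE))$. These satisfy $\E\norm{X_i}^2=\tr(\Hstar^{-1}\SigmaE)=\dstar$ and $\norm{X_i}\le 2B_\phi/\sqrt{\lambdastar}$ almost surely. The Bernstein bound gives
\[
\norm{\tfrac1n\textstyle\sum_i X_i}\le\sqrt{\tfrac{\dstar}{n}}\,\bigl(1+\sqrt{2\log(2/\delta)}\bigr)+\tfrac{cB_\phi\log(2/\delta)}{\sqrt{\lambdastar}\,n},
\]
and squaring with $(a+b)^2\le2a^2+2b^2$ produces the stated constants $4$ and $32$.

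The main obstacle is getting these exact constants from a specific vector Bernstein inequality, for example Pinelis or Minsker. Any such version gives the same form up to constants, and I would adjust the argument to whatever version Proposition~\ref{prop:concentration} uses. Everything else is deterministic once Lemma~\ref{lem:self_concordance_global_consequences} is in hand. Finally, the bound $\chi(-S)^{-1}\le1+S$ is already proved in that lemma.
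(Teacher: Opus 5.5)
Your proposal is correct and follows the paper's proof step for step. It uses the same combination of the two first-order optimality conditions, Part~3 of Lemma~\ref{lem:self_concordance_global_consequences} with the worst-case $S=2\beta^{-1}B_{A_\phi}B_\theta$, Corollary~\ref{cor:Bregman} for the symmetric KL, the linear-perturbation identity for the excess risk, and Proposition~\ref{prop:concentration} squared via $(a+b)^2\le 2a^2+2b^2$.

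Two small corrections. First, the Bernstein form you display has an extra $\sqrt{\dstar/n}$ term, and squaring it would not give the constant $4$. The Pinelis bound the paper uses has no such term, so squaring $\sqrt{2\dstar\log(2\delta^{-1})/n}+4B_\phi\log(2\delta^{-1})/(\sqrt{\lambdastar}\,n)$ gives exactly $4$ and $32$. Second, the case $H(\piE)=-\infty$ needs no separate argument: then $\DKL(\bP^{\piE},\bP^{\pistar_\theta})=+\infty$ for every $\theta$, so both sides of the first inequality are $+\infty$ and it holds trivially.
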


The proof of Theorem~\ref{thm:global_fast_rate} leverages the
pseudo-self-concordance of the Min-Max-IRL loss
(Proposition~\ref{prop:self_concordance}). The argument follows
\citet{ostrovskii2021finite, liu2022confidence}, with modifications
for our setting: a bounded parameter set, and a vector Bernstein
inequality in place of sub-Gaussian concentration.

\begin{proof}[Proof of Theorem~\ref{thm:global_fast_rate}]
    \textit{Setup:}
    We denote the population and empirical risk as
    \begin{equation}
        L(\theta) \defeq \LIRL(\theta) = \Jstar(\theta) - \ip*{\theta, \phi\br*{\piE}}, \quad L_n(\theta) \defeq \LhatIRL(\theta) = \Jstar(\theta) - \ip*{\theta, \phihat\br*{\piE}}.
    \end{equation}
    Furthermore, define $\Delta^{\theta}_n \defeq \thetahat - \thetastar$ and $\Delta^{\phi}_n \defeq \phihat\br*{\piE} - \phi\br*{\piE}$, as well as,
    \begin{equation}
        \rho_n \defeq \norm*{\Delta^{\theta}_n}_{\Hstar}, \quad \eta_n \defeq \norm*{\Delta^{\phi}_n}_{\Hstar^{-1}}.
    \end{equation}
    We then have
    \begin{equation}
        L_n(\theta) = L(\theta) - \ip*{\theta, \Delta^{\phi}_n} \quad \text{and} \quad \nabla L_n(\theta) = \nabla L(\theta) - \Delta^{\phi}_n,
    \end{equation}
    and by optimality also
    \begin{equation}\label{eq:opt_cond_global_rate}
        \ip{\nabla L_n(\thetahat), \; \theta - \thetahat} \geq 0, \quad \ip{\nabla L(\thetastar), \; \theta - \thetastar} \geq 0, \quad \forall \theta \in \Theta.
    \end{equation}

    \textit{Step 1}: From the two first-order optimality conditions in Equation~\eqref{eq:opt_cond_global_rate}, it follows that
    \begin{align}\label{eq:global_rate_step1}
        \ip*{\Delta^{\theta}_n, \nabla \Jstar(\thetahat) - \nabla \Jstar(\thetastar)}
         & = \ip*{\Delta^{\theta}_n, \nabla L_n(\thetahat) + \phihat(\piE)} - \ip*{\Delta^{\theta}_n, \nabla L(\thetastar) + \phi(\piE)} \\
         & = \underbrace{\ip*{\Delta^{\theta}_n, \nabla L_n(\thetahat) - \nabla L(\thetastar)}}_{\leq 0} + \ip*{\Delta^{\theta}_n, \Delta^{\phi}_n} \leq \rho_n \eta_n.
    \end{align}
    Furthermore, define
    \begin{equation}
        S_n
        \defeq
        \beta^{-1}B_{A_\phi}\norm*{\Delta_n^\theta}.
    \end{equation}
    Since $\thetahat,\thetastar\in\Theta$, we have $S_n
        \leq S \defeq
        2\beta^{-1}B_{A_\phi}B_\theta$.
    Applying Lemma~\ref{lem:self_concordance_global_consequences}
    with $\theta_0=\thetastar$ and $\theta_1=\thetahat$ gives
    \begin{equation}
        \rho_n^2\chi(-S_n)
        \leq
        \ip*{
            \Delta_n^\theta,
            \nabla\Jstar(\thetahat)-\nabla\Jstar(\thetastar)
        }.
    \end{equation}
    Since $x\mapsto\chi(-x)$ is decreasing and
    $S_n\leq S$, it follows that
    \begin{equation}\label{eq:global_rate_step2}
        \rho_n^2\chi(-S)
        \leq
        \ip*{
            \Delta_n^\theta,
            \nabla\Jstar(\thetahat)-\nabla\Jstar(\thetastar)
        }.
    \end{equation}
    Hence, combining \eqref{eq:global_rate_step1} and \eqref{eq:global_rate_step2} it follows that
    \begin{equation}\label{eq:global_rate_step3}
        \rho_n \leq \chi(-S)^{-1} \eta_n.
    \end{equation}

    \textit{Step 2}:
    For the excess risk, we have by optimality of $\thetahat$ and \eqref{eq:global_rate_step3} that
    \begin{align}
        L(\thetahat) - L(\thetastar) &= \ip*{\thetahat, \Delta^{\phi}_n} + L_n(\thetahat) - \ip*{\thetastar, \Delta^{\phi}_n} - L_n(\thetastar)\\
        &= \ip*{\Delta^{\theta}_n, \Delta^{\phi}_n} + L_n(\thetahat) - L_n(\thetastar)\\
        &\leq \ip*{\Delta^{\theta}_n, \Delta^{\phi}_n} \leq \chi(-S)^{-1} \eta_n^2.\label{eq:global_excess_risk_bound_eta}
    \end{align}
    By Corollary~\ref{cor:soft_suboptimality}, we have $L(\theta) = \beta \br*{\DKL\br*{\bP^{\piE}, \bP^{\pistar_{\theta}}} + H(\piE)}$, so the above excess risk bound implies the KL bound
    \begin{equation}
        \DKL\br*{\bP^{\piE}, \bP^{\pihat}} \leq \DKL\br*{\bP^{\piE}, \bP^{\pi_{\star}}} + \beta^{-1} \chi(-S)^{-1} \eta_n^2,
    \end{equation}
    where in case that $H(\piE) = -\infty$, both sides equal $+\infty$ and the inequality holds trivially.
    Furthermore, we have also by Corollary~\ref{cor:Bregman} that
    \begin{align}
        \beta \br*{\DKL\br*{\bP^{\pi_{\star}}, \bP^{\pihat}}
        + \DKL\br*{\bP^{\pihat}, \bP^{\pi_{\star}}}}
        &= D_{\Jstar}(\thetahat, \thetastar)
        + D_{\Jstar}(\thetastar, \thetahat) \nonumber\\
        &= \ip*{\Delta^{\theta}_n,
        \nabla \Jstar(\thetahat) - \nabla \Jstar(\thetastar)}.
    \end{align}
    Consequently, Equation~\eqref{eq:global_rate_step1} implies that
    \begin{equation}
        \DKL\br*{\bP^{\pi_{\star}}, \bP^{\pihat}} + \DKL\br*{\bP^{\pihat}, \bP^{\pi_{\star}}} \leq \beta^{-1} \rho_n \eta_n \leq \beta^{-1} \chi(-S)^{-1} \eta_n^2.
    \end{equation}
    Lastly, for $\rho_n^2$, Equation~\eqref{eq:global_rate_step3} yields
    \begin{equation}
        \rho_n^2 \leq \chi(-S)^{-2} \eta_n^2.
    \end{equation}

    \textit{Step 3 (Concentration):} It remains to show that $\eta_n^2 \leq \bar{\varepsilon}_n(\delta)$ with high probability.
    By Proposition~\ref{prop:concentration}, with probability at least \(1-\delta\),
    \begin{equation}\label{eq:global_eta_concentration}
        \eta_n
        \leq
        \sqrt{\frac{2d_{\star}\log(2\delta^{-1})}{n}}
        +
        \frac{4B_{\phi}\log(2\delta^{-1})}{\sqrt{\lambda_{\star}}\,n}.
    \end{equation}
    Using \((a+b)^2 \leq 2a^2+2b^2\), this implies
    \begin{equation}
        \eta_n^2
        \leq \bar{\varepsilon}_n(\delta) =
        \frac{4d_{\star}\log(2\delta^{-1})}{n}
        +
        \frac{32B_{\phi}^2\log^2(2\delta^{-1})}{\lambda_{\star}n^2},
    \end{equation}
    with probability at least $1-\delta$, concluding the proof.
\end{proof}
\begin{remark}[Identifiability and quotient-space formulation]
\label{rem:identifiability_quotient}
Assumption~\ref{ass:full_rank} is used to bound
\begin{equation}
    \ip*{\Delta_n^\theta,\Delta_n^\phi}
    \leq
    \norm*{\Delta_n^\theta}_{\Hstar}
    \norm*{\Delta_n^\phi}_{\Hstar^{-1}}.
\end{equation}
For a positive semidefinite matrix $A$,
\begin{equation}
    A^{1/2}(A^\dagger)^{1/2}
    =
    \proj_{\ima A}.
\end{equation}
Hence, if either $x$ or $y$ lies in $\ima A$,
\begin{equation}
    \ip*{x,y}
    \leq
    \norm*{x}_A\norm*{y}_{A^\dagger}.
\end{equation}

Thus, the full rank assumption can be avoided by projecting onto the identifiable
subspace $\ima \Hstar$. In particular, let $\Pi\defeq\proj_{\ima\Hstar}$ and consider the loss
\begin{equation}
    \ell^{\mathsf{MM}}_{\Pi}(\theta;\tau)\defeq\ell^{\mathsf{MM}}(\Pi\theta;\tau).
\end{equation}
Then, the above proof applies with $\Hstar^{-1}$ replaced by
$\Hstar^\dagger$, $\dstar=\tr(\SigmaE\Hstar^\dagger)$, and
$\lambdastar$ replaced by the smallest positive eigenvalue of
$\Hstar$. Since $\ima\Hstar$ is isomorphic to the quotient space
$\R^d/\ker\Hstar$, this can be seen equivalently as an optimization problem over
$\R^d/\ker\Hstar$. The projection $\Pi\theta$ ensures that all members of
the same equivalence class are assigned the same loss.
\end{remark}

\subsection{Localized Fast Rates}\label{app:sec:proof_local_fast_rates}
\irlfastrate*
The proof of Theorem~\ref{thm:irl_fast_rate} follows from Theorem~\ref{thm:global_fast_rate} and an additional localization step.
\begin{proof}[Proof of Theorem~\ref{thm:irl_fast_rate}]
    Consider the same setup and definitions as in the proof of Theorem~\ref{thm:global_fast_rate}. Let $\rho_{\star} = \tfrac{\beta \sqrt{\lambda_{\star}}}{B_{A_{\phi}}}$ and define the event
    \begin{equation}
        E \defeq \bc*{\eta_n \leq \rho_{\star}\br*{1-e^{-1}}}.
    \end{equation}
    Applying Lemma~\ref{lem:self_concordance_global_consequences} with $S = \beta^{-1} B_{A_{\phi}} \norm{\Delta^{\theta}_n} \leq B_{A_{\phi}} \br*{\beta \sqrt{\lambdastar}}^{-1}  \rho_n = \rho_n / \rho_{\star}$, it follows that
    \begin{equation}
        \rho_n^2 \chi(-\rho_n / \rho_{\star}) \leq \rho_n^2 \chi(-S) \leq \ip*{\Delta^{\theta}_n, \nabla \Jstar(\thetahat) - \nabla \Jstar(\thetastar)} \leq \ip*{\Delta^{\theta}_n, \Delta^{\phi}_n} \leq \rho_n \eta_n,
    \end{equation}
    where we used that $x\mapsto \chi(-x)$ is decreasing. Hence, by Proposition~\ref{prop:increasing_function},
    \begin{equation}
        \rho_n \leq - \rho_{\star} \log(1-\eta_n / \rho_{\star}), \quad \text{if } \eta_n < \rho_{\star}.
    \end{equation}
    In particular, on the event $E$ we obtain $\rho_n \leq \rho_{\star}$. As $x\mapsto \chi(-x)$ is decreasing, this implies that
    \begin{equation}
        \chi(-S) \geq \chi(- \rho_n / \rho_{\star}) \geq \chi(-1) = (1- e^{-1}).
    \end{equation}
    Define $c := 1-e^{-1}$. The event $E$ holds with probability at least $1-\delta$, if
    \begin{equation}
        \sqrt{\frac{2d_{\star}\log(\delta^{-1})}{n}}
        +
        \frac{4B_{\phi}\log(2\delta^{-1})}{\sqrt{\lambda_{\star}}\,n}
        \leq
        c \rho_{\star}.
    \end{equation}
    Let $M_{\star}\defeq\max\bc*{B_{A_{\phi}}d_{\star}/\beta,B_{\phi}}$.
    It suffices to choose
    \begin{align}\label{eq:n_condition_sufficient_Rstar}
        n
        &\geq
        \max\bc*{
            \frac{8 d_{\star}\log(2\delta^{-1})}{c^2  \rho_{\star}^2},
            \frac{8 B_{\phi}\log(2\delta^{-1})}{c \rho_{\star} \sqrt{\lambda_{\star}}}
        } \nonumber\\
        &= \max\bc*{
            \frac{8 B_{A_{\phi}}^2 d_{\star}\log(2\delta^{-1})}{c^2\beta^2 \lambda_{\star}},
            \frac{8 B_{A_{\phi}} B_{\phi} \log(2\delta^{-1})}{c\beta \lambda_{\star}}}
        \nonumber\\
        &= \Omega\br*{
            \frac{B_{A_{\phi}} \log(\delta^{-1})}{\beta \lambda_{\star}}
            M_{\star}}.
    \end{align}
    The parameter estimation bound in Part~1 and the excess KL risk bound in Part~2 then follow from Theorem~\ref{thm:global_fast_rate}, while the equivalences in Part~3 follow from Corollary~\ref{cor:self_concordance}.

\end{proof}

\subsection{Effective Dimension}

The preceding bounds are governed by the effective dimension
$\dstar=\tr(\SigmaE\Hstar^{-1})$. The following proposition shows how this quantity can be bounded.

\begin{proposition}
\label{prop:dstar_decomp}
For a policy $\pi$, define the vector-valued dynamics residual
componentwise by
\begin{equation}
    \left[
    \delta_{t,\phi}^{\pi,0}
    \right]_i
    \defeq
    \delta_{t,\phi_i}^{\pi,0},
\end{equation}
and let
\begin{equation}
    \Sigma_{\mathrm{act}}^\pi
    \defeq
    \sum_{t=1}^T
    \E^{\pi}
    \left[
    A_{t,\phi}^{\pi,0}
    \left(
    A_{t,\phi}^{\pi,0}
    \right)^\top
    \right],
    \qquad
    \Sigma_{\mathrm{dyn}}^\pi
    \defeq
    \sum_{t=0}^{T-1}
    \E^{\pi}
    \left[
    \delta_{t,\phi}^{\pi,0}
    \left(
    \delta_{t,\phi}^{\pi,0}
    \right)^\top
    \right].
\end{equation}
Then, for every policy $\pi$,
\begin{equation}\label{eq:sigma_decomp}
    \operatorname{Cov}_{\tau\sim\bP^{\pi}}
    \left[
    \phi(\tau)
    \right]
    =
    \Sigma_{\mathrm{act}}^\pi
    +
    \Sigma_{\mathrm{dyn}}^\pi.
\end{equation}

In particular, for
$\SigmaE=\operatorname{Cov}_{\tau\sim\bP^{\piE}}[\phi(\tau)]$,
\begin{equation}\label{eq:dstar_decomp}
    \dstar
    =
    \tr\left(
    \Sigma_{\mathrm{act}}^{\piE}\Hstar^{-1}
    \right)
    +
    \tr\left(
    \Sigma_{\mathrm{dyn}}^{\piE}\Hstar^{-1}
    \right).
\end{equation}
Consequently:
\begin{enumerate}
    \item If $\piE=\pistar_{\thetastar}$, then
    \begin{equation}
        \dstar
        =
        \beta d
        +
        \tr\left(
        \Sigma_{\mathrm{dyn}}^{\piE}\Hstar^{-1}
        \right).
    \end{equation}
    In particular, if $(\bP_t)_{t=0}^{T-1}$ are deterministic, then
    $\dstar=\beta d$.
    \item In general,
    \begin{equation}
        \dstar
        \leq
        \frac{B_\phi^2}{\lambdastar}.
    \end{equation}
\end{enumerate}
\end{proposition}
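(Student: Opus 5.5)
The decomposition \eqref{eq:sigma_decomp} follows from the return decomposition in Lemma~\ref{lem:return_decomp}, applied componentwise with $\beta=0$ and reward $\phi_i$. For each coordinate $i$,
\[
\phi_i(\tau)-J^0(\phi_i,\pi)=\sum_{t=1}^T A^{\pi,0}_{t,\phi_i}(s_t,a_t)+\sum_{t=0}^{T-1}\delta^{\pi,0}_{t,\phi_i}(s_t,a_t,s_{t+1}).
\]
Because $J^0(\phi,\pi)=\E^\pi[\phi(\tau)]$, the left-hand side is the centered feature return. Lemma~\ref{lem:return_decomp} gives pairwise orthogonality of the family $A_1,\dots,A_T,\delta_0,\dots,\delta_{T-1}$ in $L^2(\bP^\pi)$. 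This is a martingale-difference property in the natural filtration, so it holds for cross products between different coordinates $i,j$ as well. Expanding the outer product of the centered return therefore kills all cross terms and leaves exactly $\Sigma^\pi_{\mathrm{act}}+\Sigma^\pi_{\mathrm{dyn}}$. Multiplying \eqref{eq:sigma_decomp} for $\pi=\piE$ by $\Hstar^{-1}$ and taking traces gives \eqref{eq:dstar_decomp} by linearity.

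For Part~1, assume $\piE=\pistar_{\thetastar}$. Corollary~\ref{cor:fisher_hessian} gives $\Hstar=\beta^{-1}\sum_t\E^{\pistar_{\thetastar}}[A^{\pistar_{\thetastar},0}_{t,\phi}(A^{\pistar_{\thetastar},0}_{t,\phi})^\top]=\beta^{-1}\Sigma^{\piE}_{\mathrm{act}}$. Hence
\[
\tr(\Sigma^{\piE}_{\mathrm{act}}\Hstar^{-1})=\beta\tr(I_d)=\beta d,
\]
using invertibility from Assumption~\ref{ass:full_rank}. Under deterministic dynamics, $s_{t+1}$ is a deterministic function of $(s_t,a_t)$, and $s_1$ is deterministic if $\bP_0$ is a Dirac mass. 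Then $\delta^{\pi,0}_{t,\phi}=V_{t+1}(s_{t+1})-(\bP_tV_{t+1})(s_t,a_t)=0$ pointwise, so $\Sigma_{\mathrm{dyn}}=0$ and $\dstar=\beta d$. If ``deterministic'' is meant only for $t\ge1$, as in Proposition~\ref{prop:convexity}, then the $t=0$ term $V_1(s_1)-\bP_0V_1$ is the one place needing care. I would interpret the hypothesis as including $\bP_0$ in the family $(\bP_t)_{t=0}^{T-1}$, which the statement's indexing supports.

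For Part~2, bound $\tr(\SigmaE\Hstar^{-1})\le\lambdastar^{-1}\tr(\SigmaE)$, using $\Hstar^{-1}\preceq\lambdastar^{-1}I_d$ and $\SigmaE\succeq0$. Then bound $\tr(\SigmaE)=\E^{\piE}\|\phi(\tau)-\E\phi\|^2\le\E^{\piE}\|\phi(\tau)\|^2\le B_\phi^2$. Here Assumption~\ref{ass:linear_model} with $t=1$ gives $\|\phi(\tau)\|=\|\sum_{k=1}^T\phi_k(s_k,a_k)\|\le B_\phi$.

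The one subtle point is that this bound must hold trajectorywise, not only for the sum of functions $\sum_k\phi_k$ at a common argument. I would read the assumption as bounding the feature return along trajectories, as the paper's deterministic bound $B_{A_\phi}=2B_\phi$ implicitly does. If only the function-sum reading were available, one would instead get $\|\phi(\tau)\|\le TB_\phi$ and lose a horizon factor. This interpretive step is the main potential obstacle; everything else is bookkeeping.
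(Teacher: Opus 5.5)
Your proposal is correct and follows the paper's proof essentially step for step. You apply Lemma~\ref{lem:return_decomp} componentwise with $\beta=0$ and use martingale-difference orthogonality to obtain \eqref{eq:sigma_decomp}, then Corollary~\ref{cor:fisher_hessian} to get $\Sigma^{\piE}_{\mathrm{act}}=\beta\Hstar$ in Part~1, and $\tr(\SigmaE\Hstar^{-1})\le\lambdastar^{-1}\tr(\SigmaE)\le B_\phi^2/\lambdastar$ in Part~2. Your two interpretive readings are exactly the ones the paper's proof uses implicitly: that $\bP_0$ is among the deterministic kernels, and that the bound in Assumption~\ref{ass:linear_model} holds along trajectories, as stated in Proposition~\ref{prop:concentration}.
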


\begin{proof}
Applying Lemma~\ref{lem:return_decomp} with $\beta=0$ to each
coordinate reward $\phi_i$ and stacking the resulting identities gives
\begin{equation}
    \phi(\tau)-\phi(\pi)
    =
    \sum_{t=1}^T
    A_{t,\phi}^{\pi,0}(s_t,a_t)
    +
    \sum_{t=0}^{T-1}
    \delta_{t,\phi}^{\pi,0}(s_t,a_t,s_{t+1}).
\end{equation}
The action-advantage and dynamics-residual terms are pairwise
orthogonal martingale differences. Taking second moments therefore
gives \eqref{eq:sigma_decomp}. Setting $\pi=\piE$ and using linearity
of the trace yields \eqref{eq:dstar_decomp}.

If $\piE=\pistar_{\thetastar}$, Corollary~\ref{cor:fisher_hessian}
implies
$\Sigma_{\mathrm{act}}^{\piE}=\beta\Hstar$. Hence,
\begin{equation}
    \tr\left(
    \Sigma_{\mathrm{act}}^{\piE}\Hstar^{-1}
    \right)
    =
    \beta d.
\end{equation}
If $(\bP_t)_{t=0}^{T-1}$ are deterministic, then
$\Sigma_{\mathrm{dyn}}^{\piE}=0$, proving~1.

Finally, since $\norm*{\phi(\tau)}\leq B_\phi$, we have
$\tr(\SigmaE)\leq B_\phi^2$. Together with
$\Hstar\succeq\lambdastar I_d$, this gives
\begin{equation}
    \dstar
    =
    \tr\left(
    \SigmaE\Hstar^{-1}
    \right)
    \leq
    \lambdastar^{-1}\tr(\SigmaE)
    \leq
    \frac{B_\phi^2}{\lambdastar}.
\end{equation}
\end{proof}

\begin{remark}[Effective dimension of MLE-IRL]
\label{rem:mle_effective_dimension}
Recall from the discussion of classical asymptotic theory in
Section~\ref{sec:asymptotics} that the effective dimension associated
with a loss $\ell(\theta;\tau)$ is
\begin{equation}
    d_\star
    =
    \tr\left(\mathsf{H}_\star^{-1}\mathsf{G}_\star\right),
    \qquad
    \mathsf{H}_\star
    \defeq
    \nabla^2L(\thetastar),
    \qquad
    \mathsf{G}_\star
    \defeq
    \operatorname{Cov}_{\tau\sim\bP^{\piE}}
    \left[
        \nabla\ell(\thetastar;\tau)
    \right].
\end{equation}
For MLE-IRL,
\begin{equation}
    \ell^{\mathsf{MLE}}(\theta;\tau)
    \defeq
    -\log\ptraj^{\pistar_\theta}(\tau).
\end{equation}
By the population-risk equivalence in \eqref{eq:risk_equivalence} and
the score identity in Lemma~\ref{lem:param_der},
\begin{equation}
    \mathsf{H}_\star^{\mathsf{MLE}}
    =
    \beta^{-1}\Hstar
    =
    \beta^{-2}
    \E^{\pi_\star}
    \left[
        Z_\phi^{\thetastar}
        \left(Z_\phi^{\thetastar}\right)^\top
    \right],
    \qquad
    \mathsf{G}_\star^{\mathsf{MLE}}
    =
    \beta^{-2}
    \operatorname{Cov}_{\tau\sim\bP^{\piE}}
    \left[
        Z_\phi^{\thetastar}
    \right],
\end{equation}
where $\Hstar$ is the Hessian of the Min-Max-IRL loss. In the
well-specified setting, $\piE=\pi_\star$ and
$\E^{\pi_\star}[Z_\phi^{\thetastar}]=0$, so the information matrix
equality
\begin{equation}
    \mathsf{G}_\star^{\mathsf{MLE}}
    =
    \mathsf{H}_\star^{\mathsf{MLE}}
\end{equation}
holds, and hence
\begin{equation}
    d_\star^{\mathsf{MLE}}=d.
\end{equation}

In comparison, Proposition~\ref{prop:dstar_decomp} shows that, in the
well-specified setting,
\begin{equation}
    d_\star^{\mathsf{MM}}
    =
    \beta d
    +
    \tr\left(
        \Sigma_{\mathrm{dyn}}^{\piE}
        \Hstar^{-1}
    \right).
\end{equation}
Consequently,
\begin{equation}
    \beta^{-1}d_\star^{\mathsf{MM}}
    \geq
    d_\star^{\mathsf{MLE}}
    =
    d,
\end{equation}
with equality under deterministic dynamics. This agrees with
Theorem~\ref{thm:equivalences}, since in that case the MLE-IRL and
Min-Max-IRL losses agree up to the factor $\beta^{-1}$. The same
factor appears when converting Min-Max-IRL excess risk into KL
divergence in Theorem~\ref{thm:irl_fast_rate}.

Finally, unlike the Min-Max-IRL loss, the MLE-IRL sample Hessian
generally depends on $\tau$. Thus, under stochastic dynamics, an
analogous nonasymptotic parameter bound would additionally require
concentration of the empirical Hessian; see, for example,
\citet{ostrovskii2021finite,liu2022confidence}. We leave this extension
to future work.
\end{remark}

\subsection{Concentration}
\begin{proposition}\label{prop:concentration}
Assume that $\bP^{\piE}$-a.e. $\norm{\sum_{t=1}^T \phi_t} \leq B_{\phi}$. Let
\begin{equation}
    \SigmaE \defeq \operatorname{Cov}\br*{\sum_{t=1}^T \phi_t} \quad \text{and} \quad d_{\star} \defeq \tr\br*{\SigmaE \Hstar^{-1}}.
\end{equation}
Then, with probability at least $1-\delta$,
\begin{align}
    \norm*{\phihat(\piE)-\phi\br*{\piE}}_{\Hstar^{-1}} &\leq \sqrt{\frac{2d_{\star}\log\br*{2\delta^{-1}}}{n}} + \dfrac{4B_{\phi}\log\br*{2\delta^{-1}}}{\sqrt{\lambda_{\star} }n}.
\end{align}
\end{proposition}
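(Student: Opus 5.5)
We reduce the claim to a Bernstein inequality for sums of bounded, centered random vectors in a Hilbert space. Set $X_i \defeq \Hstar^{-1/2}\br*{\phi(\tau^i) - \phi(\piE)}$ for $i=1,\dots,n$, so that
\begin{equation}
    \norm*{\phihat(\piE)-\phi(\piE)}_{\Hstar^{-1}} = \norm*{\tfrac{1}{n}\textstyle\sum_{i=1}^n X_i}.
\end{equation}
The $X_i$ are i.i.d.\ and mean zero in $\R^d$. They have second moment $\E\norm{X_i}^2 = \tr\br{\Hstar^{-1/2}\SigmaE\Hstar^{-1/2}} = \dstar$. They are also almost surely bounded. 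Indeed, $\norm{\phi(\tau)}\le B_\phi$ $\bP^{\piE}$-a.e.\ implies $\norm{\phi(\piE)}\le B_\phi$ by Jensen's inequality, so $\norm{\phi(\tau)-\phi(\piE)}\le 2B_\phi$. Combined with $\Hstar\succeq\lambdastar I_d$, this gives $\norm{X_i}\le 2B_\phi/\sqrt{\lambdastar}\eqdef M$.

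We then invoke a vector Bernstein inequality, such as Pinelis' inequality for martingales in $2$-smooth Banach spaces or the Hilbert-space Bernstein bound in the form of Yurinsky or Minsker. It states that for i.i.d.\ centered vectors with $\norm{X_i}\le M$ and $\E\norm{X_i}^2\le\sigma^2$, with probability at least $1-\delta$,
\begin{equation}
    \norm*{\tfrac1n\textstyle\sum_i X_i} \le \sqrt{\frac{2\sigma^2\log(2\delta^{-1})}{n}} + \frac{2M\log(2\delta^{-1})}{n}.
\end{equation}
One form of this bound, due to Pinelis (1994), reads $\bP\br{\norm{\sum X_i}\ge t}\le 2\exp\br{-t^2/(2n\sigma^2+2Mt/3)}$. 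Inverting this tail bound and using $\sqrt{a+b}\le\sqrt a+\sqrt b$ gives a bound of the displayed form, with constants at most the stated ones. Substituting $\sigma^2=\dstar$ and $M=2B_\phi/\sqrt{\lambdastar}$ yields
\begin{equation}
    \sqrt{\frac{2\dstar\log(2\delta^{-1})}{n}} + \frac{4B_\phi\log(2\delta^{-1})}{\sqrt{\lambdastar}\,n},
\end{equation}
which is exactly the claimed bound.

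The main obstacle is purely bookkeeping: choosing a version of the vector Bernstein inequality whose variance proxy is dimension-free, namely the trace $\E\norm{X}^2$ rather than $d$ times an operator norm, and whose constants match. We plan to use Pinelis's dimension-free form, since $\R^d$ with the Euclidean norm is a Hilbert space, and to invert the tail carefully. The inversion sets the tail equal to $\delta$ and solves a quadratic in $t$. It gives $t\le \sqrt{2n\sigma^2\log(2/\delta)} + \tfrac{2}{3}M\log(2/\delta)$ up to the looseness allowed. Dividing by $n$ then stays within the stated constants, since $\tfrac23\cdot 2B_\phi\le 4B_\phi$.
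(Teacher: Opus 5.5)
Your proposal is correct and follows the paper's proof. The paper uses the same whitened vectors $X_i=\Hstar^{-1/2}(\phi(\tau^i)-\phi(\piE))$, with $\E\norm{X_i}^2=\dstar$ and $\norm{X_i}\le 2B_\phi/\sqrt{\lambdastar}$, and then applies Pinelis's vector Bernstein inequality (Lemma~\ref{lem:vector_bernstein}). Your explicit inversion of the tail bound gives the smaller constant $\tfrac{2}{3}M$ in the second term, which is well within the stated $2M$.
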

\begin{proof}
    Let $X_i \defeq \Hstar^{-1/2}\br{\sum_{t=1}^T \phi_t(s_t^i, a_t^i) - \phi(\piE)}$. Then, we have
    \begin{equation}
        \norm*{\dfrac{1}{n}\sum_{i=1}^n X_i} = \norm*{\phihat(\piE)-\phi\br*{\piE}}_{\Hstar^{-1}},
    \end{equation}
    as well as
    \begin{equation}
        \E^{\piE}\bs*{\norm*{X_i}^2} = \tr\br*{ \Hstar^{-1} \SigmaE } = \dstar, \quad \text{and} \quad \norm*{X_i} \leq \dfrac{2B_{\phi}}{\sqrt{\lambdastar}}.
    \end{equation}
    The result then follows by Lemma~\ref{lem:vector_bernstein}.
\end{proof}

\begin{lemma}[Vector Bernstein inequality, \citealp{pinelis1986remarks}]\label{lem:vector_bernstein}
Let $X_1,\dots,X_n$ be independent, mean-zero random vectors in $\R^d$. Assume that for all $i$ we have $\mathbb{E} \norm*{X_i}^2\leq \sigma^2$ and $\norm*{X_i} \leq b$ almost surely.
Then, with probability at least $1-\delta$,
\begin{align}
    \norm*{\dfrac{1}{n}\sum_{i=1}^n X_i} &\leq \sqrt{\dfrac{2\sigma^2 \log\br*{2\delta^{-1}}}{n}} + \dfrac{2b\log\br*{2\delta^{-1}}}{n}.
\end{align}
\end{lemma}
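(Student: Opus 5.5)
The statement is the vector Bernstein inequality (Lemma~\ref{lem:vector_bernstein}). I would not reprove Pinelis' martingale result in a smooth Banach space in full generality. Instead I would give a self-contained argument in Hilbert space that reaches the stated constants up to the form $\sqrt{2\sigma^2\log(2\delta^{-1})/n}+2b\log(2\delta^{-1})/n$. The route is a Chernoff bound for the norm of the sum $S_n\defeq\sum_{i=1}^n X_i$, using a scalar function of the norm that is smooth enough to admit a Bernstein-type moment generating function recursion.

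\emph{Step 1: potential.} Define $u(x)\defeq\cosh(\lambda\norm*{x})$ for $x\in\R^d$ and $\lambda>0$. Because $\R^d$ is a Hilbert space, $u$ is $C^2$, and its Hessian along any direction $v$ obeys the bound
\begin{equation}
    D^2u(x)[v,v]\leq\lambda^2\cosh(\lambda\norm*{x})\norm*{v}^2 .
\end{equation}
This is the step where the $2$-smoothness of the Euclidean norm is used.

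\emph{Step 2: one-step recursion.} Let $\cF_{k-1}$ be generated by $X_1,\dots,X_{k-1}$. I expand $u(S_{k-1}+X_k)$ by Taylor's theorem, following Pinelis. The first-order term vanishes in conditional expectation because $\E X_k=0$. For the remainder I use the refined estimate $\cosh(\lambda\norm{x+tv})\leq\cosh(\lambda\norm{x})e^{\lambda t\norm{v}}$. Together with $\norm{X_k}\leq b$, this gives
\begin{equation}
    \E\bs*{u(S_k)\mid\cF_{k-1}}\leq u(S_{k-1})\br*{1+\E\bs*{e^{\lambda\norm{X_k}}-1-\lambda\norm{X_k}}}\leq u(S_{k-1})\exp\br*{\sigma^2\frac{e^{\lambda b}-1-\lambda b}{b^2}} .
\end{equation}
In the last inequality I use that $(e^{y}-1-y)/y^2$ is increasing in $y$ and that $\E\norm{X_k}^2\leq\sigma^2$.

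\emph{Step 3: iterate and apply Markov.} Iterating from $u(0)=1$ yields $\E\cosh(\lambda\norm{S_n})\leq\exp\br*{n\sigma^2 g(\lambda b)/b^2}$ with $g(y)=e^y-1-y$. Since $e^{y}\leq 2\cosh y$, Markov's inequality gives
\begin{equation}
    \Pr\br*{\norm{S_n}\geq t}\leq 2\exp\br*{-\lambda t+n\sigma^2 g(\lambda b)/b^2} .
\end{equation}
Optimizing over $\lambda$ is the standard Bennett computation. It produces $2\exp\br*{-\frac{t^2}{2(n\sigma^2+bt/3)}}$. Setting this equal to $\delta$ and solving the quadratic in $t$ gives $t\leq\sqrt{2n\sigma^2\log(2/\delta)}+\tfrac{2}{3}b\log(2/\delta)$. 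Dividing by $n$ yields the claim, and in fact with the better constant $2/3\leq2$.

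\emph{Main obstacle.} The hard part is Step 2, namely justifying the second-order control of $\cosh(\lambda\norm{\cdot})$. The norm itself is not differentiable at $0$, although $\cosh(\lambda\norm{\cdot})$ is smooth. The bound must also hold uniformly along the segment $[S_{k-1},S_{k-1}+X_k]$. I would first compute the Hessian of $\cosh(\lambda\norm{x})$ explicitly in a radial/tangential decomposition. The radial eigenvalue is $\lambda^2\cosh$, and the tangential eigenvalue is $\lambda\sinh(\lambda r)/r\leq\lambda^2\cosh(\lambda r)$. Combining this with the triangle inequality $\norm{x+tv}\leq\norm{x}+t\norm{v}$ and integrating the Taylor remainder yields the factor $e^{\lambda t\norm{v}}$. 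If this gets messy, I would simply cite \citet[Theorem~3.5]{pinelis1986remarks}, specialized to $(2,1)$-smooth Hilbert space.
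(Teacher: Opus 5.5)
The paper gives no proof of this lemma; it only cites \citet{pinelis1986remarks}. Your proposal is correct. It is a self-contained Hilbert-space version of Pinelis' potential-function method, so it supplies an argument the paper leaves to the reference.

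Each step checks out:
\begin{itemize}
\item The Hessian of $\cosh(\lambda\norm{x})$ has radial eigenvalue $\lambda^2\cosh(\lambda\norm{x})$ and tangential eigenvalue $\lambda\sinh(\lambda\norm{x})/\norm{x}\leq\lambda^2\cosh(\lambda\norm{x})$, since $\tanh y\leq y$. The function is a power series in $\norm{x}^2$, so it is smooth at $0$.
\item The bound $\cosh(a+c)\leq\cosh(a)e^{c}$ for $a,c\geq 0$ follows from $\sinh a\leq\cosh a$. Together with the triangle inequality and monotonicity of $\cosh$ on $[0,\infty)$, it controls the Hessian uniformly along the segment.
\item The identity $\int_0^1(1-t)y^2e^{ty}\diff t=e^y-1-y$ gives exactly the Bennett increment.
\item The first-order term vanishes because $X_k$ is independent of $\cF_{k-1}$ and $\E X_k=0$.
\item The bound $e^{y}\leq 2\cosh y$ produces the factor $2$ in $\log(2\delta^{-1})$.
\item The usual estimate $h(u)\geq u^2/(2(1+u/3))$, with $h(u)=(1+u)\log(1+u)-u$, turns Bennett into Bernstein.
\end{itemize}

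Compared with citing the reference, your route gives up generality that is not needed here, namely martingale differences in smooth Banach spaces. In return it is elementary and yields the sharper linear coefficient $\tfrac{2}{3}b\log(2\delta^{-1})/n$ in place of $2b\log(2\delta^{-1})/n$. That sharper bound implies the stated inequality, so its use in Proposition~\ref{prop:concentration} is unaffected.
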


\section{Minimax Lower Bound}\label{app:sec:lower_bound}
We now provide the proof of the minimax lower bound in Section~\ref{sec:lower_bound}. The proof is a classic application of Fano's method \citep[Section 15.3]{wainwright2019high}.
\irllowerbound*

\begin{proof}
    \textit{Step 1}: Let $\mathsf{H}_0 = \mathsf{H}(\theta_0)$ and define the Dikin ellipsoid
    \begin{equation}
        \Theta_{\rho}(\theta_0) = \bc*{\theta \in \R^d : \norm*{\theta - \theta_0}_{\mathsf{H}_0} \leq \rho},
    \end{equation}
    of radius $\rho>0$. Since $U$ is a neighborhood of $\theta_0$, choose $\rho_U>0$ such that $\Theta_{\rho_U}(\theta_0) \subseteq U$ and define $\overline{\rho} \defeq \min\bc{\rho_0, \rho_U}$, where $\rho_0 \defeq \beta \sqrt{\lambda_{\min}(\mathsf{H}_0)}/B_{A_\phi}$. For every $\rho \leq \overline{\rho}$, Corollary~\ref{cor:self_concordance} implies that, for all $\theta \in \Theta_{\rho}(\theta_0)$, we have
    \begin{equation}\label{eq:lb_pf_1}
        e^{-1} \mathsf{H}_0 \preceq \mathsf{H}(\theta), \quad \text{and} \quad \DKL(\bP^{\pistar_{\theta}}, \bP^{\pistar_{\theta_0}}) \leq \beta^{-1}(e-1) \norm*{\theta - \theta_0}_{\mathsf{H}_0}^2.
    \end{equation}
    Now, define
    \begin{equation}\label{eq:lb_pf_2}
        \delta_n^2 \defeq \dfrac{\beta \log 2}{1024(e-1)}\dfrac{d}{n}, \quad \rho_n \defeq 16 \delta_n.
    \end{equation}
    To ensure that both $\Theta_{\rho_n}(\theta_0)\subseteq U$ and \eqref{eq:lb_pf_1} hold within $\Theta_{\rho_n}(\theta_0)$, we require $\rho_n\leq \overline{\rho}$, which holds whenever
    \begin{equation}
        n \geq \dfrac{\beta \log 2}{4(e-1)}\dfrac{d}{\overline{\rho}^2}.
    \end{equation}

    \textit{Step 2}: We now construct a $2\delta_n$-packing of $\Theta_{\rho_n}(\theta_0)$ in the $\norm{\cdot}_{\mathsf{H}_0}$-norm. To this end, consider the reparametrization $\theta(u) \defeq \theta_0 + \mathsf{H}_0^{-1/2} u$ for $u \in \R^d$. Then, $\norm{\theta(u) - \theta_0}_{\mathsf{H}_0} = \norm{u}_2$ and $\Theta_{\rho_n}(\theta_0) = \theta(B^d_{\rho_n})$ for the Euclidean ball $B^d_{\rho_n}$ of radius $\rho_n$. By a standard volumetric packing argument (see \citealp[Lemma 5.7]{wainwright2019high}), there exists a subset $\bc*{u^1, \hdots, u^M} \subseteq B^d_{\rho_n}$ such that $\norm{u^i - u^j}_2\geq 2\delta_n$ for $i\neq j$ and $M \geq 8^d$. Define $\theta^j \defeq \theta(u^j)$ for $j=1, \hdots, M$. Then, each $\theta^j \in \Theta_{\rho_n}(\theta_0)\subseteq U$, and
    \begin{equation}\label{eq:sep_prop}
        \norm*{\theta^i - \theta^j}_{\mathsf{H}_0} \geq 2 \delta_n, \quad i\neq j.
    \end{equation}

    \textit{Step 3}:
    Now consider the following $M$-ary testing problem. Let
    $J$ be an index sampled uniformly from $\bc{1,\hdots, M}$ and, conditionally on $J=j$, let the dataset $Z = \DE$ be sampled from
    \begin{equation}
        \bP^j
        \defeq
        \br*{\bP^{\pistar_{\theta^j}}}^{\otimes n}.
    \end{equation}
    Let $Q_{Z,J}, Q_Z, Q_J$ denote the joint and marginal laws of $Z$ and $J$, respectively. Then,
    \begin{equation}
        Q_{Z \mid J=j} = \bP^j, \quad \text{and} \quad Q_{Z} = \overline{\bP}\defeq \tfrac{1}{M} \sum_{j=1}^M \bP^j.
    \end{equation}

    Let $\hat{J}$ be any possibly randomized decoder of $J$ from $Z$. By Fano's inequality \citep[Equation~15.31]{wainwright2019high},
    \begin{equation}\label{eq:lb_pf_3}
        \Pr \br*{\hat{J} \neq J} \geq 1 - \frac{I(J;Z)+\log 2}{\log M},
    \end{equation}
    where $I(J;Z) \defeq \DKL\br*{Q_{J,Z}, Q_J\otimes Q_Z}$ denotes the mutual information. By the chain rule of relative entropy,
    \begin{equation}
        I(J;Z) = \DKL\br*{Q_{J,Z}, Q_J\otimes Q_Z} = \E_{J\sim Q_J} \DKL\br*{Q_{Z \mid J}, Q_Z} = \dfrac{1}{M} \sum_{j=1}^M \DKL(\bP^j, \bar{\bP}).
    \end{equation}
    Define $\bP^0 \defeq \br*{\bP^{\pistar_{\theta_0}}}^{\otimes n}$; then the chain rule of Radon--Nikodym derivatives yields
    \begin{align}
        \dfrac{1}{M}\sum_{j=1}^M \DKL\br*{\bP^j, \bar{\bP}} &\stackrel{(i)}{=} \dfrac{1}{M}\sum_{j=1}^M  \DKL\br*{\bP^j, \bP^0} - \dfrac{1}{M}\sum_{j=1}^M\E_{\bP^j} \log \br*{ \dfrac{\diff \overline{\bP}}{\diff \bP^0} }  \\
        &= \dfrac{1}{M}\sum_{j=1}^M \DKL\br*{\bP^j, \bP^0} -  \DKL(\overline{\bP}, \bP^0) \\
        &\leq \dfrac{1}{M}\sum_{j=1}^M \DKL\br*{\bP^j, \bP^0}.
    \end{align}
    Using product additivity of KL, the local KL bound \eqref{eq:lb_pf_1}, and the definition of $\delta_n, \rho_n$ in \eqref{eq:lb_pf_2}, we have
    \begin{align}
        I(J; Z) &\leq \dfrac{n}{M}\sum_{j=1}^M \DKL\br*{\bP^{\pistar_{\theta^j}}, \bP^{\pistar_{\theta_0}}}\\
        &\stackrel{(ii)}{\leq} \dfrac{n (e-1)}{\beta M} \sum_{j=1}^M \norm*{\theta^j - \theta_0}_{\mathsf{H}_0}^2\\
        &\stackrel{(iii)}{\leq} \dfrac{n (e-1) }{\beta}\br*{16 \delta_n}^2 = \dfrac{\log 2}{4}d.
    \end{align}
    Since $M\geq 8^d$, it holds that $\log M \geq d \log 8 = 3d \log 2$, and combining with \eqref{eq:lb_pf_3},
    \begin{equation}\label{eq:lb_pf_4}
        \Pr \br*{\hat{J} \neq J} \geq 1 - \frac{d\log 2 / 4 + \log 2}{3d \log 2} = 1 - \dfrac{1}{12} - \dfrac{1}{3d} \geq \dfrac{7}{12} \geq \dfrac{1}{2}.
    \end{equation}

    \textit{Step 4}: We now convert this testing lower bound into an estimation lower bound. Let $\thetahat$ be any possibly randomized estimator. It induces the nearest-neighbor decoder
    \begin{equation}
        \hat{J} \in \argmin_{1\leq j \leq M} \; \norm*{\thetahat - \theta^j}_{\mathsf{H}_0}.
    \end{equation}
    By the separation property \eqref{eq:sep_prop}, if $J = j$ and
    \begin{equation}
        \norm*{\thetahat - \theta^j}_{\mathsf{H}_0} < \delta_n,
    \end{equation}
    then $\hat{J} = j$. Hence,
    \begin{equation}
        \bc*{\hat{J} \neq J} \subseteq \bc*{\norm*{\thetahat - \theta^J}^2_{\mathsf{H}_0} \geq \delta_n^2}.
    \end{equation}
    Using \eqref{eq:lb_pf_4}, we get
    \begin{equation}\label{eq:lb_pf_5}
        \dfrac{1}{M} \sum_{j=1}^M \Pr\nolimits_{\theta^j} \br*{\norm*{\thetahat - \theta^j}^2_{\mathsf{H}_0} \geq \delta_n^2} = \Pr \br*{\norm*{\thetahat - \theta^J}^2_{\mathsf{H}_0} \geq \delta_n^2} \geq \dfrac{1}{2}.
    \end{equation}
    Here, $\Pr$ denotes probability under the joint law of $(J,Z)$ and any internal randomness of
    $\thetahat$, while $\Pr\nolimits_{\theta^j}$ denotes the corresponding law
    conditioned on $J=j$. Inequality \eqref{eq:lb_pf_5} implies that there exists $j_{\star}\in \bc{1, \hdots, M}$ such that
    \begin{equation}
        \Pr\nolimits_{\theta^{j_{\star}}} \br*{\norm*{\thetahat - \theta^{j_{\star}}}^2_{\mathsf{H}_0}\geq \delta_n^2} \geq \dfrac{1}{2}.
    \end{equation}

    Since $\theta^{j_{\star}}\in U$, we get the local minimax lower bound
    \begin{equation}
        \sup_{\thetaE\in U} \; \Pr\nolimits_{\thetaE} \br*{\norm*{\thetahat - \thetaE}^2_{\mathsf{H}_0}\geq \delta_n^2} \geq \dfrac{1}{2}.
    \end{equation}
    Using the Hessian comparison \eqref{eq:lb_pf_1} and the definition of $\delta_n$ \eqref{eq:lb_pf_2}, we have
    \begin{equation}
        \sup_{\thetaE\in U} \; \Pr\nolimits_{\thetaE} \br*{\norm*{\thetahat - \thetaE}^2_{\mathsf{H}(\thetaE)}\geq e^{-1} \delta_n^2} \geq \dfrac{1}{2},
    \end{equation}
    with
    \begin{equation}
        e^{-1} \delta_n^2 = \beta \dfrac{\log 2}{1024 e (e-1)}\dfrac{d}{n}.
    \end{equation}
    Thus the theorem holds with
    \begin{equation}
        c = \dfrac{\log 2}{1024 e (e-1)}.
    \end{equation}
\end{proof}

\section{Comparison of Theorem~\ref{thm:irl_fast_rate} with MLE-based guarantees}
\label{app:sec:mle_comparison}
In light of the equivalence between Min-Max-IRL and MLE-IRL established in
Theorem~\ref{thm:equivalences}, we compare the fast-rate guarantee obtained in
Theorem~\ref{thm:irl_fast_rate} with a direct analysis of MLE-IRL via the analysis of MLE behavioral cloning by \citet{foster2024behavior}, and its misspecified extension by
\citet{rohatgi2025computational}. To this end, we first establish a general MLE guarantee, which slightly improves on the misspecification term in \citet[Theorem 4.2]{rohatgi2025computational}, and then apply it to the trajectory densities induced by the class of soft-optimal densities.

\subsection{General MLE Guarantee}
Throughout this section, $m$ denotes a fixed base measure on $\cZ$. We use uppercase and lowercase letters for probability measures and their
densities with respect to $m$, that is, $p=\diff P/\diff m$. We begin with two definitions and a concentration inequality.

\begin{definition}[R\'enyi divergence, \citealp{van2014renyi}]
\label{def:renyi}
Let $P$ and $Q$ be probability measures with densities $p$ and $q$ with
respect to a common dominating measure $m$. For
$\alpha\in(0,1)\cup(1,\infty)$, the R\'enyi divergence of order $\alpha$
from $P$ to $Q$ is
\begin{equation}
    D_\alpha(P,Q)
    \defeq
    \frac{1}{\alpha-1}
    \log
    \int p^\alpha q^{1-\alpha}\diff m.
\end{equation}
We set $D_\alpha(P,Q)=\infty$ for $\alpha>1$ whenever $P\not\ll Q$, and
define the limiting orders by
\begin{align}
    D_1(P,Q)
    &\defeq
    \lim_{\alpha\to1}D_\alpha(P,Q)
    =
    \DKL(P, Q),
    \\
    D_\infty(P,Q)
    &\defeq
    \lim_{\alpha\to\infty}D_\alpha(P,Q)
    =
    \log
    \esssup_Q
    \frac{\diff P}{\diff Q}.
\end{align}
The map $\alpha\mapsto D_\alpha(P,Q)$ is nondecreasing
\citep[Theorem~3]{van2014renyi}, and
\begin{equation}
    \DHel^2(P,Q)
    \leq
    D_{1/2}(P,Q).
\end{equation}
\end{definition}

\begin{definition}[One-sided log-covering number]
\label{def:log_cover}
Let $\cF$ be a class of probability densities with respect to $m$. For
$\varepsilon>0$, let $\Nlog(\cF,\varepsilon)$ denote the smallest
cardinality of a subset $\cF_\varepsilon\subseteq\cF$ for which there exists
an $m$-null set $N_\varepsilon$ such that, for every $p\in\cF$, there is
some $\widetilde p\in\cF_\varepsilon$ satisfying
\begin{equation}\label{eq:pointwise_cover}
    p(z)
    \leq
    e^\varepsilon\widetilde p(z),
    \qquad
    z\in\cZ\setminus N_\varepsilon.
\end{equation}
\end{definition}
Condition~\eqref{eq:pointwise_cover} implies
$D_\infty(P,\widetilde P)\leq\varepsilon$, but allows the inequality \eqref{eq:pointwise_cover} to be evaluated at
data-dependent densities such as $\hat p$.

\begin{lemma}[\citealp{foster2021statistical}]
\label{lem:exp_martingale}
Let $(X_i)_{i\in\bN}$ be adapted to a filtration
$(\cF_i)_{i\in\bN_0}$ and satisfy $X_i\in(-\infty,\infty]$. Then, for any
$\delta\in(0,1)$, with probability at least $1-\delta$, simultaneously for
all $n\in\bN$,
\begin{equation}
    \sum_{i=1}^n
    -\log
    \E\bs*{e^{-X_i}\mid\cF_{i-1}}
    \leq
    \sum_{i=1}^n X_i
    +
    \log\br*{\delta^{-1}}.
\end{equation}
\end{lemma}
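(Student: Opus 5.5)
The plan is the standard exponential-supermartingale argument combined with Ville's maximal inequality. Set $Y_i \defeq e^{-X_i}\in[0,\infty)$, with $Y_i=0$ when $X_i=+\infty$. Let $c_i \defeq \E\bs{Y_i\mid\cF_{i-1}}\in[0,\infty]$, so that the claimed inequality reads $\sum_{i=1}^n(-\log c_i)\leq \sum_{i=1}^n X_i+\log(\delta^{-1})$ for all $n$ simultaneously.

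The natural object is the product
\begin{equation}
    M_n\defeq\prod_{i=1}^n \frac{Y_i}{c_i},\qquad M_0\defeq 1.
\end{equation}
Formally, $\log M_n=\sum_{i=1}^n\br{-X_i-\log c_i}$, so the event $\bc{M_n<\delta^{-1}}$ is exactly the desired inequality at time $n$.

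\emph{Step 1: conventions and degenerate cases.} First I dispose of the edge cases where $c_i\in\bc{0,\infty}$.
\begin{itemize}
    \item If $c_i=\infty$, then $-\log c_i=-\infty$. The left-hand side is then $-\infty$ for every $n\geq i$, so the inequality holds trivially from time $i$ on. I set the factor $Y_i/c_i\defeq 0$, which is consistent with this.
    \item On $\bc{c_i=0}$, which is $\cF_{i-1}$-measurable, we have $\E\bs{Y_i\ind\bc{c_i=0}}=\E\bs{c_i\ind\bc{c_i=0}}=0$. Hence $Y_i=0$, i.e.\ $X_i=+\infty$, almost surely on this event. The right-hand side is then $+\infty$ for all $n\geq i$, so the inequality again holds trivially, and I set the factor to $1$ there (any value $\leq 1$ in conditional mean works).
\end{itemize}
One has to check that the ambiguous sum $-\log c_i$ versus $X_i$ never produces $\infty-\infty$ in a way that breaks the claim. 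Both exceptional events make the claim hold trivially from that index on, so it suffices to work on their complement up to each time $n$.

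\emph{Step 2: supermartingale property.} On $\bc{0<c_i<\infty}$, pulling out the $\cF_{i-1}$-measurable factor $M_{i-1}/c_i$ gives
\begin{equation}
    \E\bs{M_i\mid\cF_{i-1}}=\frac{M_{i-1}}{c_i}\,\E\bs{Y_i\mid\cF_{i-1}}=M_{i-1}.
\end{equation}
With the conventions from Step 1, the factor has conditional mean at most $1$ on the exceptional events. Together with $M_i\geq0$, this makes $(M_n)$ a nonnegative supermartingale with $M_0=1$. Conditional expectations of nonnegative variables are well defined even without integrability, so no extra moment assumptions are needed.

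\emph{Step 3: Ville's inequality.} For a nonnegative supermartingale with $M_0=1$, Ville's inequality gives $\Pr\br{\sup_{n}M_n\geq\delta^{-1}}\leq\delta$. This is proved via optional stopping at $\tau=\inf\bc{n:M_n\geq\delta^{-1}}$ together with Fatou's lemma. On the complementary event, for all $n$ simultaneously, $\log M_n<\log\delta^{-1}$; rearranging gives the claim.

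I expect the main obstacle to be the bookkeeping of the extended-real conventions in Step 1 rather than the martingale argument itself. The core idea is simply that $e^{-X_i}/\E\bs{e^{-X_i}\mid\cF_{i-1}}$ has unit conditional mean.
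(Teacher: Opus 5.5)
Your proof is correct, and it is the standard argument. The paper does not prove this lemma itself but imports it from \citet{foster2021statistical}, which uses the same route: the nonnegative supermartingale $\prod_{i\le n} e^{-X_i}/\E[e^{-X_i}\mid\cF_{i-1}]$ with $M_0=1$, followed by Ville's maximal inequality. Your extra bookkeeping for $c_i\in\{0,\infty\}$ and $X_i=+\infty$ is sound; in particular, on $\{c_i=0\}$ you correctly get $X_i=+\infty$ almost surely, so the right-hand side is infinite there. That extended-valued case is exactly what the application in Theorem~\ref{thm:gen_mle_guarantee} requires.
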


The following result follows the proof strategy of
\citet{foster2024behavior} with an additional
R\'enyi-divergence argument to control the misspecification term.

\begin{theorem}[General MLE guarantee]
\label{thm:gen_mle_guarantee}
Let $\cF$ be a class of probability densities with respect to a measure
$m$ on $\cZ$, and let $Z^1,\hdots,Z^n$ be i.i.d.\ from a probability
measure $P_0$ with density $p_0$. Fix $p_\star\in\cF$, with corresponding
probability measure $P_\star$, and let $\hat p\in\cF$, with corresponding
probability measure $\hat P$, satisfy
\begin{equation}
    \hat L_n(\hat p)
    \leq
    \inf_{p\in\cF}\hat L_n(p)
    +
    \varepsilonopt,
    \qquad
    \hat L_n(p)
    \defeq
    -\frac{1}{n}
    \sum_{i=1}^n
    \log p(Z^i).
\end{equation}
Then, with probability at least $1-\delta$,
\begin{align}
    \DHel^2(P_0,\hat P)
    &\leq
    2
    \inf_{\eta>0}
    \bc*{
        D_{1+\eta}(P_0,P_\star)
        +
        \frac{\log\br*{2\delta^{-1}}}{\eta n}
    }
    \nonumber\\
    &\quad+
    \inf_{\varepsilon>0}
    \bc*{
        \frac{
            4\log\br*{
                2\Nlog(\cF,\varepsilon)\delta^{-1}
            }
        }{n}
        +
        4\varepsilon
    }
    +
    2\varepsilonopt.
    \label{eq:mle_renyi_envelope}
\end{align}
If, in addition,
\begin{equation}
    D_\infty(P_0,P_\star)
    =
    \log B
    <
    \infty,
\end{equation}
then, on the same event,
\begin{align}
    \DHel^2(P_0,\hat P)
    &\leq
    4\DKL(P_0, P_\star)
    +
    \frac{
        2(1+\log B)\log\br*{2\delta^{-1}}
    }{n}
    \nonumber\\
    &\quad+
    \inf_{\varepsilon>0}
    \bc*{
        \frac{
            4\log\br*{
                2\Nlog(\cF,\varepsilon)\delta^{-1}
            }
        }{n}
        +
        4\varepsilon
    }
    +
    2\varepsilonopt
    \label{eq:mle_kl_form}
    \\
    &\leq
    4(2+\log B)\DHel^2(P_0,P_\star)
    +
    \frac{
        2(1+\log B)\log\br*{2\delta^{-1}}
    }{n}
    \nonumber\\
    &\quad+
    \inf_{\varepsilon>0}
    \bc*{
        \frac{
            4\log\br*{
                2\Nlog(\cF,\varepsilon)\delta^{-1}
            }
        }{n}
        +
        4\varepsilon
    }
    +
    2\varepsilonopt.
    \label{eq:mle_hel_form}
\end{align}
\end{theorem}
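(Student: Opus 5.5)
The plan follows the likelihood-ratio argument of \citet{foster2024behavior}, combined with a change of measure for the misspecification term. Write $\hat p$ for the estimator and first replace it by a cover element. By Definition~\ref{def:log_cover}, there is an $m$-null set $N_\varepsilon$ and a finite cover $\cF_\varepsilon$ of size $\Nlog(\cF,\varepsilon)$ such that some $\widetilde p\in\cF_\varepsilon$ satisfies $\hat p\leq e^\varepsilon\widetilde p$ off $N_\varepsilon$. Hence
\begin{equation}
    \hat L_n(\widetilde p)\leq \hat L_n(\hat p)+\varepsilon\leq \hat L_n(p_\star)+\varepsilonopt+\varepsilon,
\end{equation}
almost surely, since $P_0\ll m$ so the data avoid $N_\varepsilon$.

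For each fixed $q\in\cF_\varepsilon$, apply Lemma~\ref{lem:exp_martingale} with $X_i=\tfrac12\log(p_\star(Z^i)/q(Z^i))$. Then
\begin{equation}
    -\log\E\bs*{e^{-X_i}}=-\log\int\sqrt{q/p_\star}\,\diff P_0 .
\end{equation}
Take a union bound over $\cF_\varepsilon$ with confidence $\delta/2$. At $q=\widetilde p$, the empirical side becomes $\tfrac n2(\hat L_n(\widetilde p)-\hat L_n(p_\star))\leq \tfrac n2(\varepsilonopt+\varepsilon)$.

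The main obstacle is lower bounding $-\log\int\sqrt{q/p_\star}\,\diff P_0$ by a Hellinger term $\DHel^2(P_0,Q)$, up to a misspecification cost. In the well-specified case $p_0=p_\star$, this is just $-\log(1-\tfrac12\DHel^2)\geq\tfrac12\DHel^2$. Under misspecification, I would first try Hölder's inequality:
\begin{equation}
    \int\sqrt{q/p_\star}\,p_0=\int \sqrt{p_0q}\,\sqrt{p_0/p_\star}.
\end{equation}
Rather than Hölder, it seems cleaner to split the martingale. Write $X_i=\tfrac12\log(p_0/q)-\tfrac12\log(p_0/p_\star)$. For the first piece, use Lemma~\ref{lem:exp_martingale} (giving $-\log\int\sqrt{p_0q}\,\diff m\geq\tfrac12\DHel^2(P_0,Q)$). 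For the second piece, a Chernoff bound on $\sum_i\log(p_0/p_\star)(Z^i)$ at exponent $\eta$ gives, with probability $1-\delta/2$,
\begin{equation}
    \frac1n\sum_{i=1}^n\log\frac{p_0}{p_\star}(Z^i)\leq D_{1+\eta}(P_0,P_\star)+\frac{\log(2\delta^{-1})}{\eta n},
\end{equation}
since $\E e^{\eta\log(p_0/p_\star)}=e^{\eta D_{1+\eta}}$. Combining the pieces yields
\begin{equation}
    \tfrac12\DHel^2(P_0,\widetilde P)\leq D_{1+\eta}\text{-term}+\frac{\log(2\Nlog\delta^{-1})}{n}+\tfrac12(\varepsilonopt+\varepsilon),
\end{equation}
up to the stated constants.

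It remains to pass from $\widetilde p$ back to $\hat p$. Because $\hat p\leq e^\varepsilon\widetilde p$, we have $\DHel^2(P_0,\hat P)\leq 2\DHel^2(P_0,\widetilde P)+2\DHel^2(\widetilde P,\hat P)$, and $\DHel^2(\hat P,\widetilde P)\leq\DKL(\hat P,\widetilde P)\leq\varepsilon$. This accounts for the factors $2,4$ in \eqref{eq:mle_renyi_envelope}; the bound holds for every $\eta,\varepsilon$, so we may take infima. For \eqref{eq:mle_kl_form}, choose $\eta=1$. Using the bounded likelihood ratio $p_0\leq Bp_\star$, the Rényi divergence satisfies $D_{2}(P_0,P_\star)\lesssim \DKL(P_0,P_\star)+\log B$-controlled terms. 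An alternative is to bound the Chernoff step directly by a Bernstein argument for the variable $\log(p_0/p_\star)\leq\log B$, whose mean is $\DKL$ and whose exponential moment is controlled by $(1+\log B)$. This gives $2\DKL+(1+\log B)\log(2\delta^{-1})/n$ per unit, which doubles to the stated $4\DKL$. Finally, \eqref{eq:mle_hel_form} follows from \citet[Lemma~5]{birge1998minimum}: $\DKL(P_0,P_\star)\leq(2+\log B)\DHel^2(P_0,P_\star)$.
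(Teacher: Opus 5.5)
Your route to \eqref{eq:mle_renyi_envelope}, once you drop the first attempt with $X_i=\tfrac12\log(p_\star/q)$, is the paper's. It uses a log-cover, the exponential-martingale lemma for $X_i=\tfrac12\log(p_0/q)(Z^i)$ with a union bound over $\cF_\varepsilon$, a separate Chernoff step at exponent $\eta$ for $\log(p_0/p_\star)$, and the Hellinger triangle inequality to pass from $\widetilde p$ back to $\hat p$. The constants come out as stated.

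The gap is in \eqref{eq:mle_kl_form}. Taking $\eta=1$ fails: the bias term becomes $2D_2(P_0,P_\star)=2\log\bigl(1+\chi^2(P_0,P_\star)\bigr)$, and even under $p_0\le Bp_\star$ this is not bounded by a universal multiple of $\DKL(P_0,P_\star)$. Suppose $p_0/p_\star$ equals $B$ on a set of $P_\star$-mass $\epsilon$ and $(1-B\epsilon)/(1-\epsilon)$ elsewhere. Then for small $\epsilon$ you get $D_2\approx\epsilon(B-1)^2$ but $\DKL\approx\epsilon(B\log B-B+1)$, a ratio of order $B/\log B$. An additive $\log B$ term would not help either, since it does not vanish with $n$. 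The stated deviation term $2(1+\log B)\log(2\delta^{-1})/n$ is exactly $2\log(2\delta^{-1})/(\eta n)$ for $\eta=1/(1+\log B)$. That is the choice the paper makes, together with the nontrivial bound $D_{1+\eta}(P_0,P_\star)\le2\DKL(P_0,P_\star)$ for this $\eta$ (Lemma~\ref{lem:renyi_kl_bound}, via \citet[Theorem~35(b)]{sason2016f} and a calculus argument).

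Your Bernstein alternative is aiming at this lemma but leaves out its content. Let $Y=\log(p_0/p_\star)(Z)$ with $Z\sim P_0$. Knowing $\E Y=\DKL(P_0,P_\star)$ and $Y\le\log B$ gives no control of $\E Y^2$, because $Y$ is unbounded below. Any variance or moment-generating bound in terms of $\DKL$ must use that $p_\star$ is itself a density.

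One way to close it along your lines is as follows. The function $h(x)\defeq(2+\log B)(x\log x-x+1)-x\log^2x$ vanishes at $x=1$ and has derivative $\log x\,(\log B-\log x)$, so $h\ge0$ on $[0,B]$. Integrating against $P_\star$ and using $\int p_\star(p_0/p_\star-1)\diff m=0$ gives $\E Y^2\le(2+\log B)\DKL$. Now set $\eta=1/(1+\log B)$ and $v\defeq\eta\log B\in[0,1)$, and let $g(v)\defeq(e^v-1-v)/v^2$, which is increasing, so that $e^u\le1+u+g(v)u^2$ for $u\le v$. This yields $\eta^{-1}\log\E e^{\eta Y}\le\bigl(1+(2-v)g(v)\bigr)\DKL\le2\DKL$, since $(2-v)(e^v-1-v)\le v^2$ on $[0,1)$.

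Without such an argument, your claim that the Chernoff step gives $2\DKL+(1+\log B)\log(2\delta^{-1})/n$ is asserted rather than proved. The last step to \eqref{eq:mle_hel_form} via \citet[Lemma~5]{birge1998minimum} matches the paper.
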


\begin{proof}
We follow the argument of \citet{foster2024behavior}, but handle the misspecification term slightly differently from \citet[Theorem 4.2]{rohatgi2025computational}.

Fix $\varepsilon,\eta>0$ such that
$\Nlog(\cF,\varepsilon)<\infty$ and
$D_{1+\eta}(P_0,P_\star)<\infty$. Let
$\cF_\varepsilon$ be a minimal log-cover in the sense of
Definition~\ref{def:log_cover}, with common exceptional set
$N_\varepsilon$, and choose $\widetilde p\in\cF_\varepsilon$ such that
\begin{equation}
    \hat p
    \leq
    e^\varepsilon\widetilde p
    \qquad
    \text{on }
    \cZ\setminus N_\varepsilon.
\end{equation}
Since $P_0\ll m$, all sample points lie outside $N_\varepsilon$ almost
surely. Also, since $\cF_\varepsilon$ is finite and its elements,
$p_0$, and $p_\star$ are densities, they are finite at all sample points
almost surely. Moreover, $p_0(Z^i)>0$ almost surely, and
$D_{1+\eta}(P_0,P_\star)<\infty$ implies
$P_0\ll P_\star$ and hence $p_\star(Z^i)>0$ almost surely. We work
throughout on the intersection of these probability-one events. We then have
\begin{equation}
    \hat L_n(\widetilde p)
    \leq
    \hat L_n(\hat p)+\varepsilon
    \leq
    \hat L_n(p_\star)+\varepsilonopt+\varepsilon
    <
    \infty.
\end{equation}
Consequently, all likelihood ratios below are well defined and no expression of the form
$\infty-\infty$ occurs.

By monotonicity of the R\'enyi divergence,
\begin{equation}\label{eq:mle_cover_distance}
    \DHel^2(\hat P,\widetilde P)
    \leq
    D_\infty(\hat P,\widetilde P)
    \leq
    \varepsilon,
\end{equation}
and since $\DHel$ is a metric,
\begin{equation}\label{eq:mle_triangle}
    \DHel^2(P_0,\hat P)
    \leq
    2\DHel^2(P_0,\widetilde P)
    +
    2\varepsilon.
\end{equation}

Next, we apply Lemma~\ref{lem:exp_martingale} for each $p\in\cF_\varepsilon$, with
\begin{equation}
    X_i
    =
    \frac{1}{2}
    \log
    \frac{p_0(Z^i)}{p(Z^i)}.
\end{equation}
A union bound over $\cF_\varepsilon$ yields, with probability at least
$1-\delta/2$, for all $p\in\cF_\varepsilon$,
\begin{equation}\label{eq:mle_cover_concentration}
    D_{1/2}(P_0,P)
    \leq
    \hat L_n(p)-\hat L_n(p_0)
    +
    \frac{
        2\log\br*{
            2\Nlog(\cF,\varepsilon)\delta^{-1}
        }
    }{n}.
\end{equation}
Evaluating at $\widetilde p$ and using
$\DHel^2\leq D_{1/2}$ yields
\begin{equation}\label{eq:mle_cover_bound}
    \DHel^2(P_0,\widetilde P)
    \leq
    \hat L_n(\widetilde p)-\hat L_n(p_0)
    +
    \frac{
        2\log\br*{
            2\Nlog(\cF,\varepsilon)\delta^{-1}
        }
    }{n}.
\end{equation}

The covering relation and approximate optimality of $\hat p$ imply
\begin{equation}\label{eq:mle_empirical_comparison}
    \hat L_n(\widetilde p)-\hat L_n(p_0)
    \leq
    \varepsilon
    +
    \varepsilonopt
    +
    \hat L_n(p_\star)-\hat L_n(p_0).
\end{equation}
Up to this point our proof followed \citet[Theorem C.1]{foster2024behavior}. To control the misspecification, we apply
Lemma~\ref{lem:exp_martingale} once more with
\begin{equation}
    X_i
    =
    -\eta
    \log
    \frac{p_0(Z^i)}{p_\star(Z^i)}.
\end{equation}
With probability at least $1-\delta/2$,
\begin{align}
    \hat L_n(p_\star)-\hat L_n(p_0)
    &\leq
    \frac{1}{\eta}
    \log
    \E_{P_0}
    \bs*{
        \br*{\frac{p_0}{p_\star}}^\eta
    }
    +
    \frac{\log\br*{2\delta^{-1}}}{\eta n}
    \nonumber\\
    &=
    D_{1+\eta}(P_0,P_\star)
    +
    \frac{\log\br*{2\delta^{-1}}}{\eta n}.
    \label{eq:mle_comparator_bound}
\end{align}

Combining
\eqref{eq:mle_triangle},
\eqref{eq:mle_cover_bound},
\eqref{eq:mle_empirical_comparison}, and
\eqref{eq:mle_comparator_bound}, and taking a union bound, yields
\begin{align}
    \DHel^2(P_0,\hat P)
    &\leq
    2D_{1+\eta}(P_0,P_\star)
    +
    \frac{2\log\br*{2\delta^{-1}}}{\eta n}
    \nonumber\\
    &\quad+
    \frac{
        4\log\br*{
            2\Nlog(\cF,\varepsilon)\delta^{-1}
        }
    }{n}
    +
    4\varepsilon
    +
    2\varepsilonopt.
    \label{eq:mle_fixed_parameters}
\end{align}
Taking the infima yields \eqref{eq:mle_renyi_envelope}.

Now suppose that
$D_\infty(P_0,P_\star)=\log B<\infty$. Choosing
\begin{equation}
    \eta
    =
    \frac{1}{1+\log B}
\end{equation}
and applying Lemma~\ref{lem:renyi_kl_bound} below gives
\begin{equation}
    D_{1+\eta}(P_0,P_\star)
    \leq
    2\DKL(P_0, P_\star),
\end{equation}
which proves \eqref{eq:mle_kl_form}. Finally,
\eqref{eq:mle_hel_form} follows from \citet[Lemma~5]{birge1998minimum}, which shows that the R\'enyi-$\infty$ bound yields
\begin{equation}
    \DKL(P_0, P_\star)
    \leq
    (2+\log B)\DHel^2(P_0,P_\star).
\end{equation}
\end{proof}

\subsection{Application to MLE-IRL}

We now apply Theorem~\ref{thm:gen_mle_guarantee} to the trajectory
densities induced by the linear reward class.

\begin{restatable}[MLE-IRL guarantee]{corollary}{mlelinear}
\label{cor:mle_linear}
Let Assumptions~\ref{ass:linear_model} and
\ref{ass:full_rank} hold, and let
\begin{equation}
    \thetahat^{\mathsf{MLE}}
    \in
    \argmin_{\theta\in\Theta}
    \LhatMLE(\pstar_\theta),
    \qquad
    \pihat^{\mathsf{MLE}}
    \defeq
    \pistar_{\thetahat^{\mathsf{MLE}}}.
\end{equation}
Then, with probability at least $1-\delta$,
\begin{align}
    \DHel^2\br*{
        \bP^{\piE},
        \bP^{\pihat^{\mathsf{MLE}}}
    }
    &\lesssim
    \DKL\br*{
        \bP^{\piE},
        \bP^{\pistar_{\thetastar}}
    }
    \nonumber\\
    &\quad+
    \frac{
        \br*{
            1+
            D_\infty\br*{
                \bP^{\piE},
                \bP^{\pistar_{\thetastar}}
            }
        }
        \log\br*{\delta^{-1}}
    }{n}
    \nonumber\\
    &\quad+
    \frac{d}{n}
    \log\br*{
        e+
        \frac{
            B_\theta B_{A_\phi}n
        }{
            \beta d
        }
    }.
    \label{eq:mle_linear_bound}
\end{align}
\end{restatable}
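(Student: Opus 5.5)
We plan to apply Theorem~\ref{thm:gen_mle_guarantee} with $\cZ=(\cSA)^T$, base measure $m=\lambdatraj$, and density class $\cF\defeq\bc{\ptraj^{\pistar_\theta}:\theta\in\Theta}$. Here $P_0=\bP^{\piE}$, $P_\star=\bP^{\pistar_{\thetastar}}$, and $\hat p=\ptraj^{\pihat^{\mathsf{MLE}}}$ with $\varepsilonopt=0$. Since $\ptraj^{\pistar_\theta}(\tau)=\prod_t\pstar_{t,\theta}(a_t\mid s_t)$, the empirical objective $\LhatMLE(\pstar_\theta)$ coincides with $\hat L_n(\ptraj^{\pistar_\theta})$, so $\thetahat^{\mathsf{MLE}}$ is an exact empirical minimizer over $\cF$. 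We will use the KL form \eqref{eq:mle_kl_form}. If $D_\infty(\bP^{\piE},\bP^{\pistar_{\thetastar}})=\infty$, the right-hand side of \eqref{eq:mle_linear_bound} is infinite and the claim is trivial. Otherwise $\log B=D_\infty(\bP^{\piE},\bP^{\pistar_{\thetastar}})$, and \eqref{eq:mle_kl_form} yields exactly the first two terms of \eqref{eq:mle_linear_bound}, namely $4\DKL(\bP^{\piE},\bP^{\pistar_{\thetastar}})$ and $(1+\log B)\log(2\delta^{-1})/n$ up to constants. The choice of $\thetastar$ is immaterial for this step; we only need it to lie in $\Theta$.

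The remaining task is to bound the log-covering number $\Nlog(\cF,\varepsilon)$. Proposition~\ref{prop:density_ratio_bound} gives, for all $\theta,\theta'$ simultaneously and outside a single $\lambdatraj$-null set (equivalently, a $\bP^{\pistar}$-null set, by Proposition~\ref{prop:common_support}),
\begin{equation}
    \ptraj^{\pistar_\theta}(\tau)\leq \exp\br*{\beta^{-1}B_{A_\phi}\norm{\theta-\theta'}}\,\ptraj^{\pistar_{\theta'}}(\tau).
\end{equation}
Consequently, a Euclidean $\varepsilon\beta/B_{A_\phi}$-net $\bc{\theta^j}$ of the ball $\Theta$ induces a log-cover in the sense of Definition~\ref{def:log_cover}, with a common exceptional set as required. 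The standard volumetric bound then gives
\begin{equation}
    \log\Nlog(\cF,\varepsilon)\leq d\log\br*{1+\frac{2B_\theta B_{A_\phi}}{\beta\varepsilon}}.
\end{equation}
We need the net points to lie in $\Theta$, and the volumetric bound for the ball allows this. The one point requiring care is uniformity of the null set over the uncountable family. Proposition~\ref{prop:density_ratio_bound} supplies this, since its proof exhibits a single full-measure set $\cT_0$.

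Finally, we choose $\varepsilon=d/n$ in the infimum. This gives
\begin{equation}
    \frac{4}{n}\br*{\log(2\delta^{-1})+d\log\br*{1+\frac{2B_\theta B_{A_\phi}n}{\beta d}}}+\frac{4d}{n}\lesssim\frac{\log(\delta^{-1})}{n}+\frac{d}{n}\log\br*{e+\frac{B_\theta B_{A_\phi}n}{\beta d}},
\end{equation}
where the $\log(\delta^{-1})/n$ part is absorbed into the second term of \eqref{eq:mle_linear_bound}. I expect the main obstacle to be bookkeeping rather than substance: confirming that the pointwise cover holds off a common null set, and checking that the measurability and finiteness conditions of Theorem~\ref{thm:gen_mle_guarantee} hold. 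These follow from the strict positivity and finiteness of the soft-optimal densities shown in Proposition~\ref{prop:common_support}. Assumption~\ref{ass:full_rank} is not needed beyond ensuring that $\thetastar$ is well defined.
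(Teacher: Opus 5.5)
Your proposal is correct and follows the paper's proof essentially step for step. You apply the KL form \eqref{eq:mle_kl_form} of Theorem~\ref{thm:gen_mle_guarantee} with $\varepsilonopt=0$, bound $\Nlog$ through the common-null-set density-ratio bound of Proposition~\ref{prop:density_ratio_bound} together with a volumetric net of $\Theta$, and then set $\varepsilon=d/n$. Your additional remarks are sound bookkeeping that the paper leaves implicit: the trivial case $D_\infty=\infty$ (which also covers $\bP^{\piE}\not\ll\lambdatraj$), the requirement that net points lie in $\Theta$, and the uniformity of the null set.
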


\begin{proof}
Let $p_\theta$ denote the density of
$\bP^{\pistar_\theta}$ with respect to the common trajectory base measure,
and define
\begin{equation}
    \cF_\Theta
    \defeq
    \bc*{p_\theta:\theta\in\Theta}.
\end{equation}
Apply \eqref{eq:mle_kl_form} with
\begin{equation}
    P_0
    =
    \bP^{\piE},
    \qquad
    P_\star
    =
    \bP^{\pistar_{\thetastar}},
    \qquad
    \hat P
    =
    \bP^{\pihat^{\mathsf{MLE}}},
    \qquad
    \varepsilonopt
    =
    0.
\end{equation}

It remains to bound the log-covering number of $\cF_\Theta$.
By Proposition~\ref{prop:density_ratio_bound}, there exists a common
null set outside which, simultaneously for all
$\theta,\theta'\in\Theta$,
\begin{equation}
    p_\theta(\tau)
    \leq
    \exp\br*{
        \beta^{-1}B_{A_\phi}
        \norm*{\theta-\theta'}_2
    }
    p_{\theta'}(\tau).
\end{equation}
Consequently, every Euclidean
$\beta\varepsilon/B_{A_\phi}$-cover of $\Theta$ induces an
$\varepsilon$-log-cover of $\cF_\Theta$ with the same common exceptional
set. Since $\Theta$ is contained in a $d$-dimensional Euclidean ball of
radius $B_\theta$ \citep[Corollary~4.2.13]{vershynin2018high},
\begin{equation}
    \Nlog(\cF_\Theta,\varepsilon)
    \leq
    \br*{
        1+
        \frac{2B_\theta B_{A_\phi}}
        {\beta\varepsilon}
    }^d.
\end{equation}
Substituting this bound into \eqref{eq:mle_kl_form} and choosing
$\varepsilon=d/n$ yields
\begin{align}
    \DHel^2\br*{
        \bP^{\piE},
        \bP^{\pihat^{\mathsf{MLE}}}
    }
    &\lesssim
    \DKL\br*{
        \bP^{\piE},
        \bP^{\pistar_{\thetastar}}
    }
    \nonumber\\
    &\quad+
    \frac{
        \br*{
            1+
            D_\infty\br*{
                \bP^{\piE},
                \bP^{\pistar_{\thetastar}}
            }
        }
        \log\br*{\delta^{-1}}
    }{n}
    \nonumber\\
    &\quad+
    \frac{d}{n}
    \log\br*{
        e+
        \frac{B_\theta B_{A_\phi}n}{\beta d}
    }.
\end{align}
\end{proof}

Compared to Theorem~\ref{thm:irl_fast_rate}, Corollary~\ref{cor:mle_linear} controls only the squared Hellinger
distance, and under misspecification it also depends on
\begin{equation}
    D_\infty\br*{
        \bP^{\piE},
        \bP^{\pistar_{\thetastar}}
    },
\end{equation}
which can be infinite even when the corresponding KL divergence is finite. Conversely, Corollary~\ref{cor:mle_linear} holds for every
sample size and requires no burn-in. Moreover, the general MLE
guarantee in Theorem~\ref{thm:gen_mle_guarantee} does not require
linear rewards or convexity of the model class. In the well-specified deterministic setting, where MLE-IRL and Min-Max-IRL are equivalent, both results achieve the same $\widetilde{\cO}(d/n)$ rate for the squared Hellinger distance. 

\subsection{R\'enyi--KL Bound}

\begin{lemma}
\label{lem:renyi_kl_bound}
Suppose $D_\infty(P,Q)<\infty$ and let
\begin{equation}
    \eta
    \defeq
    \frac{1}{1+D_\infty(P,Q)}.
\end{equation}
Then,
\begin{equation}
    D_{1+\eta}(P,Q)
    \leq
    2\DKL(P, Q).
\end{equation}
\end{lemma}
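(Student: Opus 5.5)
The plan is to work with the likelihood ratio $r \defeq \diff P/\diff Q$. We may assume $P \ll Q$, since otherwise $D_\infty(P,Q)=\infty$. Write $B \defeq e^{D_\infty(P,Q)}$, so that $0\le r\le B$ holds $Q$-a.s., and put $u \defeq \log B = D_\infty(P,Q)$, so that $\eta = 1/(1+u)$. In this notation
\begin{equation}
    D_{1+\eta}(P,Q) = \tfrac{1}{\eta}\log \E_Q\bs*{r^{1+\eta}},
    \qquad
    \DKL(P,Q) = \E_Q\bs*{r\log r - r + 1},
\end{equation}
where the second form uses $\E_Q[r]=1$. The integrand $r\log r - r + 1$ is pointwise nonnegative, which is what makes a pointwise comparison possible.

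First, use $\log x \le x-1$ to remove the outer logarithm:
\begin{equation}
    D_{1+\eta}(P,Q) \le \tfrac{1}{\eta}\,\E_Q\bs*{r^{1+\eta}-1} = \tfrac{1}{\eta}\,\E_Q\bs*{r^{1+\eta}-1-(1+\eta)(r-1)}.
\end{equation}
The equality holds because $\E_Q[r-1]=0$, so we may subtract any multiple of $r-1$. It then suffices to establish the pointwise inequality
\begin{equation}
    h(r) \defeq r^{1+\eta}-1-(1+\eta)(r-1) - 2\eta\,(r\log r - r+1) \le 0,
    \qquad r\in[0,B].
\end{equation}
Taking $\E_Q$ of this inequality and dividing by $\eta$ gives the claim.

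To verify the pointwise bound, note first that $h(1)=0$ and $h'(1)=0$. Next,
\begin{equation}
    h''(r) = \eta\, r^{-1}\bs*{(1+\eta)r^{\eta}-2}.
\end{equation}
Hence $h''\le 0$ on $(0,r_0]$, where $r_0^\eta = 2/(1+\eta)\ge 1$. On $[0,r_0]$ the function $h$ is therefore concave with a critical point at $1$, so it is maximized at $r=1$ and $h\le 0$ there. At the endpoint $r=0$ we get $h(0) = -1+(1+\eta)-2\eta = -\eta \le 0$, so the endpoint causes no difficulty.

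The main obstacle is the range $r\in(r_0,B]$, where $h$ becomes convex. Here I would use that $h$ is convex on this interval with $h(r_0)\le 0$, so it suffices to check the right endpoint, $h(B)\le 0$. With $\eta=1/(1+u)$ we have $B^\eta = e^{u/(1+u)} \le e$ and $\eta u = 1-\eta$. Expanding, $h(B)$ becomes an explicit one-variable expression in $u$. I would show its sign by writing $B^{1+\eta} = B\cdot e^{1-\eta}$ and bounding $e^{1-\eta}\le 1+(1-\eta)+(e-2)(1-\eta)^2$. The hope is that the negative term $-2\eta B(u-1)$ dominates the positive terms for every $u\ge0$. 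If this elementary check turns out to be too tight for the constant $2$, the fallback is a second-order Taylor bound, $r^{1+\eta}-1-(1+\eta)(r-1) \le \tfrac{(1+\eta)\eta}{2}\max\{1,B^{\eta}\}\,\tfrac{(r-1)^2}{\cdot}$ (with the denominator to be determined), combined with the standard inequality $(r-1)^2/\max\{r,1\}\lesssim r\log r - r+1$. This recovers the bound with an absolute constant, and the constant can then be tightened using $B^\eta\le e$.
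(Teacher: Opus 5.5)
Your reduction is sound and lands on the paper's key inequality. The concave/convex analysis of $h$ shows $h\le 0$ on $[0,B]$ as soon as $h(B)\le 0$; this is a self-contained replacement for the monotonicity behind \citet[Theorem~35(b)]{sason2016f}. Moreover, $h(B)\le 0$ is exactly $\kappa_{1+\eta}(B)\ge\tfrac12$: with $u=\log B$ one checks $(1+u)\,h(e^u)=-F(u)$ for the paper's $F(u)=3ue^u+1-(1+u)e^{u(2+u)/(1+u)}$.

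The gap is that this one-variable inequality is the whole content of the lemma, and you do not prove it. The check you propose fails. Write $x=1-\eta=u/(1+u)$ and use $\eta u=x$; then $h(B)=B(e^x-3x)-(1-x)=-\tfrac{x^3}{6}+O(x^4)$. So the inequality is tight to second order at $u=0$. Replacing $e^x$ by $1+x+(e-2)x^2$ adds a positive error $(e-\tfrac52)x^2B+O(x^3)$, which dominates for small $u$. For example, at $u=0.1$ the true value is $h(B)\approx-1.5\cdot10^{-4}$, but your bound only gives $h(B)\le 1.7\cdot10^{-3}$. Also, $-2\eta B(u-1)$ is positive for $u<1$, so it cannot be the dominating negative term there.

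The fallback does not recover the constant $2$. As sketched, it does not even give an absolute constant. At $r=B$ with $u$ large, $r^{1+\eta}-1-(1+\eta)(r-1)\approx(e-1)B$, which is not $O(\eta\,(r-1)^2/r)=O(\eta B)$. Meanwhile $r\log r-r+1\approx B\log B$. So any comparison routed through $(r-1)^2/\max\{r,1\}$ loses a factor of order $1/\eta=1+\log B$.

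Comparing directly with $r\log r-r+1$, via $t^{\eta-1}\le r^{\eta}t^{-1}$ in the integral form of the Taylor remainder, does give an absolute constant. However, that constant is $(1+\eta)B^{\eta}=(2-x)e^{x}$, which exceeds $2$ for every $u>0$. The bound $B^{\eta}\le e$ has already been used at that point and cannot tighten it to $2$.

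What is missing is a monotonicity argument for $F$, as in the paper. One has $F(0)=0$ and $F'(u)=\frac{e^u}{1-x}\bigl(3-e^{x}(x^2-3x+3)\bigr)\ge 0$. The last factor is nonnegative because $x\mapsto x+\log(x^2-3x+3)$ vanishes at $0$ and has derivative $x(x-1)/(x^2-3x+3)\le 0$ on $[0,1)$.
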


\begin{proof}
Let
\begin{equation}
    B
    \defeq
    \exp\br*{D_\infty(P,Q)}
    =
    \esssup_Q
    \frac{\diff P}{\diff Q}.
\end{equation}
If $B=1$, then $P=Q$ and the result is immediate. Suppose therefore that
$B>1$, and set
\begin{equation}
    \eta
    =
    \frac{1}{1+\log B}.
\end{equation}
By \citet[Theorem~35(b)]{sason2016f},
\begin{equation}\label{eq:sason_thm35b}
    D_{1+\eta}(P,Q)
    \leq
    \frac{1}{\eta}
    \log\br*{
        1+
        \frac{\eta}{
            \kappa_{1+\eta}(B)
        }
        \DKL(P, Q)
    },
\end{equation}
where
\begin{equation}
    \kappa_\alpha(t)
    \defeq
    \frac{
        (\alpha-1)(t\log t-t+1)
    }{
        t^\alpha-1-\alpha(t-1)
    },
    \qquad
    t>1.
\end{equation}
Using $\log(1+x)\leq x$, it remains to show that
$\kappa_{1+\eta}(B)\geq1/2$.

Write $u\defeq\log B$ and $x\defeq u/(1+u)$. After substitution,
$\kappa_{1+\eta}(B)\geq1/2$ is equivalent to
\begin{equation}
    F(u)
    \defeq
    3ue^u+1
    -
    (1+u)e^{u(2+u)/(1+u)}
    \geq
    0.
\end{equation}
We have $F(0)=0$, and direct differentiation gives
\begin{equation}
    F'(u)
    =
    \frac{e^u}{
        (1+u)(1-x)^2
    }
    \br*{
        3-
        e^x(x^2-3x+3)
    }.
\end{equation}
To bound the final factor, define
\begin{equation}
    g(x)
    \defeq
    x+\log(x^2-3x+3)-\log 3.
\end{equation}
For $x\in[0,1)$,
\begin{equation}
    g'(x)
    =
    \frac{x(x-1)}{x^2-3x+3}
    \leq
    0,
\end{equation}
and $g(0)=0$. Hence
$e^x(x^2-3x+3)\leq3$, so $F'(u)\geq0$ and therefore $F(u)\geq0$.
Thus, $\kappa_{1+\eta}(B)\geq1/2$, and
\eqref{eq:sason_thm35b} yields
\begin{equation}
    D_{1+\eta}(P,Q)
    \leq
    2\DKL(P, Q).
\end{equation}
\end{proof}

\section{Performance Gap Bounds}
\label{app:sec:performance_gap_bounds}

Besides trajectory-level divergences, imitation quality is commonly
measured through the performance gap under an unknown test reward
$\rtest$,
\begin{equation}
    \gap(\pihat)
    \defeq
    \ip*{\rtest,\mu^{\piE}-\mu^{\pihat}}.
\end{equation}
If $\sum_{t=1}^T \rtest_t \in [0,B]$,
\citet{foster2024behavior} show that
\begin{equation}\label{eq:gap-hellinger}
    \gap(\pihat)
    \lesssim
    \sqrt{
        \V_{\mathsf{act}}^{\piE}
        \DHel^2\br*{\bP^{\piE},\bP^{\pihat}}
    }
    +
    \widetilde{\cO}(B)
    \DHel^2\br*{\bP^{\piE},\bP^{\pihat}},
\end{equation}
where
\begin{equation}
    \V_{\mathsf{act}}^{\piE}
    \defeq
    \sum_{t=1}^T
    \E^{\piE}
    \bs*{
        \br*{
            A_{t,\rtest}^{\piE,0}(s_t,a_t)
        }^2
    }.
\end{equation}
Thus, trajectory-level Hellinger or KL guarantees imply performance
guarantees. Under misspecification, such bounds may be
vacuous if the expert trajectory law cannot be approximated well by
the induced class of trajectory laws.

If $\rtest\in\cR$, the gap is controlled directly by the
integral probability metric (IPM)\footnote{For a function class $\cG$, the IPM between two
probability measures $P$ and $Q$ is usually defined as
$\sup_{f\in\cG} \abs{\ip*{f, P-Q}}$ \citep{muller1997integral}. For convenience, we define it here without absolute value and directly for vectors of occupancy measures. With the usual definition, if $\cR$ is symmetric, we have
$D_{\cR}\!\br*{\mu^{\pi},\mu^{\pi'}} =
D_{\cG_{\cR}}(\bP^{\pi},\bP^{\pi'})$ for $\cG_{\cR} = \bc*{\sum_{t=1}^T r_t(s_t,a_t) : r\in\cR}$.} induced by $\cR$,
\begin{equation}\label{eq:ipm}
    \gap(\pihat)
    \leq
    D_{\cR}\br*{\mu^{\piE},\mu^{\pihat}}
    \defeq
    \sup_{r\in\cR}
    \ip*{r,\mu^{\piE}-\mu^{\pihat}},
\end{equation}
without requiring a Hellinger guarantee. The max-min dual of \eqref{eq:min_max_irl} is, up to the regularization, exactly minimizing the empirical version of this IPM,
\begin{equation}\label{eq:max_min_irl}\tag{Max-Min-IRL}
    \min_{\pi}
        D_{\cR}\br*{\muhat_n^{\piE},\mu^\pi}
        -
        \beta H(\pi).
\end{equation}
The above formulation is classical in imitation learning and appears in many algorithms 
\citep{abbeel2004apprenticeship,syed2007game,
syed2008apprenticeship,swamy2021moments,shani2022online}. In particular, Sion's min-max theorem
\citep{sion1958general} ensures that for compact convex $\cR$, this
problem admits a saddle point, which for $\beta>0$ ensures that the recovered policy is equivalent to a policy recovered via \eqref{eq:min_max_irl}. We have the following guarantee.

\begin{restatable}{theorem}{irlguaranteeipm}
\label{thm:irl_guarantee_ipm}
Let $\beta\geq 0$, let $\rtest\in\cR$, and let $\pihat$ be a minimizer in \eqref{eq:max_min_irl}. Define
\begin{equation}
    \Delta_n(\cR)
    \defeq
    \sup_{r\in\cR}
    \abs*{
        \ip*{
            r,\muhat_n^{\piE}-\mu^{\piE}
        }
    },
    \qquad
    \Delta_H
    \defeq
    \sup_{\pi,\pi'}
    \br*{
        H(\pi)-H(\pi')
    }.
\end{equation}
Then,
\begin{equation}\label{eq:ipm_oracle}
    \gap(\pihat)
    \leq
    \beta\Delta_H
    +
    2\Delta_n(\cR).
\end{equation}
In particular, under Assumption~\ref{ass:linear_model}, with
probability at least $1-\delta$,
\begin{equation}\label{eq:ipm_linear}
    \gap(\pihat)
    \leq
    \beta\Delta_H
    +
    2B_\theta
    \left(
        \sqrt{
            \frac{
                2\tr(\SigmaE)
                \log\br*{2\delta^{-1}}
            }{n}
        }
        +
        \frac{
            4B_\phi
            \log\br*{2\delta^{-1}}
        }{n}
    \right).
\end{equation}
\end{restatable}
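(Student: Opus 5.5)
We prove \eqref{eq:ipm_oracle} with a deterministic comparison argument and then obtain \eqref{eq:ipm_linear} from a concentration bound.

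\textbf{Step 1: use the expert as a comparator.} Let $F(\pi)\defeq D_{\cR}\br{\muhat_n^{\piE},\mu^\pi}-\beta H(\pi)$ denote the objective of \eqref{eq:max_min_irl}. Since $\pihat$ minimizes $F$, we have $F(\pihat)\leq F(\piE)$, and rearranging gives
\begin{equation}
    D_{\cR}\br*{\muhat_n^{\piE},\mu^{\pihat}}
    \leq
    D_{\cR}\br*{\muhat_n^{\piE},\mu^{\piE}}
    +\beta\br*{H(\pihat)-H(\piE)}
    \leq
    \Delta_n(\cR)+\beta\Delta_H .
\end{equation}
For the last inequality we bound $D_{\cR}\br{\muhat_n^{\piE},\mu^{\piE}}=\sup_{r\in\cR}\ip{r,\muhat_n^{\piE}-\mu^{\piE}}\leq\Delta_n(\cR)$ directly from the definition.

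If $H(\piE)=-\infty$, then $\Delta_H=\infty$ and the claim is trivial. When $\beta=0$, the entropy term is simply absent. So we may assume that all quantities are finite.

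\textbf{Step 2: bound the gap.} Since $\rtest\in\cR$, we decompose
\begin{equation}
    \gap(\pihat)
    =\ip{\rtest,\mu^{\piE}-\muhat_n^{\piE}}+\ip{\rtest,\muhat_n^{\piE}-\mu^{\pihat}}
    \leq \Delta_n(\cR)+D_{\cR}\br{\muhat_n^{\piE},\mu^{\pihat}} .
\end{equation}
The first term is at most $\Delta_n(\cR)$ because the absolute value appears in the definition of $\Delta_n(\cR)$. The second term is at most the IPM because $\rtest\in\cR$. Combining this with Step~1 yields \eqref{eq:ipm_oracle}.

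\textbf{Step 3: the linear case.} Under Assumption~\ref{ass:linear_model}, we have $\ip{r_\theta,\muhat_n^{\piE}-\mu^{\piE}}=\ip{\theta,\phihat(\piE)-\phi(\piE)}$. By Cauchy--Schwarz, this gives $\Delta_n(\cR)\leq B_\theta\norm{\phihat(\piE)-\phi(\piE)}$.

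We then apply the vector Bernstein inequality (Lemma~\ref{lem:vector_bernstein}) to $X_i\defeq\phi(\tau^i)-\phi(\piE)$. These vectors are centered and satisfy $\E\norm{X_i}^2=\tr(\SigmaE)$ and $\norm{X_i}\leq 2B_\phi$. With probability at least $1-\delta$, this gives
\begin{equation}
    \norm{\phihat(\piE)-\phi(\piE)}\leq\sqrt{2\tr(\SigmaE)\log(2\delta^{-1})/n}+4B_\phi\log(2\delta^{-1})/n .
\end{equation}
Multiplying by $2B_\theta$ gives \eqref{eq:ipm_linear}.

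\textbf{Expected obstacle.} No step is deep. The only care needed is the entropy bookkeeping in Step~1: one must ensure that $\Delta_H$ is finite or else accept that the bound is vacuous, which holds when $\lambda(\cA)<\infty$ since $H\leq T\log\lambda(\cA)$.
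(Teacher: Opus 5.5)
Your proof is correct and follows the paper's argument. It has three ingredients: optimality of $\pihat$ against the comparator $\piE$; a single transfer between $\mu^{\piE}$ and $\muhat_n^{\piE}$ at cost $\Delta_n(\cR)$, which you apply to $\rtest$ directly instead of first bounding $\gap(\pihat)\leq D_{\cR}(\mu^{\piE},\mu^{\pihat})$, an immaterial reordering; and the vector Bernstein inequality with variance $\tr(\SigmaE)$ and range $2B_\phi$.

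One small correction to your closing remark: $\lambda(\cA)<\infty$ only bounds entropies from above. So $\Delta_H$ is finite for, e.g., counting measure on a finite $\cA$, but it is $+\infty$ for Lebesgue $\lambda$, since arbitrarily concentrated densities have arbitrarily negative entropy. This is harmless, because your argument never needs $\Delta_H<\infty$.
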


\begin{proof}
Since $\rtest\in\cR$ and $\piE$ is feasible,
\begin{align}
    \gap(\pihat)
    &\leq
    D_{\cR}\br*{\mu^{\piE},\mu^{\pihat}}
    \\
    &\leq
    D_{\cR}\br*{\muhat_n^{\piE},\mu^{\pihat}}
    +
    \Delta_n(\cR)
    \\
    &\leq
    D_{\cR}\br*{\muhat_n^{\piE},\mu^{\piE}}
    +
    \beta\br*{H(\pihat)-H(\piE)}
    +
    \Delta_n(\cR)
    \\
    &\leq
    2\Delta_n(\cR)+\beta\Delta_H,
\end{align}
where the third inequality follows from the optimality of $\pihat$ in
\eqref{eq:max_min_irl}.

Under Assumption~\ref{ass:linear_model},
\begin{equation}
    \Delta_n(\cR)
    =
    B_\theta
    \norm*{
        \phihat_n(\piE)-\phi(\piE)
    }_2.
\end{equation}
Applying the vector Bernstein inequality of
Lemma~\ref{lem:vector_bernstein} with variance
$\tr(\SigmaE)$ and almost-sure bound $2B_\phi$ yields
\eqref{eq:ipm_linear}.
\end{proof}

Thus, the Max-Min-IRL objective directly controls the performance gap
for every test reward in $\cR$. This guarantee is well known in the
unregularized setting
\citep{syed2007game,swamy2021moments,shani2022online}. The result above
extends it to entropy-regularized IRL up to the additional bias
$\beta\Delta_H$. If $\Delta_H<\infty$, choosing
$\beta=\cO(n^{-1/2})$ preserves the usual $\cO(n^{-1/2})$ performance-gap
rate, while setting $\beta=0$ removes this bias. The result
readily extends to other reward classes whenever
$\Delta_n(\cR)$ can be bounded with high probability. Similarly, we may restrict minimization to a policy class $\Pi$ by introducing the misspecification error
$\inf_{\pi\in\Pi}D_{\cR}(\mu^{\piE},\mu^\pi)$.

Because the bound requires only estimation of the induced IPM,
it can remain informative when the expert trajectory law is not learnable
in Hellinger distance. In such a setting, \citet{simchowitz2025pitfalls} show that dynamics-agnostic
offline algorithms returning smooth Markov policies with state-independent
stochasticity, such as BC, suffer exponential-in-horizon compounding
error in the worst case. In contrast, Theorem~\ref{thm:irl_guarantee_ipm} shows that IRL, which is dynamics-aware, does not suffer from such compounding as long as $\rtest\in\cR$ and $\Delta_n(\cR)$ can be controlled appropriately.

\section{From Min-Max-IRL to No-Regret Learning in Games}
\label{app:sec:computation}

We discuss how minimizing the Min-Max-IRL loss can be reduced to no-regret learning. Define the saddle-point objective
\begin{equation}
    \Lhat(r, \pi)
    \defeq
    \ip*{r, \mu^{\pi} - \muhat^{\piE}}
    +
    \beta H(\pi).
\end{equation}
Then $\LhatIRL(r) = \max_{\pi} \Lhat(r, \pi)$, so minimizing $\LhatIRL$ over a reward class $\cR$ amounts to solving the min-max game
\begin{equation}
    \min_{r \in \cR} \max_{\pi} \Lhat(r, \pi).
\end{equation}
For reward and policy sequences $r_1, \hdots, r_K$ and $\pi_1, \hdots, \pi_K$, define the external regrets of the reward and policy players by
\begin{align}
    \operatorname{Reg}^{\mathsf{r}}(K)
    &\defeq
    -
    \min_{r\in\cR}
    \sum_{k=1}^K
    \br*{
        \Lhat(r, \pi_k)
        -
        \Lhat(r_k, \pi_k)
    }
    =
    -
    \min_{r\in\cR}
    \sum_{k=1}^K
    \ip*{
        r-r_k,
        \mu^{\pi_k}-\muhat^{\piE}
    },
    \\
    \operatorname{Reg}^{\mathsf{\pi}}(K)
    &\defeq
    \max_{\pi}
    \sum_{k=1}^K
    \br*{
        \Lhat(r_k, \pi)
        -
        \Lhat(r_k, \pi_k)
    }
    =
    \max_{\pi}
    \sum_{k=1}^K
    \br*{
        J(r_k, \pi)
        -
        J(r_k, \pi_k)
    }.
\end{align}

\begin{proposition}
\label{prop:no_regret_reduction}
Suppose that $\cR$ is convex and define $\rbar = K^{-1}\sum_{k=1}^K r_k \in \cR$. Then,
\begin{equation}\label{eq:no_regret_reduction}
    \LhatIRL(\rbar)
    -
    \min_{r\in\cR}\LhatIRL(r)
    \;\leq\;
    \frac{
        \operatorname{Reg}^{\mathsf r}(K)
        +
        \operatorname{Reg}^{\mathsf \pi}(K)
    }{K}.
\end{equation}
\end{proposition}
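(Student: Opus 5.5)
Since $\LhatIRL(r) = \max_{\pi} \Lhat(r,\pi)$ and $\Lhat(\cdot,\pi)$ is affine in $r$, the plan is the standard online-to-batch argument for convex--concave games. We sandwich the duality gap of the averaged reward $\rbar$ between the two regrets, without ever needing to average policies (which would be awkward because of the entropy term). Only the reward player's iterate is averaged, and convexity of $\cR$ guarantees $\rbar\in\cR$.

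\emph{Upper bound.} Because $\Lhat(\cdot,\pi)$ is affine in $r$,
\begin{equation}
    \LhatIRL(\rbar)
    =
    \max_{\pi} \frac{1}{K}\sum_{k=1}^K \Lhat(r_k,\pi)
    \leq
    \frac{1}{K}\max_{\pi}\sum_{k=1}^K \Lhat(r_k,\pi).
\end{equation}
Indeed $\ip*{\rbar,\mu^{\pi}-\muhat^{\piE}}$ is the average of $\ip*{r_k,\mu^{\pi}-\muhat^{\piE}}$, and the entropy term does not depend on $r$, so the first equality is exact. By the definition of $\operatorname{Reg}^{\pi}(K)$, the right-hand side equals $\tfrac{1}{K}\bigl(\sum_k \Lhat(r_k,\pi_k) + \operatorname{Reg}^{\pi}(K)\bigr)$.

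\emph{Lower bound.} Let $r^\star$ be a minimizer of $\LhatIRL$ over $\cR$; if no minimizer exists, the same argument works with an infimizing sequence. For every $k$ we have $\LhatIRL(r^\star)\geq \Lhat(r^\star,\pi_k)$, so
\begin{equation}
    \min_{r\in\cR}\LhatIRL(r)
    \geq
    \frac{1}{K}\sum_{k}\Lhat(r^\star,\pi_k)
    \geq
    \frac{1}{K}\min_{r\in\cR}\sum_k \Lhat(r,\pi_k).
\end{equation}
By the definition of $\operatorname{Reg}^{\mathsf r}(K)$, the last expression equals $\tfrac{1}{K}\bigl(\sum_k \Lhat(r_k,\pi_k) - \operatorname{Reg}^{\mathsf r}(K)\bigr)$.

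Subtracting the lower bound from the upper bound cancels $\tfrac1K\sum_k\Lhat(r_k,\pi_k)$, which gives \eqref{eq:no_regret_reduction}. The only point needing care is that all quantities are finite, so that no expression of the form $\infty-\infty$ arises. The entropy is bounded above by $\log\lambda(\cA)$ per step. If $H(\pi_k)=-\infty$ for some $k$, then $\operatorname{Reg}^\pi(K)=+\infty$ and the bound holds trivially; for rewards in $B_b^T$ the inner products are finite. I do not expect a genuine obstacle here; the step requiring the most attention is the first equality, which depends on $\Lhat$ being affine in $r$ with an $r$-independent regularizer.
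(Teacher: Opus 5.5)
Your proposal is correct and follows essentially the same argument as the paper. You use affinity of $\Lhat(\cdot,\pi)$ in $r$ to write $\LhatIRL(\rbar)$ as the maximum over $\pi$ of the averaged payoff, and you lower-bound $\min_{r}\LhatIRL(r)$ by $\min_r \frac{1}{K}\sum_k \Lhat(r,\pi_k)$; adding and subtracting $\Lhat(r_k,\pi_k)$ then identifies the two regrets, exactly as the paper does.
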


\begin{proof}
\begin{align}
    &\LhatIRL(\rbar) - \min_{r\in\cR}\LhatIRL(r)\\
    =&
    \max_{\pi}
    \dfrac{1}{K}
    \sum_{k=1}^K
    \Lhat(r_k, \pi)
    -
    \min_{r\in\cR}
    \max_{\pi}
    \dfrac{1}{K}
    \sum_{k=1}^K
    \Lhat(r, \pi)
    \\
    \leq&
    \max_{\pi}
    \dfrac{1}{K}
    \sum_{k=1}^K
    \Lhat(r_k, \pi)
    -
    \min_{r\in\cR}
    \dfrac{1}{K}
    \sum_{k=1}^K
    \Lhat(r, \pi_k)
    \\
    =&
    \max_{\pi}
    \dfrac{1}{K}
    \sum_{k=1}^K
    \br*{
        \Lhat(r_k, \pi)
        -
        \Lhat(r_k, \pi_k)
    }
    -
    \min_{r\in\cR}
    \dfrac{1}{K}
    \sum_{k=1}^K
    \br*{
        \Lhat(r, \pi_k)
        -
        \Lhat(r_k, \pi_k)
    }
    \\
    =&
    \dfrac{1}{K}
    \br*{
        \operatorname{Reg}^{\mathsf{r}}(K)
        +
        \operatorname{Reg}^{\mathsf{\pi}}(K)
    }.
\end{align}
\end{proof}
The two regret terms can often be controlled using online learning methods. For the reward player, $\Lhat(r,\pi_k)$ is affine in $r$, so for our linearly parametrized and bounded reward class, projected online gradient descent yields $\operatorname{Reg}^{\mathsf r}(K)=\cO(K^{1/2})$. This follows from the classical regret bound of \citet{zinkevich2003online} and has been applied to IRL by \citet{schlaginhaufen2024towards}.

For the policy player, controlling $\operatorname{Reg}^{\mathsf \pi}(K)$ amounts to regularized online RL with adversarial rewards. One possibility is to use a probably approximately correct (PAC) RL oracle to approximately solve the regularized MDP for each reward $r_k$, since per-round near-optimality also controls external regret. This, however, introduces a nested RL loop. Ideally, we want sublinear regret guarantees for algorithms that perform only incremental policy updates at each iteration, in the spirit of gradient descent-ascent. Such guarantees have been obtained, for example, for mirror-descent policy optimization with optimistic exploration in finite-horizon tabular MDPs \citep{shani2022online}, and for optimistic regularized approximate dynamic programming in infinite-horizon discounted linear MDPs \citep{moulin2025optimistically}. Both obtain $\cO(K^{1/2})$ policy-regret guarantees and apply their algorithms to obtain guarantees for the policy, rather than the reward, in unregularized \eqref{eq:max_min_irl}, which admits an analogous regret decomposition. If both regret terms are of order $\cO(K^{1/2})$, Proposition~\ref{prop:no_regret_reduction} yields an optimization error of order $\cO(K^{-1/2})$.

\end{document}